
\documentclass{article}

\usepackage{lineno,hyperref}

\modulolinenumbers[5]



\usepackage[utf8]{inputenc}
\usepackage[T1]{fontenc}
\usepackage{appendix}





\usepackage{amsmath}
\usepackage{amssymb}
\usepackage{mathrsfs} 
\usepackage{bbm} 
\usepackage{color}
\usepackage[svgnames]{xcolor}
\usepackage{graphics}
\usepackage{graphicx}
\usepackage{tikz} 
\usetikzlibrary{shapes.misc,automata,positioning,arrows}
\usepgflibrary{shapes.arrows} 
\usepackage[ruled,linesnumbered,vlined,scleft]{algorithm2e}
\usepackage{url}
\usepackage{comment}
\usepackage{theorem}
\usepackage{stmaryrd}

\usepackage{thmtools}
\usepackage{thm-restate}
\usepackage{cleveref}




\newcommand{\ie}{\mbox{i.e.}}
\newcommand{\eg}{\mbox{e.g.}}

\newcommand{\wrt}{\mbox{w.r.t.}}
\newcommand{\st}{\mbox{s.t.}}

\newcommand{\etal}{\mbox{et al}}
\newcommand{\myskip}{\smallskip} 



\newcommand{\defined}{\mathrel{\mathop:}=}
\newcommand{\grammar}{\mathrel{\mathop:}\defined}


\newtheorem{definition}{Definition}[section]
\newtheorem{observation}{Observation}
{\theorembodyfont{\upshape}
\newtheorem{example}{Example}[section]}
\newenvironment{proof}{\noindent\textbf{Proof:}\null\hfill\null\newline}{\hfill\qed}

\declaretheorem[name=Lemma,numberwithin=section]{lemma}
\declaretheorem[name=Proposition,numberwithin=section]{proposition}


\newcommand{\ps}[1]{\mathscr{P}(#1)} 


\newcommand{\Prp}{\ensuremath{\mathcal{P}}}
\newcommand{\Lang}{\ensuremath{\mathcal{L}}} 

\newcommand{\KB}{\ensuremath{\mathcal{KB}}} 

\newcommand{\U}{\ensuremath{\mathcal{U}}} 


\newcommand{\limp}{\rightarrow} 
\newcommand{\liff}{\leftrightarrow} 

\newcommand{\sat}{\Vdash}

\newcommand{\entails}{\models}



\newcommand{\twiddle}{\mathrel|\joinrel\sim}
\newcommand{\ntwiddle}{\not\twiddle}
\newcommand{\typ}{\bullet}
\newcommand{\typup}[1]{#1^{\typ}}
\newcommand{\Typ}{\mathbf{T}}

\newcommand{\RM}{\mathscr{R}} 
\newcommand{\Mod}[1]{{\ensuremath{\text{\it Mod}(#1)}}} 
\newcommand{\Cn}[2]{{\ensuremath{\text{\it Cn}_{#1}(#2)}}} 
\newcommand{\pref}{\prec} 

\newcommand{\states}[2]{\llbracket#1\rrbracket^{#2}} 
\newcommand{\rk}{\ensuremath{\text{\it rk}}} 

\newcommand{\mpref}[1]{\trianglelefteq_{#1}}
\newcommand{\mprefstrict}[1]{\triangleleft_{#1}}

\newcommand{\NMentails}{\mathrel|\joinrel\approx}
\newcommand{\nNMentails}{\not\NMentails}



\newcommand{\p}{\ensuremath{\mathsf{p}}}
\newcommand{\bird}{\ensuremath{\mathsf{b}}}
\newcommand{\fly}{\ensuremath{\mathsf{f}}}
\newcommand{\ost}{\ensuremath{\mathsf{o}}}

\newcommand{\rob}{\ensuremath{\mathsf{r}}}



\newcommand{\LM}{\mathrm{LM}}
\newcommand{\PT}{\mathrm{PT}}


\newcommand{\imp}{\rightarrow}

\newcommand{\qed}{\ensuremath{\boxempty}}

\usepackage{authblk}





\title{On Rational Entailment for Propositional Typicality Logic\footnote{This is a preprint version of a paper accepted for publication in  \emph{Artificial Intelligence}, DOI:\url{https://doi.org/10.1016/j.artint.2019.103178}.}}


\author[1]{Richard Booth}
\author[2]{Giovanni Casini}
\author[3]{Thomas Meyer}
\author[4]{Ivan Varzinczak}

\affil[1]{\small Cardiff University, United Kingdom. Email: \emph{{BoothR2@cardiff.ac.uk}}}
\affil[2]{CSC, University of Luxembourg, Luxembourg. Email: \emph{{giovanni.casini@uni.lu}}}
\affil[3]{University of Cape Town and CAIR, South Africa. Email: \emph{{tmeyer@cs.uct.ac.za}}}
\affil[4]{CRIL, Univ.\ Artois \& CNRS, France. Email: \emph{{varzinczak@cril.fr}}}

\date{}

\begin{document}

\maketitle

\begin{abstract}
Propositional Typicality Logic (PTL) is a recently proposed logic,  obtained by enriching classical propositional logic with a typicality operator capturing the most typical (alias normal or conventional) situations in which a given sentence holds. The semantics of PTL is in terms of ranked models as studied in the well-known KLM approach to preferential reasoning and therefore KLM-style rational consequence relations can be embedded in PTL. In spite of the non-monotonic features introduced by the semantics adopted for the typicality operator, the obvious Tarskian definition of entailment for PTL remains monotonic and is therefore not appropriate in many contexts. Our first important result is an impossibility theorem showing that a set of proposed postulates that at first all seem appropriate for a notion of entailment with regard to typicality cannot be satisfied simultaneously. Closer inspection reveals that this result is best interpreted as an argument for advocating the development of more than one type of PTL entailment. In the spirit of this interpretation, we investigate three different (semantic) versions of entailment for PTL, each one based on the definition of rational closure as introduced by Lehmann and Magidor for KLM-style conditionals, and constructed using different notions of minimality.
\end{abstract}




\section{Introduction}\label{Introduction}

Propositional Typicality Logic (PTL)~\cite{BoothEtAl2012,BoothEtAl2013} is a recently proposed logic allowing for the representation of and reasoning with an explicit notion of {\em typicality}. It is obtained by enriching classical propositional logic with a {\em typicality operator}~$\typ$, the intuition of which is to refer to those most typical (or normal or conventional) situations in which a given sentence holds. PTL is characterised using a preferential semantics similar to that originally proposed by Shoham~\cite{Shoham1988} and extensively developed by Kraus~\etal.~\cite{KrausEtAl1990} and Lehmann and Magidor~\cite{LehmannMagidor1992} in the propositional case, with close connections to the formalisms developed by Pearl and Goldszmidt \cite{Pearl1990,PearlEtAl1990}, 
and by others~\cite{Boutilier1994,BritzEtAl2011b,BritzEtAl2011c,GiordanoEtAl2013,QuantzRoyer1992,CasiniStraccia2010,CasiniStraccia2011,CasiniStraccia2013} in more expressive languages.

In spite of the non-monotonic features introduced by the adoption of a preferential semantics for~$\typ$, the obvious definition of entailment for~PTL, \ie, the one based on a Tarskian notion of logical consequence, remains monotonic. Of course, such a notion of entailment is inappropriate in non-monotonic contexts, in particular when reasoning about typicality, as is already clear from an enriched version of the classical Tweety example: If birds typically fly, and penguins are birds (and that is all we know), we would expect to be able to conclude that typical penguins are typical birds, and therefore that typical penguins  fly. Learning that penguins typically do not fly should lead us to conclude that penguins are not typical birds, and  to retract the conclusions about typical penguins being typical birds, and about typical penguins flying.

In this paper, we investigate three semantic versions of entailment for~PTL, constructed using three different forms of minimality. All these are based on the notion of rational closure as defined by Lehmann and Magidor~\cite{LehmannMagidor1992} for KLM-style conditionals in a propositional setting. We show that they can be viewed as distinct extensions of rational closure, equivalent  with respect to the conditional language originally proposed by Kraus~\etal., but different in the~PTL framework.

We shall study the aforementioned forms of entailment in an abstract formal setting, obtained by proposing a set of postulates that, at first glance, seem appropriate for any notion of entailment with regard to typicality. Our first important result is a negative one, though. It is an impossibility result proving that the set of postulates cannot all be satisfied simultaneously. A more detailed analysis of the result shows that, instead of being viewed as negative, this result should rather be interpreted as an indication that PTL allows for different types of entailment, corresponding to different subsets of the full set of postulates we provide. In line with this argument, we define three types of entailment for PTL corresponding to distinct subsets of the postulates, referred to as \emph{LM-entailment},  \emph{PT-entailment}, and \emph{PT'-entailment}, a modification of the latter.  Our argument for more than one type of entailment for the same logic is in line with the proposal put forward by Lehmann in the context of entailment for conditional knowledge bases, where he proposes both \emph{prototypical reasoning} and \emph{presumptive reasoning} as acceptable forms of entailment~\cite{Lehmann1995}. We elaborate on this point in Section \ref{Weakening}, but the gist of the argument is the acknowledgement of the existence of more than one form of entailment for the same representational formalism. 
\myskip

The remainder of the present paper is structured as follows. Section~\ref{Preliminaries} provides the background and notation for the rest of the work. In Section~\ref{entailsection} we discuss the complexities surrounding a notion of entailment for PTL. In Section~\ref{TowardsEntailment} we put forward our postulates and show the impossibility result. In Section~\ref{Minimum} we define LM-entailment while Section~\ref{PT-entailment} is devoted to the definition of PT-entailment, and Section~\ref{PT'-entailment} to the definition of PT'-entailment. Section~\ref{Weakening} addresses the implications of the impossibility result, making the case for three forms of PTL entailment. Section \ref{RelatedWork} discusses related work, while Section~\ref{Conclusion} concludes and discusses  future work.

\section{Logical preliminaries}\label{Preliminaries}

Let~$\Prp$ be a finite set of propositional {\em atoms} with at least two elements.\footnote{%
This (reasonable) assumption is needed for technical reasons.
}
We use $p,q,\ldots$ as meta-variables for atoms. Propositional sentences (and, in later sections, sentences of the richer language we shall introduce in Section~\ref{BackgroundPTL} below) are denoted by $\alpha,\beta,\ldots$, and are recursively defined in the usual way: $\alpha\ \grammar p \mid \lnot\alpha \mid \alpha\land\alpha \mid \top \mid \bot$. All the other Boolean connectives ($\lor$, $\limp$, $\liff$, \ldots) are defined in terms of~$\lnot$ and~$\land$ in the standard way. With~$\Lang$ we denote the set of all propositional sentences.

We denote by $\U$ the set of all propositional {\em valuations} $v : \Prp\longrightarrow\{0,1\}$, \ie, $\U\defined\{0,1\}^{\Prp}$. Whenever it eases the presentation, we shall represent valuations as sets of literals (\ie, atoms or negated atoms), with each literal indicating the truth-value of the respective atom. Thus, for the logic generated from $\Prp=\{p, q\}$, the valuation in which~$p$ is true and~$q$ is false will be represented as $\{p, \neg q\}$.
Satisfaction of a sentence $\alpha\in\Lang$ by $v\in\U$ is defined in the usual truth-functional way and is denoted by $v\sat\alpha$.

\subsection{KLM-style rational conditionals}\label{KLM}

In the conditional logic investigated by Kraus~\etal.~\cite{KrausEtAl1990}, often referred to as the~{\em KLM approach}, one is interested in (defeasible) conditionals of the form $\alpha\twiddle\beta$, read as ``typically, if~$\alpha$, then $\beta$'' (or, depending on the example at hand, as ``$\alpha$s are typically $\beta$s'' and variants thereof). For instance, if $\Prp=\{\bird,\fly,\p\}$, where~$\bird$, $\fly$ and $\p$ stand for, respectively, ``being a bird'', ``being able to fly'', and ``being a penguin'', the following are examples of defeasible conditionals: $\bird\twiddle\fly$ (birds typically fly), $\p\land\bird\twiddle\lnot\fly$ (penguins that are birds typically do not fly).

Kraus et al. put forward the following list of properties that the conditional~$\twiddle$ ought to satisfy in order to be considered as appropriate in a non-monotonic setting (these properties have been discussed at length in the non-monotonic reasoning community and we shall not do so here):
\[
\begin{array}{llllll}
(\text{Ref}) & \alpha \twiddle \alpha & (\text{\small LLE}) & {\displaystyle \frac{\entails \alpha\liff \beta,\ \alpha \twiddle \gamma}{\beta \twiddle \gamma}} & (\text{And}) & {\displaystyle \frac{\alpha \twiddle \beta, \ \alpha \twiddle \gamma}{\alpha \twiddle \beta\land\gamma}} \\[0.5cm]
(\text{Or}) & {\displaystyle \frac{\alpha \twiddle \gamma, \ \beta \twiddle \gamma}{\alpha \lor \beta \twiddle \gamma}} & (\text{\small RW}) & {\displaystyle \frac{\alpha \twiddle \beta, \ \entails \beta \limp \gamma}{\alpha \twiddle \gamma}} & (\text{\small CM}) & {\displaystyle \frac{\alpha \twiddle \beta, \ \alpha \twiddle \gamma}{\alpha\land\beta \twiddle \gamma}} 
\end{array}
\]

A conditional satisfying such properties is called a \emph{preferential conditional}. We can require~$\twiddle$ to satisfy other properties as well, one of which is rational monotonicity:
\[
\text{({\small RM})}\quad\frac{\alpha\twiddle \gamma, \ \alpha\ntwiddle\lnot\beta}{\alpha\wedge \beta\twiddle \gamma}
\]
A preferential conditional also satisfying~(RM) is called a \emph{rational conditional}.

The semantics of~KLM-style rational conditionals is given by structures called {\em ranked interpretations}~\cite{LehmannMagidor1992}: 

\begin{definition}[Ranked interpretation]\label{Def:RankedInterpretation}
A \textbf{ranked interpretation}~$\RM$ is a function from $\U$ to $\mathbb{N}\cup\{\infty\}$ satisfying the following \emph{convexity} property: for every $i\in \mathbb{N}$, if $\RM(v)=i$, then, for every $j$ such that $0\leq j<i$, there is a $v'\in\U$ for which $\RM(v')=j$.
\end{definition}
Observe that $\RM$ generates a modular order $\pref_{\RM}$ on $\U$ as follows: $u\pref_{\RM}v$ if and only if $\RM(u)<\RM(v)$ (where $i<\infty$ for every $i\in\mathbb{N}$). If there is no ambiguity, we will omit the subscript and refer to the modular order as $\prec$.\footnote{Recall that, given a set $X$, $\pref\ \subseteq X\times X$ is modular if and only if there is total order $\leq$ on a set $\Omega$ and a ranking function $\rk:X\mapsto\Omega$ \st\ for every $x,y\in X$, $x\pref y$ if and only if $\rk(x)<\rk(y)$.}

In a ranked interpretation~$\RM$ the intuition is that valuations lower down in the ordering are deemed more normal (or typical) than those higher up, with those with an infinite rank (a rank of $\infty$) being regarded as so atypical as to be impossible. 

The \emph{possible} valuations in $\RM$ are defined as follows: $\U^{\RM}\defined\{u\in\U\mid \RM(u)<\infty\}$. 
Given $\alpha\in\Lang$, we let $\states{\alpha}{\RM}\defined\{v\in\U^{\RM}\mid v\sat\alpha\}$.
Note that it may be possible that $\RM(u)=\infty$ for every $u\in\U$, and therefore that $\U^{\RM}=\emptyset$.

Given $\alpha,\beta\in\Lang$, we say~$\RM$ satisfies (is a ranked model of) the conditional~$\alpha\twiddle\beta$ (denoted $\RM\sat\alpha\twiddle\beta$) if all the $\pref$-minimal $\alpha$-valuations also satisfy $\beta$, \ie, if 
$\min_{\pref}\states{\alpha}{\RM}\subseteq\states{\beta}{\RM}$. We say~$\RM$ is a ranked model of a set of conditionals~$\mathcal{C}$ if $\RM\sat\alpha\twiddle\beta$ for every $\alpha\twiddle\beta\in\mathcal{C}$, and that a set of conditionals~$\mathcal{C}$ is {\em satisfiable} only if it has a ranked model $\RM$ for which $\U^{\RM}\neq\emptyset$. Observe that if~$\mathcal{C}$ is unsatisfiable, it has as its only ranked model the ranked interpretation $\RM$ for which $\U^{\RM}=\emptyset$.  

Sometimes it is convenient to represent a ranked interpretation $\RM$ as a partition $(L_0, \ldots, L_{n-1},L_{\infty})$ of $\U$ where, for 
$i\in\mathbb{N}\cup\{\infty\}$, $L_i = \{u\in\U\mid \RM(u)=i\}$ and where $n$ is some $i\in\mathbb{N}$ for
which $L_i=\emptyset$. That is, for each $i\in\{0,\ldots,n-1,\infty\}$, $L_i$ is the set of all valuations of
rank $i$. We refer to such a ranked interpretation as an $n$-\emph{rank} interpretation. 

Observe that the partition above has a finite number of cells, but includes the possibility for some of the~$L_i$s to be empty. This is necessary for three reasons. First, the cell~$L_{\infty}$ (the set of all impossible valuations) may be empty. Second, it may be the case that~$L_{\infty}=\U$. That is, it may be that all valuations are impossible. Third, as we shall see below, this representation will often be used to \emph{compare} ranked interpretations. In cases where such ranked interpretations do not have the same number of non-empty cells, this representation allows us to represent them as having the same (finite) number of cells,
say $(L_0, \ldots, L_{n-1},L_{\infty})$ and $(M_0, \ldots, M_{n-1},M_{\infty})$, where $n$ is the smallest integer such that $L_i=M_i=\emptyset$. 

Figure~\ref{Figure:RankedInterpretation} depicts an example of a ranked interpretation for $\Prp=\{\bird,\fly,\p \}$ satisfying both $\bird\twiddle\fly$ and $\p\land\bird\twiddle\lnot\fly$. (In our graphical representations of the ranked interpretations we frequently omit the rank~$\infty$.)

\begin{figure}[ht]
\begin{center}
\begin{tabular}{|c|c|} 
\hline
 $2$ & 
 $\{\bird,\fly,\p\}$
 \\ \hline
 $1$ & 
 $\{\bird,\lnot\fly,\lnot\p\}$, \quad
 $\{\bird,\lnot\fly,\p\}$
 \\ \hline
 $0$ & 
 $\{\lnot\bird,\lnot\fly,\lnot\p\}$, \quad
 $\{\lnot\bird,\fly,\lnot\p\}$, \quad
 $\{\bird,\fly,\lnot\p\}$ 
 \\ \hline
\end{tabular}
\end{center}
\caption{A ranked interpretation for~$\Prp=\{\bird,\fly,\p\}$.}
\label{Figure:RankedInterpretation}
\end{figure}

For a better understanding of the reasons behind the aforementioned properties and the semantic constructions, the reader is referred to the work of Kraus~\etal.~\cite{KrausEtAl1990,LehmannMagidor1992}.

\subsection{Rational closure}\label{RationalClosure}

Given a set of conditionals~$\mathcal{C}$, reasoning in the~KLM framework amounts to the derivation of new conditionals from~$\mathcal{C}$. Towards this end, Lehmann and Magidor~\cite{LehmannMagidor1992} proposed what they refer to as {\em rational closure}. Here we focus on the semantic version of rational closure they present.

Their idea was to define a preference relation~$\mpref{\LM}$ over the set of possible ranked interpretations and then to base entailment on choosing only the most preferred, \ie, minimal \wrt~$\mpref{\LM}$, ranked models of~$\mathcal{C}$. 

The relation $\mpref{\LM}$ can be described as follows.

\begin{definition}[LM-preference]\label{Def:rel_LM}

Let $\RM_1 = (L_0, \ldots, L_{n-1},L_{\infty})$ and $\RM_2 = (M_0, \ldots, M_{n-1},M_{\infty})$ be any pair of ranked interpretations. 
Then, 
\vspace{-0.1cm}
\[
\begin{array}{ccll}
\RM_{1} \mpref{\LM} \RM_{2} & \textrm{ if } & \textit{either} &
L_{i} = M_{i}\ \textrm{for all } i\in\{0,\ldots,n-1,\infty\}, \\
& & \textit{or} &
L_{j} \supseteq M_{j} \textrm{ for the smallest $j\geq 0$ s.t.\ } L_{j} \neq M_{j}. 
\end{array}
\]
$\RM_1\mprefstrict{\LM} \RM_2$ if and only if $\RM_1\mpref{\LM} \RM_2$ and not $\RM_2\protect{\mpref{\LM}} \RM_1$.
\end{definition}

$\mpref{\LM}$ forms a partial order over ranked interpretations, 
and, for every satisfiable set of conditionals~$\mathcal{C}$, there exists a unique $\mpref{\LM}$-minimum element $\RM^{\mathrm{rc}}(\mathcal{C})$ among all the ranked models of~$\mathcal{C}$   (see Proposition~\ref{Prop:uniqueLM} in Appendix~\ref{App:Proof_Prel}). We will refer to this element as the {\em LM-minimum}.

This is not exactly the semantic representation defined by Lehmann and Magidor, but this representation can easily be derived from other work on rational closure, such as that of Booth and Paris~\cite{BoothParis1998} 
(see Appendix \ref{App:Proof_Prel}).

\begin{restatable}{proposition}{LMisRC}\label{Prop:LMisRC}
Given a set of conditionals~$\mathcal{C}$ and a conditional $\alpha\twiddle\beta$. $\alpha\twiddle\beta$ is in the rational closure of~$\mathcal{C}$ iff:

\begin{enumerate}
    \item $\mathcal{C}$ is unsatisfiable; or
    \item $\RM^{\mathrm{rc}}(\mathcal{C})\sat \alpha\twiddle\beta$.
\end{enumerate}

\end{restatable}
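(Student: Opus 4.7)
My plan is to reduce the biconditional to an existing semantic characterization of rational closure --- in particular the construction of Booth and Paris referenced in the paper --- and to show that the $\mpref{\LM}$-minimum $\RM^{\mathrm{rc}}(\mathcal{C})$ is exactly the canonical ranked model appearing there. The unsatisfiable case is immediate: by convention the rational closure of an unsatisfiable base contains every conditional, so the disjunction on the right trivially holds and both sides of the biconditional collapse to ``true''.

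For the satisfiable case, I would argue that the lexicographic-style ordering $\mpref{\LM}$ on ranked interpretations coincides, when restricted to ranked models of $\mathcal{C}$, with a pointwise minimisation of ranks. Concretely: being $\mpref{\LM}$-minimum forces $L_0$ to be as large as possible among ranked models of $\mathcal{C}$; inductively, once $L_0,\ldots,L_{i-1}$ are fixed, $L_i$ must be as large as possible among ranked models agreeing on the earlier cells. This pointwise-minimal ranking is precisely the canonical rank assignment underlying Booth and Paris's semantic rendering of rational closure, so $\RM^{\mathrm{rc}}(\mathcal{C})$ can be identified with their canonical model, and both directions of the equivalence in the proposition then follow from their result.

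The main obstacle is justifying the ``greatest lower bound'' step in the inductive argument above: from two ranked models of $\mathcal{C}$, one must be able to produce a ranked model whose level-$i$ cells are (modulo the convexity constraint of Definition~\ref{Def:RankedInterpretation}) the unions of the respective level-$i$ cells. This closure property of the class of ranked models of $\mathcal{C}$ is not automatic, and is where the rank-pushing argument underlying Lehmann and Magidor's original construction, together with the KLM postulates satisfied by $\twiddle$, is really used. Once this closure property is in hand, uniqueness of the $\mpref{\LM}$-minimum (Proposition~\ref{Prop:uniqueLM}) and its identification with the Booth--Paris canonical model together complete the proof.
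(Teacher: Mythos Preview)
Your proposal is correct and matches the paper's approach closely. The paper formalises your ``greatest lower bound'' step via an explicit \emph{ranked union} construction (taking the pointwise minimum of ranks over a family of ranked models of~$\mathcal{C}$), proves directly from the semantic clause for $\twiddle$ that this ranked union is again a model of~$\mathcal{C}$ and is the unique $\mpref{\LM}$-minimum, and then invokes Booth and Paris's Theorem~2 to identify it with the canonical model for rational closure---exactly the strategy you sketch.
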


The idea is that those ranked interpretations should be preferred in which as many valuations as possible are judged to be as plausible as the background knowledge~$\mathcal{C}$ allows. Observe also that one of the consequences of this ordering is that, all other things being equal, a ranked interpretation in which a valuation is deemed to be possible will be preferred over one in which the same valuation is seen as impossible.

 Then the rational closure of~$\mathcal{C}$ is the set $\twiddle^\mathrm{rc}_\mathcal{C}\defined\{(\alpha, \beta) \mid\RM^{\mathrm{rc}}(\mathcal{\mathcal{C}})\sat\alpha\twiddle\beta\}$. Rational closure is commonly viewed as the \emph{basic} (although certainly not the only acceptable) form of entailment over propositional conditional knowledge bases, on which other, more venturous, forms of entailment can be constructed. It is therefore an appropriate choice on which to base our investigations into versions of entailment for~PTL. 

\subsection{Propositional Typicality Logic}\label{BackgroundPTL}

PTL~\cite{BoothEtAl2012} is a logical formalism explicitly allowing for the representation of and reasoning about a notion of {\em typicality}. Syntactically, it extends classical propositional logic with a {\em typicality operator}~$\typ$, the intuition of which is to capture the most typical (alias normal or conventional) situations or worlds. Here we shall briefly present the main results about PTL relevant for our purposes.

The language of PTL, denoted by~$\typup{\Lang}$, is recursively defined by:
\[
\alpha\ \grammar p \mid \lnot\alpha \mid \alpha\land\alpha \mid \top \mid \bot \mid \typ{\alpha}
\]
As before, $p$ denotes an atom and all the other Boolean connectives are defined in terms of $\lnot$ and $\land$.

Let~$\Prp=\{\bird,\fly,\ost,\p\}$, where $\bird$, $\fly$ and $\p$ are as before and~$\ost$ represents ``being an ostrich''. The following are examples of $\typup{\Lang}$-sentences: $\typ{\bird}$ (being a typical bird), $\ost\!\limp\!\lnot\!\typ\!\bird$ (ostriches are not typical birds), $(\p\lor\ost)\liff(\bird\land\typ\lnot\fly)$ (being a penguin or an ostrich is equivalent to being a bird and being a typical non-flying creature).

Intuitively, a sentence of the form $\typ{\alpha}$ is understood to refer to the typical situations in which~$\alpha$ holds. Note that~$\alpha$ can itself be a $\typ$-sentence. The semantics of~PTL is also in terms of ranked interpretations (see Definition~\ref{Def:RankedInterpretation}). Satisfaction is defined inductively in the classical way, adding the following condition: $v\sat\typ\alpha$ if $v\sat\alpha$ and there is no~$v'$ such that $v'\pref v$ and $v'\sat\alpha$. That is, given~$\RM$,  $\states{\typ\alpha}{\RM}\defined\min_{\pref}\states{\alpha}{\RM}$. In the ranked interpretation~$\RM$ of Figure~\ref{Figure:RankedInterpretation}, we have $\states{\typ\bird}{\RM}=\{\{\bird, \fly,\neg \p\}\}$, $\states{\typ\p}{\RM}=\{\{\bird,\neg \fly, \p\}\}$ and $\states{\typ(\bird\land\lnot\fly)}{\RM}=\{\{ \bird,\neg \fly, \neg \p\}, \{\bird,\neg \fly, \p\}\}$.

We say that $\alpha\in\typup{\Lang}$ is {\em satisfiable} in a ranked interpretation~$\RM$ if $\states{\alpha}{\RM}\neq\emptyset$, otherwise~$\alpha$ is {\em unsatisfiable} in~$\RM$. We say that~$\RM$ is a {\em ranked model} of~$\alpha$ (denoted $\RM\sat\alpha$) if $\states{\alpha}{\RM}=\U^{\RM}$. Observe that when $\U^{\RM}=\emptyset$, then $\RM$ is a model of every $\alpha\in\typup{\Lang}$.
 
For $X\subseteq\typup{\Lang}$ we define $\Mod{X}\defined\{\RM\mid\RM\sat\alpha$ for every $\alpha\in X\}$.  $X$ is {\em satisfiable} iff $X$ has at least one model $\RM$ for which $\U^{\RM}\neq\emptyset$. Observe that if $X$ is unsatisfiable, it has as its only ranked model the ranked interpretation $\RM$ for which $\U^{\RM}=\emptyset$.
A PTL {\em knowledge base} is a set of sentences~$\KB\subseteq\typup{\Lang}$.

A useful property of the typicality operator~$\typ$ is that it allows us to express KLM-style conditionals. That is, for every ranked interpretation $\RM$ and every $\alpha,\beta\in\Lang$, $\RM\sat\alpha\twiddle\beta$ if and only if $\RM\sat\typ{\alpha}\limp\beta$. 
The converse does not hold since it can be shown that there are $\typup{\Lang}$-sentences that cannot be expressed as a set of KLM-style $\twiddle$-statements on~\Lang. To give an example (taken from Booth~\etal.~\cite{BoothEtAl2013}), assuming $\Prp = \{p,q\}$ then $\typ p$ is one such sentence, since $\typ p$ has exactly four ranked models, corresponding to the cases in which $\U^{\RM}$ is respectively taken to be (1) $\{\{p,q\}, \{p, \neg q\}\}$, (2) $\{\{p,q\}\}$, (3) $\{\{p,\neg q\}\}$ and (4)~$\emptyset$ (and where, in each case the ordering $\prec_\RM$ is taken to be empty). Yet there exists no set $X$ of KLM-style $\twiddle$-statements with exactly these models.
\myskip

The representation result below, extending Theorem~3.12 of Lehmann and Magidor~\cite{LehmannMagidor1992} to $\typup{\Lang}$, shows that the formalisation of the KLM rational conditional~$\twiddle$ inside~PTL is appropriate.

\begin{observation}[Booth~\etal.~\cite{BoothEtAl2013}, Corollary~22]\label{Theorem:RepResultTwiddleFlatBar}
Let $\RM$ be a ranked interpretation and let $\twiddle_{\RM}\ \defined\{(\alpha,\beta) \mid \alpha,\beta\in\typup{\Lang}$ and $\RM\sat\typ\alpha\rightarrow\beta\}$. Then $\twiddle_{\RM}$ is a rational conditional. Conversely, for every rational conditional $\twiddle$, there exists a ranked interpretation $\RM$ such that, for every $\alpha, \beta\in\typup{\Lang}$, $\alpha\twiddle\beta$ if and only if $\RM\sat\typ\alpha\rightarrow\beta$.
\end{observation}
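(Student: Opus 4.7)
My plan is to handle the two directions separately. For the first -- that $\twiddle_\RM$ satisfies all the KLM properties of a rational conditional -- I would verify (Ref), (LLE), (And), (Or), (RW), (CM), and (RM) directly from the semantic identity $\RM\sat\typ\alpha\to\beta$ iff $\min_{\prec_\RM}\states{\alpha}{\RM}\subseteq\states{\beta}{\RM}$. The first six are routine set-theoretic arguments about minima of a strict partial order and are completely insensitive to whether $\alpha$ and $\beta$ contain $\typ$, since after interpretation in $\RM$ they are just subsets of $\U$. The one place where the ranked, rather than merely preferential, structure really matters is (RM), which uses modularity through the identity $\min_{\prec}\states{\alpha\land\beta}{\RM}=\min_{\prec}\states{\alpha}{\RM}\cap\states{\beta}{\RM}$ whenever the right-hand intersection is non-empty -- exactly the situation guaranteed by the premises $\alpha\twiddle\gamma$ and $\alpha\ntwiddle\lnot\beta$ of (RM).

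For the converse direction my plan is to piggy-back on Lehmann and Magidor's representation theorem for the classical language. First observe that the restriction $\twiddle^r\defined\twiddle\cap(\Lang\times\Lang)$ is itself a rational conditional on $\Lang$, because each KLM rule on $\typup{\Lang}$ specialises to the same rule on $\Lang$. By Lehmann--Magidor's Theorem~3.12 there is then a ranked interpretation $\RM$ representing $\twiddle^r$. The proof is completed by the key claim that any two rational conditionals on $\typup{\Lang}$ agreeing on $\Lang\times\Lang$ must coincide everywhere. Applying this to $\twiddle$ and $\twiddle_\RM$ -- which agree on $\Lang\times\Lang$ by construction and are both rational (one by hypothesis, the other by the first direction) -- forces $\twiddle=\twiddle_\RM$.

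The main obstacle is precisely this uniqueness-of-extension claim. To prove it I would proceed by induction on the nesting depth of $\typ$ in $\alpha$ and $\beta$, showing that every $\typup{\Lang}$-conditional $\alpha\twiddle\beta$ is forced by the $\Lang\times\Lang$ fragment. Useful reductions include the semantically valid equivalence $\typ\typ\alpha\equiv\typ\alpha$, which flattens iterated typicality and already collapses a large family of cases, and the reformulation of statements of the form $\alpha\twiddle\typ\beta$ as the conjunction $(\alpha\twiddle\beta)\land(\beta\ntwiddle\lnot\alpha)$, which lives purely in $\twiddle^r$. Since $\Prp$ is finite there are only finitely many semantically distinct $\typ$-formulas, so the induction terminates; spelling out every case -- particularly for arbitrary Boolean combinations of $\typ$-subformulas, where one has to combine several reductions at once while respecting (LLE) and (RW) -- is where I would expect to spend most of the effort.
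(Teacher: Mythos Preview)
First, note that the present paper does not prove this Observation at all; it is cited as Corollary~22 from Booth~\etal.~\cite{BoothEtAl2013}, so there is no in-paper argument to compare your proposal against.

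On the substance: your first direction is unproblematic. The second direction, however, contains a real gap. Your strategy rests on the claim that two rational conditionals on $\typup{\Lang}$ which agree on $\Lang\times\Lang$ must agree everywhere, and you propose to establish this via reductions such as ``$\alpha\twiddle\typ\beta$ iff $(\alpha\twiddle\beta)\wedge(\beta\ntwiddle\lnot\alpha)$''. Leaving aside that this particular equivalence already fails semantically in the vacuous case $\alpha\twiddle\bot$, the deeper problem is that such reductions are facts about $\twiddle_{\RM}$, not consequences of the abstract KLM postulates. The rules (Ref), (LLE), (And), (Or), (RW), (CM), (RM) constrain the behaviour of an arbitrary rational $\twiddle$ on $\typ$-formulas only through the PTL-valid equivalences and implications that feed into (LLE) and (RW). These are not strong enough to pin down the intended reading of $\typ$: for instance, $\beta\twiddle_{\RM}\typ\beta$ holds in every ranked interpretation, yet it cannot be derived from (Ref) and (RW) since $\beta\rightarrow\typ\beta$ is not PTL-valid, and none of the remaining rules manufactures a $\typ$ on the right-hand side out of $\typ$-free premises. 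Your induction therefore has nothing to stand on: the very identities you intend to invoke are exactly what must be proved, and they do not follow from rationality alone. The converse direction genuinely requires re-running a Lehmann--Magidor style construction over the richer language---building a ranking directly from $\twiddle$ on $\typup{\Lang}$ and then verifying that the resulting interpretation reproduces $\twiddle$ on $\typ$-formulas as well---rather than bootstrapping from the propositional fragment.
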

\myskip

For more details on PTL and the aforementioned properties, the reader is referred to the work by Booth~\etal.~\cite{BoothEtAl2013}.

\section{The entailment problem for PTL}\label{entailsection}

The purpose of this section is to provide a more formal motivation for the remainder of the paper. From the perspective of knowledge representation and reasoning (KR\&R), a central issue is that of what it means for a~PTL sentence to {\em follow} from a PTL knowledge base~\KB. An obvious approach to the matter is to embrace the notion of entailment advocated by Tarski~\cite{Tarski1930} and largely adopted in the logic-based~KR\&R community.

\begin{definition}[Ranked entailment and consequence]\label{Def:RankedEntailment}
Let $\KB$ be a PTL knowledge base and $\alpha\in\typup{\Lang}$. We say $\KB$ \textbf{ranked-entails}~$\alpha$ (noted~$\KB\NMentails_0 \alpha$) if $\Mod{\KB}\subseteq\Mod{\alpha}$.
Its associated \textbf{ranked consequence} operator is defined by setting $\Cn{0}{\KB}\defined$ $\{\alpha\in\typup{\Lang} \mid\KB\NMentails_0\alpha\}$.
\end{definition}

As we shall see below, this version of entailment is {\em not} appropriate in the context of~PTL for a number of reasons. For one, consider the following definition of a conditional induced from a set of PTL sentences. 

\begin{definition}[Induced conditional relation]\label{Def:InducedTwiddle}
Let $\KB\subseteq\typup{\Lang}$. We define  $\twiddle_{\KB}\ \defined\{(\alpha,\beta) \mid \alpha, \beta\in\Lang$ and $\typ\alpha\limp\beta\in\KB\}$.
\end{definition}

It is worth investigating whether~$\twiddle_{\Cn{0}{\KB}}$  is rational for a PTL knowledge base $\KB$, \ie, whether it satisfies all the~KLM properties for rationality from Section~\ref{KLM}. The following proposition, which mimics a similar result by~Lehmann and Magidor in the propositional case, shows that this is not the case:

\begin{observation}[Booth~\etal.~\cite{BoothEtAl2013}, Proposition~25]
For a PTL knowledge base $\KB$, $\twiddle_{\Cn{0}{\KB}}$ is a preferential conditional, but is not necessarily a rational conditional.
\end{observation}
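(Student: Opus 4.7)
The plan is to split the statement into two parts. For the positive half --- that $\twiddle_{\Cn{0}{\KB}}$ is preferential --- I would reduce to the stated representation result (Observation~\ref{Theorem:RepResultTwiddleFlatBar}): for every ranked interpretation $\RM$, the relation $\twiddle_{\RM}$ given by $\alpha \twiddle_{\RM} \beta$ iff $\RM \sat \typ\alpha \limp \beta$ is a rational, hence preferential, conditional. Now $\alpha \twiddle_{\Cn{0}{\KB}} \beta$ holds iff $\typ\alpha \limp \beta$ is validated by every $\RM \in \Mod{\KB}$, so one has $\twiddle_{\Cn{0}{\KB}} = \bigcap_{\RM \in \Mod{\KB}} \twiddle_{\RM}$ (restricted to classical $\alpha,\beta$). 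Each of the six KLM preferential rules (Ref), (LLE), (And), (Or), (RW), (CM) has a ``Horn'' shape --- its premises consist only of positive conditional assertions and classical side-conditions, and its conclusion is a single positive conditional assertion --- so the property of satisfying each rule is closed under arbitrary intersection. Hence $\twiddle_{\Cn{0}{\KB}}$ inherits all the preferential properties.

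For the negative half, the failure of rational monotonicity must be witnessed by an explicit counterexample, since (RM) has a negated conditional $\alpha \ntwiddle \neg\beta$ among its premises --- precisely the kind of assertion not preserved under intersection of relations. I would take $\{p,q,r\} \subseteq \Prp$ and set $\KB \defined \{\typ p \limp r\}$. Then clearly $p \twiddle_{\Cn{0}{\KB}} r$. To show $p \ntwiddle_{\Cn{0}{\KB}} \neg q$, consider $\RM_1$ that places $\{p,q,r\}$ at rank~$0$ and every other valuation at rank~$1$: the minimal $p$-valuation of $\RM_1$ is $\{p,q,r\}$, so $\RM_1 \in \Mod{\KB}$ while $\RM_1 \not\sat \typ p \limp \neg q$. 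To show $p \land q \ntwiddle_{\Cn{0}{\KB}} r$, consider $\RM_2$ that places $\{p,\neg q,r\}$ at rank~$0$, $\{p,q,\neg r\}$ at rank~$1$, and the remaining valuations at rank~$2$: then $\RM_2$'s unique minimal $p$-valuation is $\{p,\neg q,r\}$ (which satisfies $r$, so $\RM_2 \in \Mod{\KB}$) while its unique minimal $(p \land q)$-valuation is $\{p,q,\neg r\}$, which falsifies $r$. Both premises of (RM) thus hold for $\twiddle_{\Cn{0}{\KB}}$ while its conclusion fails.

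The main obstacle is not the preferential half, which is essentially a routine ``intersection preserves Horn rules'' observation once the representation result is invoked; it is rather ensuring the counterexample is airtight. Concretely, I would verify (i) each $\RM_i$ satisfies the convexity condition of Definition~\ref{Def:RankedInterpretation}, (ii) each $\RM_i$ is indeed a model of $\KB$, and (iii) the witnesses for $p \ntwiddle_{\Cn{0}{\KB}} \neg q$ and $p \land q \ntwiddle_{\Cn{0}{\KB}} r$ are correctly identified as the unique minimal valuations of $p$ and of $p \land q$ in the respective models. These checks are mechanical, so the proof should be short once the example is laid out.
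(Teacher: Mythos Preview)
Your argument is sound on both halves. The intersection-of-rational-relations argument for the preferential part is the standard one, and your counterexample for the failure of (RM) checks out: $\RM_1$ and $\RM_2$ satisfy convexity, both are models of $\KB=\{\typ p\limp r\}$, and they respectively witness $p\ntwiddle_{\Cn{0}{\KB}}\neg q$ and $p\land q\ntwiddle_{\Cn{0}{\KB}} r$.

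Note, however, that the paper does not actually supply a proof of this statement: it is recorded as an Observation with a citation to Booth~\etal.~\cite{BoothEtAl2013}, Proposition~25, and no argument is reproduced here. So there is no ``paper's own proof'' to compare against. Your write-up is a self-contained justification of a result the present paper merely quotes; the approach you take (Horn-rule closure under intersection, plus an explicit two-model counterexample to (RM)) is exactly the kind of argument one would expect, and is in the spirit of the analogous remark Lehmann and Magidor make for propositional conditional knowledge bases.
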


Hence, ranked consequence as defined above delivers an induced defeasible conditional that is preferential but that need not be rational. This forms an argument against ranked entailment being an appropriate notion of entailment for~PTL.

One of the principles to give serious consideration when investigating PTL entailment is the \emph{presumption of typicality}~\cite[p.~63]{Lehmann1995}. Informally, this means that one should assume that every situation is as typical as possible. Sections~\ref{TowardsEntailment} and~\ref{PT-entailment} contain a formalisation of this principle. For now, we illustrate it with an example.

\begin{example}\label{Example:Ampliativeness}
Let $\KB_{1}=\{\p\limp\bird, \typ{\bird}\limp\fly\}$ (penguins are birds,  and typical birds  fly). Given just this information about birds and penguins, it is reasonable to expect  $\typ{\p}\limp\typ\bird$ (typical penguins  are typical birds), and therefore $\typ{\p}\limp\fly$ (typical penguins fly), to follow from~$\KB_{1}$. With ranked entailment, these requirements are not met, as there is a ranked model of~$\KB_{1}$, depicted in Figure~\ref{Figure:RankedCounterModel}, invalidating the expected conclusions. This is so because ranked entailment, being a Tarskian relation, is not ampliative, \ie, it does not allow for venturing beyond what necessarily follows from the knowledge base. \hfill \qed
\end{example}

\begin{figure}[ht]
\begin{center}
\begin{tabular}{|c|c|} 
\hline
 $2$ & 
 $\{\bird,\fly,\p\}$
 \\ \hline
 $1$ & 
 $\{\bird,\lnot\fly,\lnot\p\}$, \quad
 $\{\bird,\lnot\fly,\p\}$
 \\ \hline
 $0$ & 
 $\{\lnot\bird,\lnot\fly,\lnot\p\}$, \quad
 $\{\lnot\bird,\fly,\lnot\p\}$, \quad
 $\{\bird,\fly,\lnot\p\}$ 
 \\ \hline
\end{tabular}
\end{center}
\caption{A ranked model of~$\KB_{1}=\{\p\limp\bird, \typ{\bird}\limp\fly\}$ satisfying neither $\typ{\p}\limp\typ \bird$ nor $\typ{\p}\limp\fly$.}
\label{Figure:RankedCounterModel}
\end{figure}

Besides requiring PTL entailment to be ampliative, we also want it to be \emph{defeasible}, that is, the conclusions derived under the presumption of typicality in an ampliative way can be retracted in case of new conflicting information. This is illustrated by the following example.

\begin{example}\label{Example:Defeasibility}
Assume $\typ{\p}\limp\typ \bird$ and $\typ{\p}\limp\fly$ (somehow) could follow from~$\KB_{1}$ in Example~\ref{Example:Ampliativeness}, but then we are informed that 
typical penguins  do not fly. 
That is, let $\KB_{2}=\KB_{1}\cup\{\typ{\p}\limp\lnot\fly\}$. 
While we want $\p\limp\lnot{\typ}\bird$ (penguins are not typical birds) to follow from~$\KB_{2}$, we do {\em not} want $\typ{\p}\limp\fly$ to follow from~$\KB_{2}$, which is not possible with ranked entailment. \hfill \qed
\end{example}

\section{Towards a notion of entailment for PTL}\label{TowardsEntailment}

We have seen that ranked entailment has some serious drawbacks in a non-monotonic context. Therefore, the question as to what logical consequence in~PTL should mean remains mostly unanswered so far. In this section, we first specify and discuss a list of postulates formalising the requirements motivated in the last section and that, at first glance, seem reasonable for an appropriate notion of entailment in~PTL. In the subsequent section, we consider specific alternatives to ranked entailment and check them against our postulates. 

We start by introducing some notation. With $\NMentails_{?}\ \subseteq \ps{\typup{\Lang}}\times\typup{\Lang}$, we denote any entailment relation on the language of~PTL. Given an entailment relation~$\NMentails_{?}$, its associated {\em consequence} operator is defined in the usual way by setting, for each $\KB\subseteq\typup{\Lang}$, $\Cn{?}{\KB} \defined \{ \alpha\in\typup{\Lang} \mid \KB\NMentails_{?}\alpha \}$. 









Following the tradition in the non-monotonic reasoning literature, the obvious starting point is to consider some of the basic properties of classical consequence operators.

\begin{description}
\item[P1] For every $\KB \subseteq \typup{\Lang} $, $\KB\subseteq\Cn{?}{\KB}$ \hfill(Inclusion)
\item[P2] For every $\KB, \KB' \subseteq \typup{\Lang} $,\\ if $\KB\subseteq \KB'\subseteq \Cn{?}{\KB}$, then  $\Cn{?}{\KB'}=\Cn{?}{\KB}$ \hfill(Cumulativity)
\end{description}

Note that Cumulativity and Inclusion imply Idempotence. Idempotence, formalised as  

\vspace{0.2cm}

For every $\KB \subseteq \typup{\Lang}$, $\Cn{?}{\KB}=\Cn{?}{\Cn{?}{\KB}}$ \hfill(Idempotence)
    
\vspace{0.2cm}

\noindent can be derived from Cumulativity by setting $\KB'=\Cn{?}{\KB}$, and letting Inclusion impose the satisfaction of the antecedent. Idempotence indicates that a consequence operator behaves as a `once-off' operation, that is,  as a closure operator. There is agreement in the literature that both Inclusion and Cumulativity  are desirable properties to have \cite[p.43]{Makinson1994}.




Ranked entailment, as defined in Section~\ref{entailsection}, satisfies Properties~{\bf P1} and~{\bf P2}.  Nevertheless, $\Cn{0}{\cdot}$, the associated consequence relation of ranked entailment, also satisfies the classical property of Monotonicity: If $\KB\subseteq\KB'$, then $\Cn{0}{\KB}\subseteq\Cn{0}{\KB'}$. As seen in Example~\ref{Example:Defeasibility}, this is a property that we do not want $\Cn{?}{\cdot}$ to satisfy (certainly not in general).

So, we require~$\Cn{?}{\cdot}$ to be a {\em non-monotonic} consequence operator. This amounts to requiring~$\Cn{?}{\cdot}$ to satisfy the following two postulates:
\begin{description}
\item[P3] For every $\KB\subseteq\typup{\Lang}$,  $\Cn{0}{\KB}\subseteq\Cn{?}{\KB}$ \hfill(Ampliativeness)
\item[P4] For some $\KB,\KB'\subseteq\typup{\Lang}$, $\KB\subseteq\KB'$ but $\Cn{?}{\KB}\not\subseteq\Cn{?}{\KB'}$ \hfill(Defeasibility)
\end{description}

Ampliativeness, a property generalising supra-classicality~\cite{Makinson2005a} (where the basic underlying entailment relation is classical), says that $\Cn{?}{\cdot}$ should be at least as venturous as its underlying ranked entailment. Defeasibility specifies that~$\Cn{?}{\cdot}$ should be flexible enough to disallow previously derived conclusions in the light of new (possibly conflicting) information. In Example~\ref{Example:Ampliativeness}, assuming $\typ\p\limp\fly\in\Cn{?}{\KB_{1}}$ is the case, then $\typ\p\limp\fly$ should no longer be concluded if~$\typ\p\limp\lnot\fly$ is added to~$\KB_{1}$. Note that adding Defeasibility to Ampliativeness actually implies a {\em strict} version of Ampliativeness which says $\Cn{?}{\cdot}$ should in some cases be {\em more} venturous than its underlying ranked entailment. (Since, if $\Cn{?}{\KB} = \Cn{0}{\KB}$ for all $\KB$, then $\Cn{?}{\cdot}$ is just ranked entailment, which is monotonic.)




{\bf P1}, {\bf P2} and {\bf P3} together imply that the closure operation  $\Cn{?}{\cdot}$ gives as output a theory that is closed under $\Cn{0}{\cdot}$.

\begin{lemma}\label{CN0_closure}
If $\Cn{?}{\cdot}$ satisfies {\bf P1}, {\bf P2} and {\bf P3}, then, for every $\KB \subseteq \typup{\Lang}$,
\[\Cn{?}{\KB}=\Cn{0}{\Cn{?}{\KB}}\]

\end{lemma}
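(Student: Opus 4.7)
The plan is to prove the two inclusions of the claimed equality separately, using the postulates to establish Idempotence for $\Cn{?}{\cdot}$ along the way.

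For the inclusion $\Cn{?}{\KB} \subseteq \Cn{0}{\Cn{?}{\KB}}$, I would simply appeal to the fact that ranked entailment is Tarskian and so its consequence operator $\Cn{0}{\cdot}$ satisfies Inclusion: $X \subseteq \Cn{0}{X}$ for any $X \subseteq \typup{\Lang}$. Instantiating $X := \Cn{?}{\KB}$ yields this direction immediately; no appeal to the postulates on $\Cn{?}{\cdot}$ is needed here.

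For the reverse inclusion $\Cn{0}{\Cn{?}{\KB}} \subseteq \Cn{?}{\KB}$, I would proceed in two steps. First, I would derive Idempotence of $\Cn{?}{\cdot}$ from \textbf{P1} and \textbf{P2}, exactly as suggested in the discussion preceding the lemma: taking $\KB' := \Cn{?}{\KB}$, \textbf{P1} gives $\KB \subseteq \KB'$ while $\KB' \subseteq \Cn{?}{\KB}$ is trivial, so the hypothesis of \textbf{P2} is met, yielding $\Cn{?}{\Cn{?}{\KB}} = \Cn{?}{\KB}$. Second, I would apply \textbf{P3} with $\Cn{?}{\KB}$ in place of $\KB$ to obtain $\Cn{0}{\Cn{?}{\KB}} \subseteq \Cn{?}{\Cn{?}{\KB}}$. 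Chaining these gives the desired inclusion.

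There is no real obstacle: the argument is almost entirely bookkeeping, and the only substantive move is noticing that \textbf{P3} must be applied not to $\KB$ but to the already-closed set $\Cn{?}{\KB}$, which is where Idempotence becomes essential. I would write the proof as a short two-line chain once the preparatory observation about Idempotence is recorded.
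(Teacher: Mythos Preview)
Your proposal is correct and follows essentially the same approach as the paper's proof: both directions are handled identically, using Inclusion for $\Cn{0}{\cdot}$ for the forward inclusion, and applying \textbf{P3} to $\Cn{?}{\KB}$ followed by Idempotence (derived from \textbf{P1} and \textbf{P2}) for the reverse.
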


\begin{proof}
$\Cn{0}{\cdot}$ is a Tarskian consequence relation (see Definition \ref{Def:RankedEntailment}), and, as such, it satisfies \emph{Inclusion}. That is, for every set of formulas $\mathcal{S}$, $\mathcal{S}\subseteq\Cn{0}{\mathcal{S}}$. To see it, it is sufficient to check that, according to Definition \ref{Def:RankedEntailment}, for every $\alpha \in\mathcal{S}$, $\mathcal{S}\NMentails_0\alpha$. Hence, since $\Cn{0}{\cdot}$ satisfies Inclusion, $\Cn{?}{\KB}\subseteq\Cn{0}{\Cn{?}{\KB}}$.

By P3 we have $\Cn{0}{\Cn{?}{\KB}}\subseteq \Cn{?}{\Cn{?}{\KB}}$, that, by Idempotence (consequence of {\bf P1} and {\bf P2}), implies $\Cn{0}{\Cn{?}{\KB}}\subseteq \Cn{?}{\KB}$.
\end{proof}


Similarly to KLM in the propositional case, we would ideally like the defeasible conditional induced by~$\Cn{?}{\KB}$ (see Definition~\ref{Def:InducedTwiddle}) to satisfy all the rationality properties:
\begin{description}
\item[P5] For every $\KB\subseteq\typup{\Lang}$, $\twiddle_{\Cn{?}{\KB}}$ is a rational conditional relation on~\Lang \hfill(Conditional Rationality)
\end{description}

As observed above, {\bf P5} requires the defeasible conditional induced by~$\Cn{?}{\KB}$ to be rational---that is, to satisfy all the rationality properties. But from Theorem 3.12 of Lehmann and Magidor~\cite{LehmannMagidor1992} it follows that every rational defeasible conditional can be obtained from a single ranked interpretation. So, from this it follows that requiring the defeasible conditional induced by~$\Cn{?}{\KB}$ to be rational 
amounts to requiring that the defeasible conditional be generated by a single ranked interpretation. That is, by courtesy of this result, {\bf P5} can also be rephrased as follows:
\begin{description}
\item[P5'] For every $\KB\subseteq\typup{\Lang}$, there is a ranked interpretation~$\RM$ \st, for every $\alpha,\beta\in\Lang$, $\alpha\twiddle_{\Cn{?}{\KB}}\beta$ if and only if $\RM\sat\typ{\alpha}\rightarrow\beta$.
 \hfill($\twiddle$ Single Model)
\end{description}

The next postulate we consider, which is easily shown to be a strengthening of {\bf P5}, simply applies this same requirement, not just to defeasible statements, but to all statements expressible in PTL:
\begin{description}
\item[P6] For every $\KB\subseteq\typup{\Lang}$, there is a ranked interpretation~$\RM$ \st, for all $\alpha\in\typup{\Lang}$, $\alpha\in\Cn{?}{\KB}$ if and only if $\RM\sat\alpha$
 \hfill(Single Model)
\end{description}

An important special case of a PTL knowledge base is when the individual elements of $\KB$ correspond to KLM-style conditionals.

\begin{definition}
[(Propositional) conditional knowledge base]
A PTL knowledge base $\KB$ will be called a (propositional) \textbf{conditional knowledge base} if each element of $\KB$ is of the form $\typ\alpha \limp \beta$, for $\alpha, \beta \in \Lang$.
\end{definition}

The next postulate says that if~$\KB$ is a propositional conditional knowledge base, then the result should coincide with Lehmann and Magidor's definition of rational closure:
\begin{description}
\item[P7] For every conditional knowledge base $\KB$, $\twiddle_{\Cn{?}{\KB}}=\ \twiddle^\mathrm{rc}_\KB$  \hfill(Respects Rational Closure)
\end{description}

{\bf P7} implies {\bf P4}, since rational closure is a non-monotonic closure operation.


The following property was shown by Lehmann and Magidor to be satisfied by the rational closure for \emph{conditional} knowledge bases. 
\begin{description}
\item[P8] For every $\KB\subseteq\typup{\Lang}$ and $\alpha\in\Lang$, $\alpha\in\Cn{?}{\KB}$ if and only if $\alpha\in\Cn{0}{\KB}$  \hfill(Strict Entailment)
\end{description}
{\bf P8} states that~$\Cn{?}{\cdot}$ should coincide with ranked entailment for those sentences not involving typicality. The motivation for Strict Entailment is twofold. First, it is a proposal for ranked entailment to be the lower bound for entailment \wrt\ classical sentences (those not involving typicality), a proposal that is not controversial. But secondly, it also requires entailment of classical sentences to correspond to exactly those sanctioned by ranked entailment. This can be viewed as adhering to the principle of \emph{minimal change}. Being Tarskian, ranked entailment is monotonic, and the argument is therefore that, while non-monotonicity may be applicable for sentences involving typicality, it should not be applicable to classical statements.

We are also interested in a couple of progressively weaker versions of Strict Entailment (the reasons for that will become clear later on). The first restricts it to hold only  when $\KB$ is a conditional knowledge base.
\begin{description}
\item[P9] For every conditional knowledge base $\KB$ and $\alpha\in\Lang$, $\alpha\in\Cn{?}{\KB}$ if and only if $\alpha\in\Cn{0}{\KB}$  \hfill{(Conditional Strict Entailment)}
\end{description}
Note that {\bf P7} also implies~{\bf P9}. To see this, first it is easy to check that every propositional formula $\alpha$ is equivalent to the PTL formula $\typ\neg\alpha\limp\bot$.

\begin{proposition}\label{prop:strict_cond}
For every formula $\alpha\in\Lang$ and every ranked interpretation $\RM$, $\RM\sat\alpha$ iff $\RM\sat\typ\neg\alpha\limp\bot$.
\end{proposition}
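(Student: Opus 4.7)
The plan is to simply unfold the semantic definitions on both sides of the biconditional and show that they coincide, the only subtle point being the appeal to finiteness to guarantee that minimal elements exist whenever a relevant set is non-empty.

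First I would reduce the right-hand side. By the clause for classical connectives, for any $v \in \U^{\RM}$ we have $v \sat \typ\neg\alpha \limp \bot$ iff $v \nsat \typ\neg\alpha$ (since $v \nsat \bot$). Hence
\[
\RM \sat \typ\neg\alpha \limp \bot \quad\text{iff}\quad \states{\typ\neg\alpha}{\RM} = \emptyset,
\]
and by the PTL semantic clause for $\typ$, $\states{\typ\neg\alpha}{\RM} = \min_{\pref}\states{\neg\alpha}{\RM}$.

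Next I would deal with the left-hand side in the analogous way: by definition $\RM \sat \alpha$ iff $\states{\alpha}{\RM} = \U^{\RM}$, which is equivalent to $\states{\neg\alpha}{\RM} = \emptyset$. So the claim boils down to the equivalence
\[
\min_{\pref}\states{\neg\alpha}{\RM} = \emptyset \quad\text{iff}\quad \states{\neg\alpha}{\RM} = \emptyset.
\]
The $(\Leftarrow)$ direction is trivial. For $(\Rightarrow)$, I use that $\Prp$ is finite (as stipulated in Section~\ref{Preliminaries}), so $\U$ and hence $\U^{\RM}$ are finite, and $\pref_{\RM}$ is a modular order. Any non-empty finite subset of a modular (in particular strict partial) order has at least one $\pref_{\RM}$-minimal element, so $\states{\neg\alpha}{\RM} \neq \emptyset$ implies $\min_{\pref}\states{\neg\alpha}{\RM} \neq \emptyset$, which is the required contrapositive.

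The main obstacle, if any, is making sure the minimality step is justified; everything else is bookkeeping with the semantic clauses. The proof is essentially a one-paragraph calculation and does not require any new machinery beyond Definition~\ref{Def:RankedInterpretation} and the PTL semantics recalled in Section~\ref{BackgroundPTL}.
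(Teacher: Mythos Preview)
Your proof is correct and follows essentially the same approach as the paper's: both unfold the semantic clauses to reduce the claim to showing that $\states{\neg\alpha}{\RM}=\emptyset$ iff $\min_{\pref}\states{\neg\alpha}{\RM}=\emptyset$, and then use well-foundedness for the non-trivial direction. The only cosmetic difference is that you appeal directly to finiteness of $\U$, whereas the paper phrases the same step in terms of the rank cells $L_i$; the underlying argument is identical.
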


\begin{proof}
$\RM\sat\alpha$ implies $\states{\alpha}{\RM}=\U^{\RM}$, that is equivalent to $\RM\sat\neg\alpha\limp\bot$, that, in turn, implies $\RM\sat\typ\neg\alpha\limp\bot$. In the opposite direction, $\RM\sat\typ\neg\alpha\limp\bot$ means that for every $u\in\U^{\RM}$, $u\not\sat \typ \neg \alpha$. $u\not\sat \typ \neg \alpha$ for every $u\in\U^{\RM}$ implies that for every $u\in\U^{\RM}$, $u\not\sat \neg \alpha$: if we had a valuation $v$ satisfying $\neg \alpha$ in some cell $L_i$, with $i<\infty$, we would either have that  $v\sat\typ\neg\alpha$, or there would be a valuation $v'$ in some $L_j$, $j<i$, such that $v'\sat\typ\neg\alpha$. Consequently, $u\sat \alpha$ for every $u\in\U^{\RM}$, that is, $\RM\sat\alpha$. 
\end{proof}

{\bf P7} implies that, for every $\alpha\in\Lang$ and every conditional knowledge base $\KB$,  $(\alpha,\bot)\in \twiddle_{\Cn{?}{\KB}}$ iff $(\alpha,\bot)\in \twiddle^\mathrm{rc}_\KB$. 
A well-known result by Lehmann and Magidor \cite[Lemma 5.16]{LehmannMagidor1992} states that for every $\alpha\in\Lang$ and every conditional knowledge base $\KB$, $\alpha\twiddle\bot$ is in the rational closure of $\KB$ iff $\alpha\twiddle\bot$ is a  ranked consequence of $\KB$, that is, $(\alpha,\bot)\in \twiddle^\mathrm{rc}_\KB$ iff $(\alpha,\bot)\in \twiddle_{\Cn{0}{\KB}}$.
Hence we have that for every $\alpha\in\Lang$ and every conditional knowledge base $\KB$,  $(\alpha,\bot)\in \twiddle_{\Cn{?}{\KB}}$ iff $(\alpha,\bot)\in \twiddle_{\Cn{0}{\KB}}$,
that, together with Proposition \ref{prop:strict_cond}, implies {\bf P9}.

In turn, {\bf P9} implies that entailment for PTL coincides with classical propositional entailment in the case of propositional knowledge bases, as formalised by the next property. 
\begin{description}
\item[P9'] For every $\KB \subseteq \Lang$ and $\alpha\in\Lang$, $\alpha\in\Cn{?}{\KB}$ if and only if $\KB$ entails $\alpha$ in classical propositional logic.  \hfill{(Classical Entailment)}
\end{description}

Since for every $\KB \cup \{\alpha\}\subseteq\Lang$, $\KB$ entails $\alpha$ in classical propositional logic if and only if $\alpha\in\Cn{0}{\KB}$, and any $\alpha \in \Lang$ is equivalent $\typ\neg\alpha \limp \bot$, {\bf P9'} is indeed a weakening of {\bf P9} (provided that {\bf P8} also holds). 

Finally, we consider another property shown by Lehmann and Magidor to be satisfied by the rational closure for conditional knowledge bases.
\begin{description}
\item[P10] For every $\KB\subseteq\typup{\Lang}$ and $\alpha\in\Lang$, $\typ\top\limp\alpha\in\Cn{?}{\KB}$ if and only if $\typ\top\limp\alpha\in\Cn{0}{\KB}$  \hfill(Typical Entailment)
\end{description}

The motivation for {\bf P10} is similar to that for {\bf P8} above in that  we want to constrain what should hold in the most typical situations. That is, given a knowledge base, the property speaks to which formulas of the form $\typ\top\rightarrow\alpha$ should follow. Ranked entailment clearly provides  a lower bound for such a kind of statement, but {\bf P10} also proposes to consider ranked entailment as the upper bound, thereby requiring that the  set of statements $\typ\top\rightarrow\alpha$ entailed by a knowledge base should  correspond exactly to those sanctioned by ranked entailment. The argument for this is that ranked entailment is monotonic and, applying the principle of minimal change, it is only when dealing with atypical situations that ranked entailment is not always sufficient.


Although these postulates all seem reasonable on their own, it turns out that they cannot all be satisfied simultaneously. In fact, this impossibility result already holds for a strict subset of the postulates.






\begin{restatable}{theorem}{impossibility}\label{Impossibility}
There is no PTL consequence operator $\Cn{?}{\cdot}$ satisfying all of~{\bf P1}, {\bf P2}, {\bf P3}, {\bf P5}, {\bf P8} and~{\bf P10}.
\end{restatable}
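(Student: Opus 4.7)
The plan is to argue by contradiction: assume some consequence operator $\Cn{?}{\cdot}$ satisfies all six postulates, and exhibit a concrete PTL knowledge base for which they clash. A natural candidate is $\KB = \{\typ p \lor \typ q,\ \lnot(\typ p \land \typ q)\}$ over $\Prp = \{p,q\}$: the ranked models of $\KB$ forbid $\{p,q\}$ from appearing at rank $0$ (otherwise $\{p,q\}$ would satisfy both $\typ p$ and $\typ q$, contradicting the second conjunct of $\KB$), yet they still allow $\{p,q\}$ to be possible at higher ranks.

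First, I would combine P1, P2 and P3 via Lemma~\ref{CN0_closure} to conclude that $\Cn{?}{\KB}$ is closed under $\Cn{0}{\cdot}$, so every ranked model of $\Cn{?}{\KB}$ is a ranked model of $\KB$. Proposition~\ref{prop:strict_cond} together with P5 and P8 then pins down the universe of the unique ranked interpretation $\RM^*$ witnessing P5: because every classical $\alpha \in \Lang$ is equivalent to $\typ\lnot\alpha \limp \bot$, the classical theories of $\RM^*$ and of $\Cn{0}{\KB}$ coincide, forcing $\U^{\RM^*}$ to equal the set $V$ of valuations possible in some ranked model of $\KB$. An analogous argument with P10 replacing P8 identifies the rank-$0$ layer of $\RM^*$ with the set $M$ of valuations appearing at rank $0$ of some ranked model of $\KB$. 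A short case analysis on $\Mod{\KB}$ gives $V = \{\{p,q\},\{p,\lnot q\},\{\lnot p,q\}\}$ and $M = \{\{p,\lnot q\},\{\lnot p,q\}\}$, so by convexity $\RM^*$ is uniquely determined: rank $0$ is $M$, rank $1$ is $\{\{p,q\}\}$. Inspection then shows that both $\typ p \limp \lnot q$ and $\typ q \limp \lnot p$ hold in $\RM^*$ (the unique $p$-world at rank $0$ of $\RM^*$ is $\{p,\lnot q\}$, and symmetrically for $q$), so by P5 both conditionals lie in $\Cn{?}{\KB}$.

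The contradiction now follows from closure: every $\RM \in \Mod{\Cn{?}{\KB}}$ satisfies $\KB \cup \{\typ p \limp \lnot q,\ \typ q \limp \lnot p\}$ simultaneously. But any such $\RM$ with $\{p,q\} \in \U^\RM$ is forced by the two extra conditionals to place $\{p,\lnot q\}$ and $\{\lnot p,q\}$ strictly below $\{p,q\}$, which makes $\{p,q\}$ neither a $\typ p$- nor a $\typ q$-world, contradicting $\typ p \lor \typ q$ at $\{p,q\}$. Hence $\{p,q\}$ is absent from the universe of every model of $\Cn{?}{\KB}$, so the classical fragment of $\Cn{?}{\KB}$ properly extends $\Cn{0}{\KB} \cap \Lang$---for instance it contains $p \liff \lnot q$, which fails at $\{p,q\} \in V$---violating P8. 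The main obstacle is orchestrating $\KB$ so that the rank-$1$ world forced by convexity in $\RM^*$ introduces, via P5, genuinely new conditionals whose joint satisfaction excises a classical valuation from the universe of every remaining model of $\Cn{?}{\KB}$; the example above works precisely because $\{p,q\}$ belongs to $V$ but not to $M$.
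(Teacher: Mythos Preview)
Your proof is correct, but it takes a genuinely different route from the paper's.

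The paper uses $\KB = \{\typ\top\rightarrow p,\ \typ\neg p\rightarrow \typ q\}$ and works \emph{syntactically}: from {\bf P10} it gets $\typ\top\rightarrow\neg q\notin\Cn{?}{\KB}$, then applies the rule (RM) directly (via {\bf P5}) to $\typ\top\rightarrow p$ to obtain $\typ q\rightarrow p\in\Cn{?}{\KB}$; chaining this with $\typ\neg p\rightarrow\typ q$ through $\Cn{0}$-closure yields $\typ\neg p\rightarrow p$, hence $p\in\Cn{?}{\KB}$, contradicting {\bf P8}. The postulate (RM) is invoked once, explicitly.

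Your argument is \emph{semantic}: you exploit {\bf P5$'$} to fix a witnessing interpretation $\RM^*$, then observe that {\bf P8} pins down $\U^{\RM^*}$ (via the equivalence $\alpha\equiv\typ\neg\alpha\rightarrow\bot$) and {\bf P10} pins down its rank-$0$ layer, after which convexity determines $\RM^*$ completely. You then read off the conditionals $\typ p\rightarrow\neg q$ and $\typ q\rightarrow\neg p$ from $\RM^*$ and show they collapse the universe of every model of $\Cn{?}{\KB}$, again contradicting {\bf P8}. What this buys is a very transparent picture of \emph{why} the postulates clash: {\bf P8} and {\bf P10} jointly over-determine the bottom of $\RM^*$, forcing it to validate conditionals that no model of $\KB$ with the full universe $V$ can satisfy. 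The paper's argument, by contrast, isolates exactly where (RM) is the culprit and needs no computation of $\Mod{\KB}$ beyond two witnessing models. Both knowledge bases exploit nesting of $\typ$ beyond the conditional fragment (the paper via $\typ q$ on the right of an implication, you via the Boolean combination $\typ p\lor\typ q$), which is of course essential given {\bf P7}/{\bf P9}.
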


\begin{proof}
Regarding {\bf P5}, requiring $\twiddle_{\Cn{?}{\cdot}}$ to satisfy (RM) is equivalent to requiring that, for every knowledge base $\KB$ and whatever formulas $\alpha,\beta,\gamma$, if $\typ\alpha\rightarrow \gamma\in\Cn{?}{\cdot}$ and $\typ\alpha\rightarrow \beta\notin\Cn{?}{\cdot}$, then we have $\typ(\alpha\land\neg \beta)\rightarrow \gamma\in\Cn{?}{\cdot}$.

Assume $\Cn{?}{\cdot}$ satisfies the given properties, and let $\KB = \{ \typ\top \rightarrow p, \typ\neg p\rightarrow \typ q\}$. By Strict Entailment ({\bf P8}), $p \not\in \Cn{?}{\KB}$ (because of \eg\ the 2-rank model $(\{ \{p, \neg q\} \} , \{ \{\neg p, q\} \})$ of $\KB$). By Typical Entailment ({\bf P10}), $\typ\top \rightarrow \neg q \not\in \Cn{?}{\KB}$ (because of \eg\ the 1-rank model $(\{ \{ p,q\}, \{p, \neg q\} \})$ of $\KB$). By Inclusion~({\bf P1}) $ \typ\top \rightarrow p\in \Cn{?}{\KB}$, and then by (RM) we must conclude that $\typ(\top\land q)\rightarrow p\in \Cn{?}{\KB}$, that is, $(\top\land q, p)\in \twiddle_{\Cn{?}{\KB}}$; since $\twiddle_{\Cn{?}{\cdot}}$ must satisfy LLE, the latter implies $(q, p)\in \twiddle_{\Cn{?}{\KB}}$, that is,     $\typ q\rightarrow p\in \Cn{?}{\KB}$.

Since by Inclusion~({\bf P1}) $ \typ\neg p \rightarrow \typ q\in \Cn{?}{\KB}$, we have $ \{\typ q\rightarrow p,\typ\neg p \rightarrow \typ q\}\subset \Cn{?}{\KB}$. Since $ \typ\neg p \rightarrow p\in \Cn{0}{\{\typ q\rightarrow p,\typ\neg p \rightarrow \typ q\}}$ and  $\Cn{0}{\cdot}$ is monotonic, we have $ \typ\neg p \rightarrow p\in \Cn{0}{\Cn{?}{\KB}}$. 
Then, by Lemma \ref{CN0_closure}, that assumes {\bf P1}, {\bf P2} and {\bf P3}, we have that $ \typ\neg p \rightarrow p\in \Cn{?}{\KB}$.

We have that $ p\in \Cn{0}{\{\typ\neg p \rightarrow p\}}$ holds: let $\RM\sat \typ\neg p \rightarrow p$, and let $v$ be a world in $\RM$ s.t. $v\sat \neg p$. $v$ cannot satisfy $\typ\neg p$, since we would have that $v\sat \neg p\land p$; but $v\sat \neg p$ and $v\not\sat \typ \neg p$ implies that in $\RM$ there is a world $w$, such that $w\pref v$ and $w\sat  \typ \neg p$, that, again, implies $w\sat \neg p\land p$.

From $ p\in \Cn{0}{\{\typ\neg p \rightarrow p\}}$, $\typ\neg p \rightarrow p\in \Cn{?}{\KB}$, and the monotonicity of ranked entailment, we must conclude also $ p\in \Cn{0}{\Cn{?}{\KB}}$, that is, by Lemma \ref{CN0_closure}, $p\in \Cn{?}{\KB}$, against {\bf P8}.
\end{proof}
\myskip


While, at first glance, this seems to be a negative result, our contention is that it should be interpreted as an indication that a logic as expressive as PTL admits more than one form of entailment. We elaborate directly on this point in Section~\ref{Weakening}, and indirectly in Sections~\ref{Minimum} and \ref{PT-entailment}, where we define and discuss two instances of entailment for PTL.

\section{LM-entailment}\label{Minimum}

We now come to our first construction of an entailment relation in PTL. 
We first observe that there is nothing to stop us from using the preference relation $\mpref{\LM}$ (see Section~\ref{RationalClosure}) to compare ranked interpretations of {\em any} PTL knowledge base $\KB$. The question then is, does there always exist a {\em unique} LM-minimum element of the ranked models of $\KB$, as there does in the restricted conditional case? And if so, how can we construct it? We now answer these questions. 

We assume as input a PTL knowledge base $\KB$, where each sentence $\alpha\in\KB$ is in {\em normal form}:

\begin{definition}[Normal form]\label{normalform}
$\alpha \in \typup{\Lang}$ is \textbf{in normal form} if it is of the form $\bigwedge_{i \leq t} \bullet \theta_i \rightarrow 
(\phi \vee \bigvee_{i \leq s} \bullet\psi_i)$, where $t,s \geq 0$ and the  $\theta_i$, $\phi$ and $\psi_i$  are all purely propositional sentences.
\end{definition}

\begin{restatable}{theorem}{Theorem:NormalForm}\label{Theorem:NormalForm}
The normal form is complete for $\typup{\Lang}$, \ie, for every sentence $\alpha\in\typup{\Lang}$ there is a (finite) set of sentences~$X\subseteq\typup{\Lang}$, each one in normal form, such that $\Mod{\alpha}=\Mod{\bigwedge X}$.
\end{restatable}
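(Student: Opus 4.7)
The plan is to prove the theorem in two stages. Stage~1 reduces every sentence in $\typup{\Lang}$ to a model-equivalent sentence in which $\typ$ appears only applied to purely propositional subformulas --- call such sentences \emph{$\typ$-flat}. Stage~2 converts any $\typ$-flat sentence into a finite conjunction of normal-form sentences.

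For Stage~1, I would proceed by induction on the nesting depth of $\typ$ in $\alpha$. Taking an innermost subformula $\typ\beta$ in which $\beta$ itself still contains $\typ$, I would rewrite $\typ\beta$ using pointwise identities such as $\typ\typ\gamma \equiv \typ\gamma$; $\typ(\gamma \lor \typ\delta) \equiv \typ(\gamma \lor \delta)$, which follows from the set identity $\min_{\prec}(A \cup \min_{\prec} B) = \min_{\prec}(A \cup B)$; and $\typ(\gamma \land \typ\delta) \equiv \gamma \land \typ\delta$. For remaining cases where no pointwise rewrite is apparent, I would fall back on model-level equivalence, exploiting the fact that $\RM \sat \alpha$ universally quantifies over $v \in \U^{\RM}$. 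For example, a direct semantic argument gives $\Mod{\typ\neg\typ\beta} = \Mod{\typ\top \land \neg\beta}$, because forcing every $v \in \U^{\RM}$ to be a minimal $\neg\typ\beta$-world both excludes all $\beta$-worlds from $\U^{\RM}$ and collapses all of $\U^{\RM}$ to rank~$0$. Since each rewrite strictly decreases the total nesting depth of~$\typ$ inside $\alpha$, iterating terminates in a $\typ$-flat $\alpha^{\flat}$ with $\Mod{\alpha} = \Mod{\alpha^{\flat}}$.

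For Stage~2, I would treat every distinct subformula of the form $\typ\theta$ (with $\theta \in \Lang$) occurring in $\alpha^{\flat}$ as a fresh atomic propositional symbol and regard $\alpha^{\flat}$ as an ordinary Boolean formula over $\Prp$ together with these \emph{typicality atoms}. Placing this Boolean formula in conjunctive normal form, each clause becomes a disjunction of propositional literals, positive typicality atoms $\typ\psi_j$, and negative typicality atoms $\neg\typ\theta_i$. The propositional literals combine, via $\lor$, into a single propositional formula $\phi$, and the clause rewrites as $\bigwedge_{i \leq t} \typ\theta_i \limp (\phi \lor \bigvee_{j \leq s} \typ\psi_j)$, which is exactly the normal form; $X$ is the finite set of such clauses.

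The main obstacle is Stage~1. The pointwise identities cover a number of commonly occurring configurations, but a fully general elimination procedure for nested~$\typ$ requires either a systematic enumeration of cases --- subtle when multiple typicality subformulas interact, as in $\typ(\typ\gamma \lor \typ\delta)$, where the answer depends on the relative ranks of $\states{\gamma}{\RM}$ and $\states{\delta}{\RM}$ --- or a careful appeal to the global, model-level view that exploits the universal quantifier implicit in $\RM \sat \cdot$. Once the flattening step is established, Stage~2 is essentially classical CNF conversion together with the bookkeeping step that folds negative typicality literals into the antecedent, and poses no further difficulty.
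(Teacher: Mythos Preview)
Your Stage~2 is exactly what the paper does: once the sentence is $\typ$-flat, treat each subformula $\typ\theta$ (with $\theta\in\Lang$) as a fresh atom, put the result in CNF over this enlarged atom set, and rewrite each clause as $\bigwedge_i \typ\theta_i \limp (\phi \lor \bigvee_j \typ\psi_j)$ by shifting the negated typicality literals to the antecedent. For Stage~1 the paper does not argue directly; it simply invokes the result of Booth~\etal.\ (2012, Section~4) that every $\typup{\Lang}$-sentence is equivalent to one with no nested occurrences of~$\typ$.

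Your attempt to redo Stage~1 by hand has a genuine gap. You mix two notions of equivalence: \emph{pointwise} equivalence (same $\states{\cdot}{\RM}$ for every~$\RM$), which is what licenses replacement of a proper subformula, and \emph{model-level} equivalence (same $\Mod{\cdot}$), which is only sound at the outermost position. Your fallback identity $\Mod{\typ\neg\typ\beta} = \Mod{\typ\top \land \neg\beta}$ is correct model-level but fails pointwise: with $\Prp=\{p\}$, $\beta=p$, and the ranked interpretation placing $\{p\}$ at rank~$0$ and $\{\neg p\}$ at rank~$1$, one gets $\states{\typ\neg\typ p}{\RM} = \{\{\neg p\}\}$ while $\states{\typ\top \land \neg p}{\RM} = \emptyset$. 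So if $\typ\neg\typ\beta$ sits under a negation, the replacement changes $\Mod{\alpha}$. Since your induction targets an \emph{innermost} nested $\typ$, which can lie arbitrarily deep inside~$\alpha$, the model-level fallback is not available there, and your listed pointwise rules do not cover the $\typ(\cdots\neg\typ\delta\cdots)$ configurations. To close the argument you would need either a complete set of pointwise rewrites handling negated typicality under~$\typ$ (which is essentially what the cited result provides), or a different induction that works outside-in and can legitimately exploit model-level identities at each step.
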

\begin{proof}
From the results by Booth~\etal.~\cite[Section~4]{BoothEtAl2012}, it follows that we need only consider sentences with non-nested instances of the typicality operator. So we let $\alpha$ be such a sentence. We let the set of \emph{typicality} atoms be the propositional atoms occurring in $\typup{\Lang}$ together with every sentence of the form $\typ\beta$ where $\beta$ is a propositional sentence (we refer to the latter as \emph{pure typicality atoms}). And we define the set of \emph{typicality literals} in the obvious way: the set of typicality atoms and their negations. The set of \emph{pure typicality literals} consists of the pure typicality atoms and their negations.

Now we define \emph{typicality conjunctive normal form} as a conjunctive normal form defined on typicality atoms. It follows immediately that $\alpha$ can be rewritten as a sentence, say $\alpha'$, in typicality conjunctive normal form. Let~$X'$ be the set of conjuncts occurring in $\alpha'$. We show below how to rewrite each conjunct in~$X'$ into a sentence in normal form. The resulting set~$X$ of sentences in normal form is the set referred to above. 

By construction, each sentence $\gamma\in X'$ is a disjunction of typicality literals. We separate them into three disjoint sets, the set of propositional literals, the set of positive pure typicality literals (with cardinality of, say $t$, where $t\geq 0$) and the set of negative pure typicality literals (with cardinality of, say $s$, where $s\geq 0$). Let $\phi$ be the disjunction of propositional literals, denote the $s$ positive pure typicality literals 
by $\psi_1,\ldots,\psi_s$, and the $t$ negative pure typicality literals by $\theta_1,\ldots\theta_t$. It follows immediately that $\gamma$ can be rewritten as the sentence $\bigwedge_{i\leq t}\theta_i \limp (\phi \lor\bigvee_{i\leq s}\psi_i)$.
\end{proof}
\myskip

For any ranked interpretation $\RM$, and $S \subseteq \U^{\RM}$, let $\RM^{\infty}_S$ be the ranked interpretation
such that $\RM^{\infty}_S(v)=\RM(v)$ for every $v\in S$, and $\RM^{\infty}_S(v)=\infty$ for every $v\in\U\setminus S$.
That is, $\RM^{\infty}_S$ is the ranked interpretation obtained from $\RM$ by turning all valuations not in $S$ into impossible valuations. Similarly, let $\RM^1_S$ be the ranked interpretation
such that $\RM^1_S(v)=\RM(v)$ for every $v\in S$, and $\RM^1_S(v)=\RM(v)+1$ for every $v\in\U\setminus S$.
That is, $\RM^1_S$ is the ranked interpretation obtained from $\RM$ by increasing the rank of all valuations not in $S$ by 1. 



Given a PTL knowledge base $\KB$ we now define a ranked interpretation $\RM^*_{\KB}$, obtained from $\KB$, as follows:
\begin{description}
\item[Step 1] Set $\RM_0(v)\defined 0$ for all $v\in\U$, $S_0\defined\emptyset$, and $i\defined 1$;
\item[Step 2] $S_1\defined \states{\KB}{\RM_0}$ ({\em separate the valuations which satisfy $\KB$ \wrt\ the current ranked interpretation $\RM_0$ from those that do not});
\item[Step 3] If $S_i = S_{i-1}$, then $\RM^*_{\KB}\defined(\RM_i)^{\infty}_{S_i}$, and stop. 
({\em if there is no change in the new $S_i$ then set the rank of those valuations that do not satisfy $\KB$ \wrt\ $\RM_i$ to $\infty$, let $\RM^*_{\KB}$ be the interpretation that remains}, and stop);
\item[Step 4] Otherwise $\RM_i\defined (\RM_{i-1})^1_{S_i}$
({\em otherwise create a new ranked interpretation $\RM_i$ by increasing the rank of every valuation not in $S_i$ by 1});
\item[Step 5] $S_{i+1} \defined \states{\KB}{\RM_i}$ and $i\defined i+1$ ({\em separate the valuations which satisfy $\KB$ \wrt\ the current ranked interpretation $\RM_i$ from those that do not, and increment $i$});
\item[Step 6] Go to Step 3.
\end{description} 


Algorithm~\ref{LMminimal} below gives a compact description of the steps above. Note that if the input to the algorithm, $\KB$, is unsatisfiable, the ranked interpretation $\RM^*_{\KB}$ that it returns is such that $\U^{\RM^*_{\KB}}=\emptyset$.

\begin{algorithm}[ht]
\caption{LM-minimal\label{LMminimal}}
{\small
 \KwIn{$\KB$}
 \KwOut{$\RM^{*}_{\KB}$}
 $\Prp_{\KB} \defined \{p\mid p \text{ is a  propositional letter occurring in \KB}\}$\;
 Let $\U$ be the universe of valuations for the vocabulary $\Prp_{\KB}$\;
 $\RM_{0}(v)\defined 0$ for every $v\in\U$\;
 $S_0\defined \emptyset$\;
 $S_1 \defined \states{\KB}{\RM_0}$\;
 $i\defined 1$\;
 
%
\While{$S_i \neq S_{i-1}$}{
 $\RM_i\defined(\RM_{i-1})^1_{S_i}$\;
 $S_{i+1} \defined \states{\KB}{\RM_i}$\;
 $i \defined i+1$\;  
}
$\RM^{*}_{\KB}\defined(\RM_{i-1})^{\infty}_{S_i}$ \;
\Return{$\RM^*_{\KB}$}
}
\end{algorithm}

\begin{example}\label{rceg}
Let us assume, for the sake of the example, that we are only talking about birds. Let $\KB \defined\{\typ{\top}\limp(\lnot\p\land\lnot\rob), \typ\p\limp\typ\lnot\fly,\typ\rob\limp\typ\fly, \p\limp\neg \rob\}$ (the most typical things are neither penguins nor robins, typical penguins are typical non-flying birds, and typical robins are typical flying birds, penguins are not robins). The procedure initialises with all valuations being assigned the rank of $0$. The only valuations that satisfy all three sentences \wrt\ $\RM_0$ are those satisfying both $\lnot\p$ and $\lnot\rob$. Thus $S_1 \defined \states{\KB}{\RM_0} = \{\{\lnot\fly,\lnot\p,\lnot\rob\}, \{\fly,\lnot\p, \lnot\rob\}\}$ and so we obtain $\RM_1$ by changing the rank of all valuations not in $S_1$ to $1$. Note that $\states{\typ\lnot\fly}{\RM_1} = \{ \{\lnot\fly,\lnot\p, \lnot\rob\}\}$ and $\states{\typ\fly}{\RM_1} = \{ \{\fly,\lnot\p,\lnot\rob\}\}$, so we can see that none of the valuations in $\U\setminus S_1$ is able to satisfy either $\typ\p\limp\typ\lnot\fly$ or $\typ\rob\limp\typ\fly$ \wrt\ $\RM_1$. As a consequence, $S_2 \defined \states{\KB}{\RM_1} = S_1$ and so the procedure terminates here with $\RM^*_{\KB}$ as the ranked interpretation in which all valuations in $S_1$ ($\{\lnot\fly,\lnot\p,\lnot\rob\}$ and $\{\fly,\lnot\p,\lnot\rob\}$) have rank $0$ and all other valuations have rank $\infty$. 
See Figure~\ref{Fig:LM} for the ranked interpretations generated by this example. \hfill \qed
\end{example}

\begin{figure}[h]
\begin{center}
\scalebox{0.8}{
$\RM_0$~\begin{tabular}{ | c|c| } 
 \hline
 $0$ & $
 \{\lnot\fly,\lnot\p,\lnot\rob\}$, 
 $\{\lnot\fly,\lnot\p,\rob\}$, 
 $\{\lnot\fly,\p,\lnot\rob\}$, 
 $\{\lnot\fly,\p,\rob\}$, 
 $\{\fly,\lnot\p,\lnot\rob\}$,
 $\{\fly,\lnot\p,\rob\}$,
 $\{\fly,\p,\lnot\rob\}$,
 $\{\fly,\p,\rob\}$
 \\ 
\hline
\end{tabular}
}
\end{center}

\begin{center}
\scalebox{0.8}{
$\RM_1$~\begin{tabular}{ | c|c| } 
 \hline
 $1$ & 
 $\{\lnot\fly,\lnot\p,\rob\}$, 
 $\{\lnot\fly,\p,\rob\}$, 
 $\{\fly,\lnot\p,\lnot\rob\}$,
 $\{\fly,\lnot\p,\rob\}$,
 $\{\fly,\p,\lnot\rob\}$,
 $\{\fly,\p,\rob\}$
 \\ 
 \hline
 $0$ & 
 $\{\lnot\fly,\lnot\p,\lnot\rob\}$, 
 $\{\fly,\lnot\p,\lnot\rob\}$
 \\ 
 \hline
\end{tabular}
}
\end{center}

\begin{center}
\scalebox{0.8}{
$\RM^*_{\KB}$ \begin{tabular}{|c|c|} 
 \hline
 $\infty$ & 
 $\{\lnot\fly,\lnot\p,\rob\}$, 
 $\{\lnot\fly,\p,\rob\}$, 
 $\{\fly,\lnot\p,\lnot\rob\}$,
 $\{\fly,\lnot\p,\rob\}$,
 $\{\fly,\p,\lnot\rob\}$,
 $\{\fly,\p,\rob\}$
 \\ 
 \hline
 $0$ & 
 $\{\lnot\fly,\lnot\p,\lnot\rob\}$, 
 $\{\fly,\lnot\p,\lnot\rob\}$
 \\ 
 \hline
\end{tabular}
}
\end{center}

\begin{center}
\scalebox{0.8}{
$\RM^*_{\KB}$ with the valuations of rank~$\infty$ omitted:~\begin{tabular}{ | c|c| } 
 \hline
 $0$ & 
 $\{\lnot\fly,\lnot\p,\lnot\rob\}$, 
 $\{\fly,\lnot\p,\lnot\rob\}$
 \\ 
 \hline
\end{tabular}
}
\end{center}
\caption{The ranked interpretations generated in Example \ref{rceg}.}\label{Fig:LM}
\end{figure}

%

We now proceed to show that: ({\em i})~the algorithm always terminates if $\KB$ is finite; ({\em ii}) the ranked model $\RM^*_{\KB}$ it returns is a ranked model of $\KB$, and ({\em iii}) for any other ranked model~$\RM$ of $\KB$, we have $\RM^*_{\KB} \mpref{\LM} \RM$. We know the following about ({\em i})~and~({\em ii}):

\begin{restatable}{lemma}{TerminationAndSoundness}\label{Lemma:TerminationAndSoundness}
The following hold for each $i \geq 0$:
\begin{enumerate}
\item $S_{i} \subseteq S_{i+1}$, \ie, $[S_0 \subseteq S_1$ and, for all $i \geq 0$, $\states{\KB}{\RM_i} \subseteq \states{\KB}{\RM_{i+1}}]$;
\item For all $v_1, v_2 \in \U$, if $\RM_i(v_1) < \RM_i(v_2)$, then $v_1 \in \states{\KB}{\RM_i}$;
\item $\RM_i$ is a ranked interpretation.
\end{enumerate}
\end{restatable}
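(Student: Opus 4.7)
The plan is to establish all three claims simultaneously by induction on~$i$, invoking the normal form (Theorem~\ref{Theorem:NormalForm}) at the crucial point. The base case $i = 0$ is immediate: $S_0 = \emptyset$ gives~(1), the constant ranking $\RM_0 \equiv 0$ makes~(2) vacuous, and $\RM_0$ is a ranked interpretation by inspection. For the inductive step, I would first derive an explicit rank decomposition of $\RM_i$: for $0 \le j \le i - 1$, the valuations mapped to rank~$j$ by $\RM_i$ are exactly $S_{j+1} \setminus S_j$ (with $S_0 = \emptyset$), while those mapped to rank~$i$ are exactly $\U \setminus S_i$. This follows mechanically from $\RM_i = (\RM_{i-1})^1_{S_i}$, the inductive decomposition of $\RM_{i-1}$, and the monotonicity $S_0 \subseteq \cdots \subseteq S_i$ obtained from iterating~(1). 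Convexity (part~(3) at~$i$) then drops out at once, since the algorithm only reached step~$i$ through iterations in which $S_{j+1} \ne S_j$, which forces every layer $S_{j+1} \setminus S_j$ to be non-empty. Part~(2) at~$i$ is equally immediate: $\RM_i(v_1) < \RM_i(v_2)$ pushes $v_1$ off the top rank~$i$, so $v_1 \in S_i \subseteq S_{i+1}$, the last inclusion being part~(1) at step~$i$, which I prove next.

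The substantive work is part~(1) at step~$i$, namely $S_i \subseteq \states{\KB}{\RM_i}$. Pick $v \in S_i$ and, by Theorem~\ref{Theorem:NormalForm}, assume $\alpha \in \KB$ is in normal form $\bigwedge_{k \le t} \typ\theta_k \limp \bigl(\phi \vee \bigvee_{m \le s} \typ\psi_m\bigr)$. If $v \not\sat_{\RM_i} \typ\theta_k$ for some~$k$ there is nothing to show, so suppose $v \sat_{\RM_i} \typ\theta_k$ for every~$k$. The first move is to transfer this back to $\RM_{i-1}$: for any $\theta_k$-valuation~$u$, either $u \in S_i$, in which case $\RM_i$-minimality of~$v$ combined with $\RM_i = \RM_{i-1}$ on $S_i$ yields $\RM_{i-1}(u) \ge \RM_{i-1}(v)$, or $u \notin S_i$, in which case the inductive hypothesis for~(2) at step $i-1$ forbids $\RM_{i-1}(u) < \RM_{i-1}(v)$ (otherwise $u$ would itself have to lie in $S_i$). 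Thus $v \sat_{\RM_{i-1}} \typ\theta_k$ for all~$k$, and since $v \in S_i = \states{\KB}{\RM_{i-1}}$ gives $v \sat_{\RM_{i-1}} \alpha$, we obtain either $v \sat \phi$ (propositional and thus invariant across rankings) or $v \sat_{\RM_{i-1}} \typ\psi_m$ for some~$m$. In the latter case, bumping only the non-$S_i$ valuations up by~$1$ can only raise the rank of any competing $\psi_m$-valuation, so $v$ remains $\RM_i$-minimal among them. Either way, $v \sat_{\RM_i} \alpha$.

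The main obstacle is precisely this transfer of $\typ$-satisfaction across the operation $\RM \mapsto \RM^1_{S_i}$: a priori, pushing the non-$S_i$ valuations up by one could manufacture a new $\theta_k$-minimum at the bumped layer and invalidate $v \sat_{\RM_i} \typ\theta_k$; the inductive use of~(2) at step $i-1$---which says that every strictly non-maximal valuation of $\RM_{i-1}$ already lies in $S_i$---is exactly the lever that blocks this. Once that is secured, convexity and the strict-inequality case of~(2) follow cleanly from the rank decomposition of $\RM_i$.
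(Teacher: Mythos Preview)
Your proof is correct and follows the same overall scheme as the paper's: simultaneous induction on~$i$ with a case analysis on the normal form of each $\alpha\in\KB$, where the crucial move---transferring $\typ$-satisfaction across the step from $\RM_{i-1}$ to $\RM_i$---rests on the inductive instance of Item~2. The main organisational difference is your explicit rank decomposition of $\RM_i$ (layers $S_{j+1}\setminus S_j$ for $j<i$, top layer $\U\setminus S_i$), which makes Items~2 and~3 drop out at once; the paper instead argues these two items locally from the one-step relation $\RM_i=(\RM_{i-1})^1_{S_i}$ and the inductive hypothesis. For Item~1 the two arguments are contrapositives of each other: the paper cases on why $v$ satisfies $\alpha$ in the \emph{earlier} model and shows that failure of $\typ\theta_k$ propagates forward, whereas you case on the antecedent in the \emph{later} model and show that satisfaction of $\typ\theta_k$ propagates backward---both forms are valid and both hinge on exactly the same use of Item~2 at step $i-1$.
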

\begin{proof}
See Appendix~\ref{Proof:TerminationAndSoundness}.
\end{proof}
\myskip

From Item~1 in Lemma~\ref{Lemma:TerminationAndSoundness} above, we know the algorithm terminates if $\KB$ is finite, since it generates a sequence of ranked  interpretations (by Item~3) in which the set of valuations satisfying $\KB$ increases monotonically from one ranked interpretation to the next. Since each of these is finite, and since there is a finite number of valuations, the stopping criterion in Line~7 of the algorithm is guaranteed to occur eventually. 

To show that the algorithm returns a ranked model of $\KB$ it suffices to show the following.

\begin{restatable}{lemma}{IsModelOfKB}\label{Lemma:IsModelOfKB}
For every $i>0$, $(\RM_i)^{\infty}_{S_i}$ is a ranked model of $\KB$.
\end{restatable}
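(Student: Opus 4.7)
My plan is to fix an arbitrary sentence $\alpha\in\KB$ and show that every valuation $v\in S_i = \U^{(\RM_i)^{\infty}_{S_i}}$ satisfies $\alpha$ in $(\RM_i)^{\infty}_{S_i}$. By Theorem~\ref{Theorem:NormalForm}, I may assume without loss of generality that $\alpha$ is in normal form $\bigwedge_{j\leq t}\typ\theta_j \limp (\phi \lor \bigvee_{k\leq s}\typ\psi_k)$ with the $\theta_j,\phi,\psi_k$ purely propositional. Since $v\in S_i = \states{\KB}{\RM_{i-1}}$, we know $v\sat\alpha$ in $\RM_{i-1}$, so the task reduces to showing that satisfaction of $\alpha$ at $v$ is preserved when passing from $\RM_{i-1}$ to $(\RM_i)^{\infty}_{S_i}$.

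The two structural facts I would establish up front are: (a) for every $v\in S_i$, the rank of $v$ in $(\RM_i)^{\infty}_{S_i}$ coincides with its rank in $\RM_{i-1}$ (since $\RM_i$ only shifts valuations outside $S_i$, and the $\infty$-restriction only affects those same valuations); and (b) by Item~2 of Lemma~\ref{Lemma:TerminationAndSoundness}, if $v'\pref_{\RM_{i-1}}v$ for any $v\in\U$ then $v'\in S_i$. Together (a) and (b) imply that for $v,v'\in S_i$, $v'\pref v$ in $(\RM_i)^{\infty}_{S_i}$ holds \emph{exactly} when $v'\pref v$ in $\RM_{i-1}$; moreover, any strict $\pref_{\RM_{i-1}}$-predecessor of $v$ survives as a possible valuation in $(\RM_i)^{\infty}_{S_i}$.

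Then I would do a simple case analysis on how $v$ satisfies $\alpha$ in $\RM_{i-1}$. If $v\not\sat\typ\theta_j$ in $\RM_{i-1}$ for some $j$, then either $v\not\sat\theta_j$ (a purely propositional fact, hence preserved) or some $v'\sat\theta_j$ with $v'\pref_{\RM_{i-1}}v$; by (b), $v'\in S_i$, and by (a), $v'$ still $\pref$-beats $v$ in $(\RM_i)^{\infty}_{S_i}$, so $v\not\sat\typ\theta_j$ there too. If $v\sat\phi$, the disjunct is trivially preserved. If $v\sat\typ\psi_k$ in $\RM_{i-1}$, then $v\sat\psi_k$ is preserved, and any purported $v'\sat\psi_k$ with $v'\pref v$ in $(\RM_i)^{\infty}_{S_i}$ would force $v'\in S_i$ (otherwise $v'$ has rank $\infty$), and by (a) we would recover $v'\pref_{\RM_{i-1}}v$, contradicting the typicality of $v$ in $\RM_{i-1}$.

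The main obstacle is not really a deep one but a bookkeeping one: one has to verify carefully that the rank of valuations in $S_i$ is preserved across the three objects $\RM_{i-1}$, $\RM_i$, and $(\RM_i)^{\infty}_{S_i}$, and that Item~2 of Lemma~\ref{Lemma:TerminationAndSoundness} gives exactly the right ``predecessors are safe'' guarantee. Once these two points are isolated, the case analysis over the normal form of $\alpha$ is essentially mechanical, and the reduction to normal form via Theorem~\ref{Theorem:NormalForm} ensures that no nested occurrence of $\typ$ complicates matters.
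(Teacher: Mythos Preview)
Your proposal is correct and follows essentially the same route as the paper's proof: both reduce to normal form, use that $v\in S_i=\states{\KB}{\RM_{i-1}}$ to get satisfaction of $\alpha$ in $\RM_{i-1}$, and then perform the same three-case analysis, invoking Item~2 of Lemma~\ref{Lemma:TerminationAndSoundness} to guarantee that any $\pref_{\RM_{i-1}}$-predecessor survives into $S_i$. Your explicit isolation of the rank-preservation fact (a) is slightly more detailed than the paper's presentation, but the argument is the same.
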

\begin{proof}
See Appendix~\ref{Proof:IsModelOfKB}.
\end{proof}
\myskip

So, at each stage of the algorithm, the current ranked interpretation, when those valuations not satisfying $\KB$ are excluded, forms a ranked model of $\KB$. Since the  output $\RM^*_{\KB}$ takes precisely this form we have the following result.

\begin{restatable}{proposition}{Prop:SatisfiesKB}\label{Prop:SatisfiesKB}
$\RM^*_{\KB}\sat \bigwedge \KB$.  
\end{restatable}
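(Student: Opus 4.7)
The plan is to obtain the result as a direct consequence of the lemmas already in place, essentially by reading off what the algorithm returns at the iteration in which it halts. First I would verify that the algorithm actually terminates so that there is a well-defined output to analyse: by Item~1 of Lemma~\ref{Lemma:TerminationAndSoundness} the sequence $S_0 \subseteq S_1 \subseteq S_2 \subseteq \cdots$ is monotonically increasing in a finite set $\U$, so there must be a least index $i \geq 1$ for which $S_i = S_{i-1}$, and at this point the loop exits and sets $\RM^*_{\KB} \defined (\RM_{i-1})^{\infty}_{S_i}$.

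Next I would split into two cases according to this terminating index. In the edge case $i=1$, we have $S_1 = S_0 = \emptyset$, which means no valuation satisfies $\KB$ with respect to $\RM_0$; then $\RM^*_{\KB}=(\RM_0)^{\infty}_{\emptyset}$ has $\U^{\RM^*_{\KB}}=\emptyset$, and any such ranked interpretation vacuously satisfies every PTL sentence (as noted in Section~\ref{BackgroundPTL}), so in particular $\RM^*_{\KB}\sat \bigwedge\KB$.

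In the principal case $i > 1$, the exit condition $S_i = S_{i-1}$ rewrites the output as $\RM^*_{\KB} = (\RM_{i-1})^{\infty}_{S_{i-1}}$. I would then invoke Lemma~\ref{Lemma:IsModelOfKB} with the index $i-1 > 0$ to conclude that $(\RM_{i-1})^{\infty}_{S_{i-1}}$ is a ranked model of $\KB$, and hence $\RM^*_{\KB} \sat \bigwedge \KB$.

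There is essentially no obstacle here; the only mild subtlety is bookkeeping about which index is fed into Lemma~\ref{Lemma:IsModelOfKB} once the loop halts, and making sure the unsatisfiable-$\KB$ case (where the loop exits immediately at $i=1$) is handled separately since Lemma~\ref{Lemma:IsModelOfKB} is only stated for $i>0$.
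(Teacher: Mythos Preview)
Your proposal is correct and follows essentially the same approach as the paper: the paper's proof consists of the single sentence ``Follows from Lemma~\ref{Lemma:IsModelOfKB} and the construction of $\RM^*_{\KB}$,'' and you have simply unpacked this, supplying the termination argument and the index bookkeeping (including the separate treatment of the $i=1$ edge case) that the paper leaves implicit.
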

\begin{proof}
Follows from Lemma~\ref{Lemma:IsModelOfKB} and the construction of $\RM^*_{\KB}$.
\end{proof}
\myskip

Next we want to show that for any other ranked model~$\RM$ of~$\KB$, we have $\RM^{*}_{\KB} \mpref{\LM} \RM$. 
\begin{restatable}{lemma}{LayerInclusion}\label{Lemma:LayerInclusion}
Let $\RM^{*}_{\KB} \defined (L_0, \ldots, L_{n-1},L_{\infty})$ and let $\RM \defined (M_0, \ldots, M_{n-1},M_{\infty})$ be any other ranked model of~$\KB$. 
Let $i\in\{0,\ldots,n-1\}$. If $L_j = M_j$ for all $j<i$, then $M_i \subseteq L_i$.
\end{restatable}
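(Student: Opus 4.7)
The plan is to compare, under the hypothesis $L_j = M_j$ for all $j < i$, the algorithm's intermediate interpretation $\RM_i$ with the given model $\RM$, and to use the normal form of $\KB$-sentences to transfer satisfaction from $\RM$ to $\RM_i$. The first step is to establish, by a short induction on $j$, that the hypothesis together with the identity $L_j = S_{j+1} \setminus S_j$ forces $S_j = \bigcup_{k<j} M_k$ for every $j \leq i$ for which the algorithm actually computes $S_j$. A direct inspection of the updates $\RM_k \defined (\RM_{k-1})^{1}_{S_k}$ then shows that $\RM_i$ is obtained from $\RM$ by preserving the ranks of all valuations of rank strictly less than $i$ in $\RM$ (these are exactly the elements of $S_i$) and collapsing every valuation of rank $\geq i$ in $\RM$, including the $\infty$-valuations, down to rank $i$ in $\RM_i$. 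Picking $v \in M_i$, so that $v$ has rank $i$ in both $\RM$ and $\RM_i$ and lies outside $S_i$, it then remains to show $v \in S_{i+1} = \states{\KB}{\RM_i}$.

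To do this, I would apply Theorem~\ref{Theorem:NormalForm} and work with each $\alpha \in \KB$ rewritten as a conjunction of sentences in normal form $\bigwedge_{j\leq t}\typ\theta_j \limp (\phi \vee \bigvee_{k \leq s}\typ\psi_k)$, where the $\theta_j,\phi,\psi_k$ are purely propositional. The key observation is that, for any propositional $\beta$ and our fixed $v$ of rank $i$, we have $v \sat \typ\beta$ in $\RM_i$ iff $v \sat \typ\beta$ in $\RM$: both conditions reduce to $v\sat\beta$ together with the absence of a rank-$<i$ valuation satisfying $\beta$, and those rank-$<i$ valuations (together with their ranks) coincide in $\RM$ and $\RM_i$. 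Combined with the preservation of propositional truth for $\phi$ and with $\RM \sat \alpha$, this transfers $v\sat\alpha$ from $\RM$ to $\RM_i$; doing this for every $\alpha \in \KB$ yields $v \in S_{i+1}\setminus S_i = L_i$.

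The main obstacle I anticipate is handling the case where the algorithm has terminated before computing $S_{i+1}$, so that $L_i = \emptyset$. If the first empty $L_{j^*}$ occurs at some $j^* < i$, then the hypothesis gives $M_{j^*} = L_{j^*} = \emptyset$, and the convexity of $\RM$ (Definition~\ref{Def:RankedInterpretation}) then forces $M_k = \emptyset$ for every finite $k \geq j^*$, in particular $M_i = \emptyset$. If instead $L_i$ itself is the first empty level, meaning termination occurred with $S_{i+1} = S_i$, then the main argument above still applies and would place any hypothetical $v \in M_i$ into $S_{i+1} \setminus S_i = \emptyset$, a contradiction; hence $M_i = \emptyset \subseteq L_i$ in this subcase as well.
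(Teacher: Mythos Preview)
Your proposal is correct and follows essentially the same route as the paper's proof: both identify the algorithm's intermediate interpretation whose rank-$<i$ layers coincide with those of $\RM$ under the hypothesis, and then use the normal form to transfer local satisfaction of each $\alpha\in\KB$ at $v$ via the observation that, for propositional $\beta$ and a valuation $v$ of rank $i$, the truth of $\typ\beta$ at $v$ depends only on the rank-$<i$ layers. Your explicit treatment of the termination edge cases and your direct comparison with $\RM$ (rather than with a truncated interpretation $(M_0,\ldots,M_i,\emptyset)$ as in the paper) are, if anything, a bit cleaner than the paper's write-up.
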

\begin{proof}
See Appendix~\ref{Proof:LayerInclusion}.
\end{proof}
\myskip

From this lemma we can state: 

\begin{restatable}{proposition}{Prop:MorePreferred}\label{Prop:MorePreferred}
Consider any $\KB$ and let $\RM$ be a ranked model of $\KB$. Then $\RM^{*}_{\KB} \mpref{\LM} \RM$.
\end{restatable}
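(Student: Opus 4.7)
The proof plan is essentially a direct application of Lemma~\ref{Lemma:LayerInclusion} combined with the definition of $\mpref{\LM}$ (Definition~\ref{Def:rel_LM}). Fix an arbitrary ranked model $\RM$ of $\KB$ and write $\RM^*_{\KB} = (L_0, \ldots, L_{n-1}, L_\infty)$ and $\RM = (M_0, \ldots, M_{n-1}, M_\infty)$ as described in Section~\ref{Preliminaries} (padding with empty cells so that both partitions have the same finite number $n$). There are two cases to consider, exactly matching the two disjuncts of Definition~\ref{Def:rel_LM}.

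In the first case, $L_i = M_i$ for every $i \in \{0,\ldots,n-1,\infty\}$; then $\RM^*_{\KB} \mpref{\LM} \RM$ holds trivially from the first disjunct of the definition. In the second case, there is a smallest index $j \in \{0,\ldots,n-1,\infty\}$ at which $L_j \neq M_j$. I first argue that such a $j$ must be finite: since the partitions agree below $j$ and the total number of valuations is the same, the infinite-rank cells must also agree if all finite cells agree, so the smallest differing index cannot be $\infty$. Hence $j \in \{0,\ldots,n-1\}$ and $L_i = M_i$ for all $i < j$. By Lemma~\ref{Lemma:LayerInclusion}, this gives $M_j \subseteq L_j$, and combined with $L_j \neq M_j$ we obtain the strict inclusion $M_j \subsetneq L_j$. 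In particular $L_j \supseteq M_j$ at the smallest index at which they differ, so the second disjunct of Definition~\ref{Def:rel_LM} is satisfied and $\RM^*_{\KB} \mpref{\LM} \RM$.

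The argument is essentially bookkeeping once Lemma~\ref{Lemma:LayerInclusion} is in hand, so there is no serious obstacle here; the substantive work has already been done in establishing the layer-inclusion lemma (which is what shows that the greedy construction in Algorithm~\ref{LMminimal} cannot be beaten layer by layer). The only minor care required is the edge-case handling of $L_\infty$ versus the finite cells, which is why I explicitly observe that the first index of disagreement is finite before invoking the lemma.
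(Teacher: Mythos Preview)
Your proof is correct and follows exactly the approach the paper intends: the paper simply states that the proposition follows from Lemma~\ref{Lemma:LayerInclusion}, and you have spelled out the straightforward case analysis against Definition~\ref{Def:rel_LM}, including the small but necessary observation that the first index of disagreement must be finite so that the lemma applies.
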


That Algorithm~\ref{LMminimal} runs in time that is (singly) exponential in the size of the input knowledge base~$\KB$ whenever $\KB$ is finite is not hard to see. Let $|\KB|=k$ and $|\Prp_{\KB}|=j$. The procedure starts by computing the universe~$\U$ of all valuations for the vocabulary~$\Prp_{\KB}$, and therefore we have~$|\U|=2^{j}$. Next, in the first round of the loop, each sentence in~$\KB$ has to be checked against all of the exponentially many valuations in~$\U$, which amounts to~$k\times2^{j}$ verifications. In the worst-case scenario, only one valuation is kept at level~0, with all the others moved up to level~1. In the next round, each sentence in~$\KB$ has to be checked against the $2^{j}-1$ valuations at level~1, but also against the only valuation at level~0, because the truth of $\typ$-sentences in a model also depends on those valuations that are lower down in the model. This amounts to~$k\times2^{j}$ verifications, which in the worst case will again result in a single valuation kept at level~1 with all the~$2^{j}-2$ ones moved up to level~2, and a number of~$k\times2^{j}$ checks to be performed in the next round. By repeating this argument one can see that, in the worst case, the algorithm will have built a ranked interpretation consisting of~$2^{j}$ layers, each one containing a single valuation, i.e., a linear ordering on the~$2^{j}$ valuations. This process will have involved~$2^{j}$ runs, each run requiring~$k\times2^{j}$ valuation checks to create a new layer. It remains to know the cost of checking whether a sentence is satisfied by a valuation in a ranked model. In the first run of the loop, namely when there is a single layer, since the preference relation at this stage is empty, each of such verifications amounts to a propositional verification, which is a polynomial-time task. From the second run of the loop onward, \ie, when truth depends on the lower layers, we have that all valuations at the lower layers have to be inspected, which in the worst case amounts to~$m\times2^{j}$ checks to be performed, with~$m$ the number of sub-formulas of the one being checked. Putting the results together, we have that in the worst case there are a maximum of $2^{j}$ runs of the main loop, each with $k\times2^{j}$ checks, and each of such valuation checks taking at most~$m'\times2^{j}$ operations, with~$m'$ the number of sub-formulas in~$\KB$, \ie, $m'=2^{\ell}$, for some~$\ell$. Hence the algorithm runs in~$2^{j}\times (k\times2^{j})\times(2^{\ell}\times2^{j})=k\times2^{3j+\ell}$, and is therefore in ExpTime.
\myskip

We are now in a position to define our first form of entailment for PTL. 
\begin{definition}[LM-entailment]\label{Def:LM-Entailment}
Let $\KB\subseteq\typup{\Lang}$ and $\alpha\in\typup{\Lang}$. We say $\KB$ \textbf{LM-entails} $\alpha$, denoted $\KB\NMentails_{\LM}\alpha$, if $\RM^{*}_{\KB}\sat\alpha$. Its corresponding consequence operator is defined as $\Cn{\LM}{\KB} \defined \{\alpha \in  \typup{\Lang} \mid \RM^{*}_{\KB} \sat \alpha\}$. 
\end{definition}

The next result outlines which properties from the previous section are satisfied by $\Cn{\LM}{\cdot}$.

\begin{restatable}{theorem}{Thm:LM-properties}{}\label{Thm:LM-properties}
$\Cn{\LM}{\cdot}$ satisfies {\bf P1}--{\bf P7}, {\bf {\bf P9}}, and {\bf P10}, but {\bf not}~{\bf P8}.
\end{restatable}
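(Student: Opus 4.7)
My plan is to verify each property in turn, leaning hard on the fact that the output $\RM^*_{\KB}$ of Algorithm~\ref{LMminimal} is a ranked model of $\KB$ (Proposition~\ref{Prop:SatisfiesKB}) that is the \emph{unique} $\mpref{\LM}$-minimum element among all ranked models of $\KB$ (Proposition~\ref{Prop:MorePreferred}, together with the fact that $\mpref{\LM}$ is a partial order). Once uniqueness is fixed, most of the properties follow cleanly.

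The easy properties come first. {\bf P1} (Inclusion) is immediate from Proposition~\ref{Prop:SatisfiesKB}. {\bf P6} (Single Model) is literally the definition of $\Cn{\LM}{\KB}$, with $\RM^*_{\KB}$ as witness; {\bf P5} (Conditional Rationality) then drops out via Observation~\ref{Theorem:RepResultTwiddleFlatBar}. {\bf P3} (Ampliativeness) is immediate since $\alpha \in \Cn{0}{\KB}$ means $\alpha$ holds in \emph{every} ranked model of $\KB$, $\RM^*_{\KB}$ included. For {\bf P2} (Cumulativity), if $\KB \subseteq \KB' \subseteq \Cn{\LM}{\KB}$ then every element of $\KB'$ is satisfied by $\RM^*_{\KB}$, so $\RM^*_{\KB}$ is a ranked model of $\KB'$; conversely every ranked model of $\KB'$ is also a ranked model of $\KB$, so minimality on both sides forces $\RM^*_{\KB'} \mpref{\LM} \RM^*_{\KB} \mpref{\LM} \RM^*_{\KB'}$, hence $\RM^*_{\KB} = \RM^*_{\KB'}$ and $\Cn{\LM}{\KB} = \Cn{\LM}{\KB'}$. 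For {\bf P7} (Respects Rational Closure), I note that when $\KB$ is a (propositional) conditional knowledge base, $\RM^*_{\KB}$ is, by construction and uniqueness, exactly the LM-minimum $\RM^{\mathrm{rc}}(\KB)$, so Proposition~\ref{Prop:LMisRC} yields the required equality of induced conditional relations. {\bf P4} (Defeasibility) follows from {\bf P7} plus the non-monotonicity of rational closure, and {\bf P9} (Conditional Strict Entailment) follows from {\bf P7} via the argument given in the excerpt just after Proposition~\ref{prop:strict_cond}.

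The most delicate positive property is {\bf P10}. The $(\Leftarrow)$ direction is just {\bf P3}. For the $(\Rightarrow)$ direction, suppose $\RM^*_{\KB} \sat \typ\top \limp \alpha$ with $\alpha \in \Lang$; writing $\RM^*_{\KB} = (L_0,\ldots,L_{n-1},L_\infty)$, this means $L_0 \subseteq \{v \mid v \sat \alpha\}$ (since $\alpha$ is propositional its truth at a valuation does not depend on the surrounding ranked interpretation). For an arbitrary ranked model $\RM = (M_0, \ldots, M_{n-1}, M_\infty)$ of $\KB$, Lemma~\ref{Lemma:LayerInclusion} applied at $i = 0$ (with the vacuous hypothesis for indices $j < 0$) gives $M_0 \subseteq L_0$, hence $M_0 \subseteq \{v \mid v \sat \alpha\}$, so $\RM \sat \typ\top \limp \alpha$. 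Therefore $\typ\top \limp \alpha \in \Cn{0}{\KB}$.

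Finally, the failure of {\bf P8} is forced by the Impossibility Theorem~\ref{Impossibility}: since $\Cn{\LM}{\cdot}$ has just been shown to satisfy {\bf P1}, {\bf P2}, {\bf P3}, {\bf P5}, and {\bf P10}, it cannot additionally satisfy {\bf P8}. An explicit witness is the knowledge base used in that proof, $\KB = \{\typ\top \limp p,\ \typ\neg p \limp \typ q\}$: running Algorithm~\ref{LMminimal} lifts the two $\neg p$-valuations to rank $\infty$ (after the first round already only the $p$-valuations satisfy $\KB$ w.r.t.\ $\RM_0$, and this set is stable at the next step), so $\RM^*_{\KB} \sat p$ and $p \in \Cn{\LM}{\KB}$, whereas the 2-rank model $(\{\{p,\neg q\}\},\{\{\neg p, q\}\})$ of $\KB$ shows $p \notin \Cn{0}{\KB}$. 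The main obstacle in the proof is really {\bf P10}, where one must be careful to invoke Lemma~\ref{Lemma:LayerInclusion} precisely at the rank-$0$ level and exploit the propositionality of $\alpha$; everything else is a routine application of the minimum-model characterisation of $\RM^*_{\KB}$.
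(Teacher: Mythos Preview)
Your proof is correct and follows essentially the same route as the paper's: both rely on Propositions~\ref{Prop:SatisfiesKB} and~\ref{Prop:MorePreferred} to establish that $\RM^*_{\KB}$ is the unique $\mpref{\LM}$-minimum model, from which {\bf P1}--{\bf P7}, {\bf P9} follow by the same chain of implications you use, and your {\bf P10} argument via Lemma~\ref{Lemma:LayerInclusion} at level~$0$ is exactly the content of the paper's appeal to LM-minimality (``the lower layer of $\RM^*_{\KB}$ must contain~$v$'').

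The one genuine difference is the failure of {\bf P8}. The paper exhibits a concrete counterexample using Example~\ref{rceg} (there $\lnot\p \in \Cn{\LM}{\KB}$ but $\lnot\p \notin \Cn{0}{\KB}$, witnessed by the model $\RM_2$ of Example~\ref{ex02}). You instead first derive the failure \emph{non-constructively} from Theorem~\ref{Impossibility}---since you have just established {\bf P1}, {\bf P2}, {\bf P3}, {\bf P5}, {\bf P10}, the impossibility result forces {\bf P8} to fail---and only then supply an explicit witness, namely the knowledge base $\{\typ\top\limp p,\ \typ\neg p\limp\typ q\}$ from that proof. Your witness is correct (your trace of the algorithm is right: $S_1=\{\{p,q\},\{p,\neg q\}\}=S_2$, so both $\neg p$-valuations end at rank~$\infty$), and the soft argument is an elegant shortcut the paper does not take. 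The paper's counterexample has the advantage of reusing material already developed in detail; yours has the advantage of making the link between Theorem~\ref{Impossibility} and Theorem~\ref{Thm:LM-properties} explicit.
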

\begin{proof}
For {\bf P1}, Proposition~\ref{Prop:SatisfiesKB} guarantees that $\RM^{*}_{\KB}$ is a model of $\KB$. About~{\bf P2}, by Proposition~\ref{Prop:MorePreferred}, $\RM^{*}_{\KB}$ is the LM-minimum model of $\KB$. If $\KB\subseteq\KB'\subseteq \Cn{\LM}{\KB}$, then $\Mod{\KB'}\subseteq\Mod{\KB}$ and  $\RM^{*}_{\KB}\in \Mod{\KB'}$; consequently $\RM^{*}_{\KB}$ must also be  the LM-minimum model of $\KB'$. For {\bf P3}, note that $\RM^{*}_{\KB}$ is a ranked model of $\KB$ (Lemma~\ref{Lemma:TerminationAndSoundness}, Item~3, plus Proposition~\ref{Prop:SatisfiesKB}), and so if $\alpha\in\Cn{0}{\KB}$, then $\alpha\in\RM^{*}_{\KB}$. 
{\bf P4} is an immediate consequence of the satisfaction of {\bf P7}.\footnote{%
For this conclusion we need the requirement (specified in Section \ref{Preliminaries})
that $\Prp$ contains at least two elements.}
{\bf P5} is an immediate consequence of the satisfaction of {\bf P6}. The latter holds by definition of $\Cn{\LM}{\KB}$. For {\bf P7}, see Section~\ref{RationalClosure}. {\bf P9} is an immediate consequence of the satisfaction of {\bf P7}.

Now consider {\bf P10}. From right to left, it is an immediate consequence of~{\bf P3}. From left to right, assume there is a formula $\typ\top\imp\alpha$ that is in $\Cn{\LM}{\KB}$, but not in $\Cn{0}{\KB}$. It means that there is a ranked model $\RM$ of $\KB$ that has in its lower layer a propositional valuation $v$ \st\ $v\sat\neg\alpha$; but, given that the model~$\RM^{*}_{\KB}$ defining $\Cn{\LM}{\KB}$ is the LM-minimum model of $\KB$, then also the lower layer of $\RM^{*}_{\KB}$ must contain the valuation $v$, against the hypothesis.

Failure of {\bf P8} can be seen in Example~\ref{rceg}. There we have $\lnot\p\in\Cn{\LM}{\KB}$ (there is no penguin) because $\lnot\p$ holds in both valuations occurring in~$\RM^{*}_{\KB}$. Thus LM-entailment forces us to infer $\lnot\p$ from $\KB$. But $\lnot\p\not\in\Cn{0}{\KB}$, because there does exist a ranked model $\RM$ of $\KB$ for which $\states{\p}{\RM}\neq \emptyset$, for instance the model $\RM_2$ appearing in Example~\ref{ex02} below. 
\end{proof}
\myskip

In summary then, LM-entailment satisfies all our postulates, except for Strict Entailment ({\bf P8}). Lest this be seen as a negative result, bear in mind that LM-entailment satisfies Conditional Strict Entailment ({\bf P9}), the weakened version of Strict Entailment, and therefore also Classical Entailment.

In the next section we turn to a form of entailment satisfying Strict Entailment, but at the price of having to forego Conditional Rationality, and therefore the Single Model postulate as well.

\section{PT-entailment}\label{PT-entailment}

In this section we consider another option for entailment based on a version of minimality, and derived from the characterisation of rational closure by Giordano~\etal.~\cite{GiordanoEtAl2012, GiordanoEtAl2015}.
The general idea is to respect the principle of \emph{presumption of typicality} (see Section~\ref{entailsection}), We shall refer to this form of entailment as \emph{Presumption of Typicality} entailment, shortened to \emph{PT-entailment}. Such a principle indicates the way in which  the property (RM) should be satisfied. If we have $\alpha\twiddle\gamma$ in our knowledge base \KB, then, in order to satisfy (RM), we have to add either $\alpha\twiddle\neg\beta$ or $\alpha\wedge\beta\twiddle\gamma$. The presumption of typicality requires that, whenever possible, we prefer the latter (that corresponds to a constrained application of monotonicity) over the former. 
Semantically, given the ranked models of a knowledge base~$\KB$, this corresponds to considering only those models in which every valuation is taken as typical as possible, that is, it is `pushed downward' in the model as much as possible, modulo the satisfaction of~$\KB$. 


In order to identify the interpretations that are necessary for the definition of a notion of entailment, we introduce a preference relation $\mpref{\PT}$ on the set of ranked interpretations that follows directly from the presumption of typicality.

%
%

\begin{definition}[Relation $\mpref{\PT}$]\label{preferred2}
For two ranked interpretations $\RM_1$ and $\RM_2$, $\RM_1\mpref{\PT} \RM_2$ if and only if for every $w\in \U$, $\RM_1(w)\leq\RM_2(w)$. $\RM_1\mprefstrict{\PT} \RM_2$ if and only if $\RM_1\mpref{\PT} \RM_2$ and not $\RM_2\protect{\mpref{\PT}} \RM_1$.
\end{definition}

%

It is easy to check that $\mpref{\PT}$ is a pre-order. Consistent with the principle of presumption of typicality, as a guideline in the choice of the relevant interpretations, the relation $\mpref{\PT}$ can be used  to identify the relevant interpretations for the definition of a notion of entailment: we choose the models of~$\KB$ in which the valuations are presumed to be as typical as possible, that is, the relevant models are those that are in $\min_{\mpref{\PT}} \Mod{\KB}$. Then, $\KB$ entails $\alpha$ if and only if $\alpha$ holds in all the (preferred) models in $\min_{\mpref{\PT}}\Mod{\KB}$. We will sometimes refer to the models in $\min_{\mpref{\PT}}\Mod{\KB}$ as the {\em PT-minimal} models of $\KB$. Note that if $\KB$ is unsatisfiable, it has exactly one PT-minimal model, namely the ranked interpretation $\RM$ for which $\U^{\RM}=\emptyset$. 


If we consider knowledge bases composed only of classical non-monotonic conditionals $\alpha\twiddle\beta$, Giordano et al. have proved that for every satisfiable knowledge base there is a unique PT-minimal model \cite[Theorem 1]{GiordanoEtAl2015}, and that such a PT-minimal model characterizes the rational closure of the knowledge base \cite[Theorem 2]{GiordanoEtAl2015}. Given such results, it is quite immediate to prove that, given a satisfiable conditional knowledge base, its PT-minimal model corresponds to the LM-minimal model.

\begin{proposition}\label{Prop:cond_LM=PT}
Let $\KB$ be a satisfiable conditional knowledge base. A ranked interpretation $\RM$ is $\KB$'s PT-minimal model iff it is $\KB$'s LM-minimal model.
\end{proposition}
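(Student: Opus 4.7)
The plan is to combine the uniqueness of both minimal models for satisfiable conditional knowledge bases with the simple observation that the pointwise preorder $\mpref{\PT}$ refines the lexicographic order $\mpref{\LM}$. First I would invoke the two uniqueness statements already on the table: by Proposition~\ref{Prop:uniqueLM} referenced in Section~\ref{RationalClosure}, for the satisfiable $\KB$ the LM-minimum $\RM^{\mathrm{rc}}(\KB)$ exists and is unique; by Giordano et al.~\cite[Theorem 1]{GiordanoEtAl2015} the PT-minimal model of $\KB$ also exists and is unique. Call the latter $\RM^{\PT}$. Both belong to $\Mod{\KB}$.

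The key intermediate step is to show that $\RM_1 \mpref{\PT} \RM_2$ implies $\RM_1 \mpref{\LM} \RM_2$ for any pair of ranked interpretations. Representing them by common-length partitions $(L_0, \ldots, L_{n-1}, L_\infty)$ and $(M_0, \ldots, M_{n-1}, M_\infty)$, suppose $L_i = M_i$ for all $i < j$ and $L_j \neq M_j$. For any $w \in M_j$, pointwise domination gives $\RM_1(w) \leq \RM_2(w) = j$; if $\RM_1(w) = k < j$ then $w \in L_k = M_k$, whence $\RM_2(w) = k$, contradicting $\RM_2(w) = j$. Hence $\RM_1(w) = j$, showing $M_j \subseteq L_j$, which is exactly the LM-preference condition $L_j \supseteq M_j$.

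The conclusion follows quickly. By PT-minimality of $\RM^{\PT}$ among ranked models of $\KB$, we have $\RM^{\PT} \mpref{\PT} \RM^{\mathrm{rc}}(\KB)$, so by the lemma $\RM^{\PT} \mpref{\LM} \RM^{\mathrm{rc}}(\KB)$. Since $\RM^{\mathrm{rc}}(\KB)$ is the LM-minimum, the converse $\RM^{\mathrm{rc}}(\KB) \mpref{\LM} \RM^{\PT}$ also holds, and antisymmetry of $\mpref{\LM}$ (which is immediate from Definition~\ref{Def:rel_LM}, since the two strict inclusions $L_j \supsetneq M_j$ and $M_j \supsetneq L_j$ at the smallest differing index cannot coexist) forces $\RM^{\PT} = \RM^{\mathrm{rc}}(\KB)$. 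The main obstacle is the intermediate lemma, but it is really a short bookkeeping exercise; once one recognises that pointwise domination automatically propagates to the lexicographic comparison, the rest of the argument is purely formal and rests on the two external uniqueness results.
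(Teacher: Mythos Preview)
Your approach is different from the paper's and, modulo one gap, cleaner. The paper argues that both $\RM^{\mathrm{rc}}(\KB)$ and $\RM^{\PT}$ characterise rational closure of $\KB$ (invoking Proposition~\ref{Prop:LMisRC} and \cite[Theorem 2]{GiordanoEtAl2015}), hence satisfy exactly the same conditionals, and then proves by an induction with characteristic formulas that any two ranked interpretations validating the same conditionals must coincide layer by layer. Your route bypasses rational closure entirely, working purely order-theoretically via the refinement of $\mpref{\LM}$ by $\mpref{\PT}$. That refinement lemma is correct and is in fact exactly the lemma the paper proves later inside Proposition~\ref{LMinmin}; you have essentially anticipated that argument.

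The gap is in the sentence ``By PT-minimality of $\RM^{\PT}$ among ranked models of $\KB$, we have $\RM^{\PT} \mpref{\PT} \RM^{\mathrm{rc}}(\KB)$.'' PT-minimality only says that no model is \emph{strictly} $\mpref{\PT}$-below $\RM^{\PT}$; since $\mpref{\PT}$ is merely a partial order, a unique minimal element need not be a minimum, so this comparison is not automatic from what you have cited. (It happens to be true for conditional $\KB$ because Giordano et al.'s construction actually yields a minimum, but you do not invoke that.) The fix is immediate and uses only what you already have: prove the \emph{strict} version of your lemma ($\RM_1 \mprefstrict{\PT} \RM_2$ implies $\RM_1 \mprefstrict{\LM} \RM_2$, which your argument already gives), conclude that $\RM^{\mathrm{rc}}(\KB)$ is itself PT-minimal (anything strictly $\mpref{\PT}$-below it would be strictly $\mpref{\LM}$-below it, contradicting LM-minimality), and then invoke uniqueness of the PT-minimal model to get $\RM^{\mathrm{rc}}(\KB) = \RM^{\PT}$. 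This is precisely the shape of the paper's later Proposition~\ref{LMinmin} specialised to the conditional case.
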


\begin{proof}
If $\KB$ is a satisfiable conditional knowledge base, then it has a unique LM-minimal model $\RM$ (see Proposition \ref{Prop:uniqueLM}) and a unique PT-minimal model $\RM'$ \cite[Theorem 1]{GiordanoEtAl2015}. $\RM$ and $\RM'$ are equivalent, in the sense that they satisfy exactly the same conditionals, since they both characterise the rational closure of $\KB$ (see Proposition \ref{Prop:LMisRC} here for LM-minimality and the theorem by Giordano and others for PT-minimality \cite[Theorem 2]{GiordanoEtAl2015}). 

In order to show that they are exactly the same model, we just need to prove that whenever two ranked interpretations $\RM$ and $\RM'$ satisfy exactly the same set of conditionals, then they are the same interpretation. Let $\RM=(L_0, \ldots, L_{n-1},L_{\infty})$ and $\RM'=(M_0, \ldots, M_{n-1},M_{\infty})$. 

First of all, we prove $\U^{\RM}=\U^{\RM'}$: let $v\in \U^{\RM}$ and $v\notin \U^{\RM'}$, and let $\overline{v}$ be the characteristic formula of the valuation $v$; we would have $\RM'\sat \overline{v}\twiddle\bot$ and $\RM\not\sat \overline{v}\twiddle\bot$, against the hypothesis that $\RM$ and $\RM'$ satisfy  the same set of conditionals. $\U^{\RM}=\U^{\RM'}$ immediately implies that $L_{\infty}=M_{\infty}$.

We conclude the proof by induction on the rank of the cells below $\infty$. Given a cell $L_i=\{v_1,\ldots,v_n\}$, let $\overline{L_i}\defined(\overline{v_1}\lor\ldots\lor \overline{v_n})$.

Assume $L_0\neq M_0$, that is, w.l.o.g., there is a $v$ s.t. $v\in L_0$ and $v\notin M_0$. That implies $\RM'\sat\top\twiddle \neg \overline{v}$, while $\RM\not\sat\top\twiddle \neg \overline{v}$, against the hypothesis.

Given a number $j\leq (i-1)$, let $L_k=M_k$ for every $k$ s.t. $0\leq k< j$, but $L_j\neq M_j$, that is, w.l.o.g., there is a $v$ s.t. $v\in L_j$ and $v\notin M_j$.  That implies $\RM'\sat\neg(\bigvee_{0\leq k<j}\{\overline{L_k}\})\twiddle \neg \overline{v}$, while $\RM\not\sat\neg(\bigvee_{0\leq k<j}\{\overline{L_k}\})\twiddle \neg \overline{v}$, against the hypothesis. Since all their cells must contain the same valuations, $\RM$ and $\RM'$ are the same model.
\end{proof}

 Despite Proposition \ref{Prop:cond_LM=PT}, given the extra expressive power of~PTL, we obtain the surprising result that the two semantic constructions are not equivalent anymore. Moreover, in the present context, this notion of minimality can give back a number of minimal models, as the following example shows.

\begin{example}\label{ex02}
Consider the knowledge base $\KB$ from Example~\ref{rceg}. Then, one can see that $\min_{\mpref{\PT}}\Mod{\KB}=\{\RM_{1},\RM_{2},\RM_{3}\}$, where:

 



\vspace{0.3cm}

\noindent$\RM_1:$ \hspace{0.6cm}
\begin{tabular}{ | c|c| } 
 \hline
 $0$ & 
 $\{\lnot\fly,\lnot\p,\lnot\rob\}$, 
 $\{\fly,\lnot\p,\lnot\rob\}$, 
 \\ 
 \hline
 \end{tabular}
 \vspace{0.5cm}
 
 \noindent$\RM_2:$ \hspace{0.6cm}
\begin{tabular}{ | c|c| } 
\hline
 $2$ & 
 $\{\fly,\p,\lnot\rob\}$
 \\ 
 \hline
 $1$ & 
  $\{\lnot\fly,\lnot\p,\lnot\rob\}$,
 $\{\lnot\fly,\p,\lnot\rob\}$
 \\ 
 \hline
 $0$ & 
 $\{\fly,\lnot\p,\lnot\rob\}$
 \\ 
 \hline
 
\end{tabular}
 \vspace{0.5cm}

 \noindent$\RM_3:$ \hspace{0.6cm}
\begin{tabular}{ | c|c| } 
\hline
 $2$ & 
 $\{\lnot\fly,\lnot\p,\rob\}$
 \\ 
 \hline
 $1$ & 
  $\{\fly,\lnot\p,\rob\}$,
 $\{\fly,\lnot\p,\lnot\rob\}$
 \\ 
 \hline
 $0$ & 
 $\{\lnot\fly,\lnot\p,\lnot\rob\}$
 \\ 
 \hline
 
\end{tabular}

\end{example}
\myskip

In Example~\ref{ex02}, note that $\RM_1$ is the $\LM$-minimum of $\KB$. In fact, it is easy to check from the characterisation of rational closure in Section~\ref{RationalClosure} and Definition~\ref{preferred2} that the $\LM$-minimum of~$\KB$ is always in $\min_{\mpref{\PT}}\Mod{\KB}$. 

\begin{restatable}{proposition}{LMinmin}\label{LMinmin}
For every knowledge base $\KB$, the $\LM$-minimum of $\KB$ is in $\min_{\mpref{\PT}}\Mod{\KB}$.
\end{restatable}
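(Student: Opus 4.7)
The plan is to argue by contradiction, showing that if $\RM^*_\KB$ were not $\mpref{\PT}$-minimal in $\Mod{\KB}$, then it would not be the $\mpref{\LM}$-minimum of $\Mod{\KB}$ either, violating Proposition~\ref{Prop:MorePreferred}. Concretely, suppose there is some $\RM\in\Mod{\KB}$ with $\RM \mprefstrict{\PT} \RM^*_\KB$, i.e., $\RM(w)\leq\RM^*_\KB(w)$ for every $w\in\U$ with strict inequality somewhere. I will represent both interpretations on a common partition: $\RM^*_\KB = (L_0,\ldots,L_{n-1},L_\infty)$ and $\RM = (M_0,\ldots,M_{n-1},M_\infty)$, where $n$ is the smallest index such that $L_n=M_n=\emptyset$.

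The key step is an inductive claim on the rank index: if $L_k = M_k$ for all $k<j$, then $L_j \subseteq M_j$. This is the heart of the argument. Pick $w\in L_j$, so $\RM^*_\KB(w)=j$. From $\RM \mpref{\PT} \RM^*_\KB$ we get $\RM(w)\leq j$. In the other direction, $w\notin\bigcup_{k<j} L_k$, and by the induction hypothesis $L_k=M_k$ for $k<j$, so $w\notin\bigcup_{k<j} M_k$, giving $\RM(w)\geq j$. Hence $\RM(w)=j$ and $w\in M_j$, proving the claim.

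Now pick the smallest index $j^*$ at which the two partitions differ (such an index exists since $\RM\neq\RM^*_\KB$). The claim above shows that $j^*$ cannot be $\infty$: if the two interpretations agree on all finite cells, then $\U\setminus L_\infty=\U\setminus M_\infty$, so $L_\infty=M_\infty$ too. Therefore $j^*$ is finite, and the claim yields $L_{j^*}\subsetneq M_{j^*}$, i.e., $M_{j^*}\supseteq L_{j^*}$ with strict inclusion at the first differing finite rank. By Definition~\ref{Def:rel_LM}, this means $\RM \mprefstrict{\LM} \RM^*_\KB$, contradicting the fact (Proposition~\ref{Prop:MorePreferred}) that $\RM^*_\KB$ is the $\mpref{\LM}$-minimum of $\Mod{\KB}$.

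The main obstacle, as I see it, is purely bookkeeping: keeping the two partitions aligned under a single index $n$ (as permitted by the discussion following Definition~\ref{Def:RankedInterpretation}) so that the inductive comparison of cells is well-defined, and confirming that disagreement must surface at a \emph{finite} index rather than only at the $\infty$-cell. Once these are handled, the contradiction with $\mpref{\LM}$-minimality is immediate.
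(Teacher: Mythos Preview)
Your proof is correct and follows essentially the same approach as the paper. Both arguments establish the implication $\RM \mprefstrict{\PT} \RM^*_\KB \Rightarrow \RM \mprefstrict{\LM} \RM^*_\KB$ via the same cell-by-cell comparison at the first differing rank; the paper extracts this as a general fact (``$\mprefstrict{\PT}$ implies $\mprefstrict{\LM}$'') before applying it, whereas you run the identical computation inline within the contradiction.
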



\begin{proof}
Consider the definition of the preference relation for $\LM$-minimality.
\[
\begin{array}{rcl}
\RM_1 \mpref{\LM} \RM_2\ \textrm{if and only if} & \textit{either} &
L_i = M_i\ \textrm{for all } i\in\{0,\ldots,n-1,\infty\},\\
& \textit{or} & L_j \supseteq M_j \textrm{ for the smallest } j\geq 0 \textrm{ s.t. }L_j \neq M_j,
\end{array}
\]
where $\RM_1 = (L_0, \ldots, L_{n-1}, L_\infty)$ and $\RM_2 = (M_0, \ldots, M_{n-1}, M_\infty)$.
The result follows from the fact that if $\RM_1\mprefstrict{\PT} \RM_2$ then $\RM_1\mprefstrict{\LM} \RM_2$. 
To see that this holds, assume $\RM_1\mprefstrict{\PT} \RM_2$. Then $\RM_1(w) \leq \RM_2(w)$ for all $w \in \U$, with $\RM_1(w') < \RM_2(w')$ for at least one $w' \in \U$. From the latter, we know we cannot have $L_i = M_i$ for all $i$, so let $j \geq 0$ be minimal such that $L_j \neq M_j$. To show the conclusion $\RM_1\mprefstrict{\LM} \RM_2$ we must show $L_j \supseteq M_j$, so let $u \in M_j$. Then $\RM_2(u) = j$. Since $\RM_1\mprefstrict{\PT} \RM_2$ we know $\RM_1(u) \leq j$. But if $\RM_1(u) = k <j$ then $u \in L_k = M_k$ (by minimality of $j$), contradicting $u \in M_j$. Hence $\RM_1(u) = j$, i.e., $u \in L_j$ as required.

Knowing that $\RM_1\mprefstrict{\PT} \RM_2$ implies $\RM_1\mprefstrict{\LM} \RM_2$, it is easy to conclude our proof. Let $\RM$ be the LM-minimum of $\KB$, but not an element of $\min_{\mpref{\PT}}\Mod{\KB}$. That is,  
there is an $\RM^*\in\Mod{\KB}$ s.t. $\RM^*\mprefstrict{\PT} \RM$, that implies $\RM^*\mprefstrict{\LM} \RM$, thus contradicting the LM-minimality of $\RM$. 
\end{proof}
\myskip

We are now ready for the definition of our second type of entailment.

\begin{definition}[PT-entailment]\label{Def:PT-Entailment}
Let $\KB\subseteq\typup{\Lang}$ and $\alpha\in\typup{\Lang}$. We say $\KB$ \textbf{PT-entails} $\alpha$, denoted $\KB\NMentails_{\PT}\alpha$, if and only if $\min_{\mpref{\PT}}(\Mod{\KB})\subseteq\Mod{\alpha}$.
\end{definition}

Its corresponding consequence operator $\Cn{\PT}{\cdot}$ is inferentially {\em weaker} than $\Cn{\LM}{\cdot}$, since it is defined on a possibly larger set of models.

\begin{restatable}{proposition}{basicpostulates}\label{basicpostulates}
$\Cn{\PT}{\cdot}$ satisfies~{\bf P1}--{\bf P4} and {\bf P7}--{\bf P10}.
\end{restatable}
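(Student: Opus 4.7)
The plan is to handle the postulates essentially in two groups: the structural and monotonicity-style properties (P1, P2, P3, P4, P7, P9) which fall out from basic facts about $\mpref{\PT}$ and earlier results, and the two ``strict'' properties (P8, P10) which require an explicit minimisation argument. Throughout the proof the key enabling fact is that, since $\Prp$ is finite, so is $\U$, so $\mpref{\PT}$-minimisation always terminates: for every $\RM \in \Mod{\KB}$ there is some $\RM' \in \min_{\mpref{\PT}}(\Mod{\KB})$ with $\RM' \mpref{\PT} \RM$. I would state and use this fact up front.

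First I would dispatch the easy cases. \textbf{P1} and \textbf{P3} are immediate: every $\RM \in \min_{\mpref{\PT}}(\Mod{\KB})$ is a ranked model of $\KB$, so anything in $\KB$ or in $\Cn{0}{\KB}$ holds in every PT-minimal model. \textbf{P2} uses the standard cumulativity argument: if $\KB \subseteq \KB' \subseteq \Cn{\PT}{\KB}$, then $\Mod{\KB'} \subseteq \Mod{\KB}$ and $\min_{\mpref{\PT}}(\Mod{\KB}) \subseteq \Mod{\KB'}$; the inclusion $\min_{\mpref{\PT}}(\Mod{\KB}) \subseteq \min_{\mpref{\PT}}(\Mod{\KB'})$ is direct, and the reverse inclusion follows by taking a supposed $\RM \in \min_{\mpref{\PT}}(\Mod{\KB'}) \setminus \min_{\mpref{\PT}}(\Mod{\KB})$, minimising to some $\RM'' \in \min_{\mpref{\PT}}(\Mod{\KB}) \subseteq \Mod{\KB'}$ strictly below $\RM$, contradicting PT-minimality of $\RM$ in $\Mod{\KB'}$.

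For \textbf{P7}, I would split on satisfiability of the conditional knowledge base $\KB$. In the satisfiable case, Proposition \ref{Prop:cond_LM=PT} tells us the unique PT-minimal model of $\KB$ equals the LM-minimum, and Proposition \ref{Prop:LMisRC} then identifies the induced conditional relation with $\twiddle^{\mathrm{rc}}_{\KB}$. In the unsatisfiable case both sides collapse to the total relation (the only PT-minimal model is the one with $\U^{\RM}=\emptyset$, and rational closure is everything). \textbf{P9} is then an immediate consequence of \textbf{P7}, by exactly the argument given in the paper after the statement of \textbf{P9}; and \textbf{P4} follows from \textbf{P7} plus the non-monotonicity of rational closure, witnessed, e.g., by the Tweety-style examples in Section~\ref{entailsection}.

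The substantive work is \textbf{P8} and \textbf{P10}, which I would treat by contrapositive using the minimisation fact. The $(\Leftarrow)$ direction of each is \textbf{P3}. For the $(\Rightarrow)$ direction of \textbf{P8}, let $\alpha \in \Lang$ and suppose $\alpha \notin \Cn{0}{\KB}$; pick $\RM \in \Mod{\KB}$ and $v \in \U^{\RM}$ with $v \nsat \alpha$, and then let $\RM' \in \min_{\mpref{\PT}}(\Mod{\KB})$ with $\RM' \mpref{\PT} \RM$. Since $\mpref{\PT}$ only decreases ranks, $\RM'(v) \leq \RM(v) < \infty$, so $v \in \U^{\RM'}$; because $\alpha$ is purely propositional its truth at $v$ is unchanged, so $\RM' \nsat \alpha$ and hence $\alpha \notin \Cn{\PT}{\KB}$. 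The argument for \textbf{P10} is identical except that $v$ is chosen in the bottom cell of $\RM$ (witnessing $\RM \nsat \typ\top \limp \alpha$), and the inequality $\RM'(v) \leq \RM(v) = 0$ forces $\RM'(v) = 0$, so $v$ still witnesses $\RM' \nsat \typ\top \limp \alpha$. The only delicate point, and the one I would emphasise, is that the definition of $\mpref{\PT}$ is exactly what guarantees that finitely-ranked (resp.\ rank-$0$) valuations are preserved downward through minimisation; any weaker preorder would break this step.
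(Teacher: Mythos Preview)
Your proof is correct and, for \textbf{P1}--\textbf{P4}, \textbf{P7}--\textbf{P9}, follows essentially the same route as the paper (the paper cites Giordano~\etal.\ directly for \textbf{P7} rather than going via Propositions~\ref{Prop:cond_LM=PT} and~\ref{Prop:LMisRC}, but this is the same content repackaged).

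The one genuine difference is \textbf{P10}. You prove it by a direct minimisation argument parallel to your \textbf{P8}: take a witness $v$ with $\RM(v)=0$ and $v\nsat\alpha$ in some model $\RM$ of $\KB$, push $\RM$ down to a PT-minimal $\RM'$, and observe that $\RM'(v)\leq 0$ forces $\RM'(v)=0$, so $v$ still lies in the bottom cell and still falsifies $\typ\top\limp\alpha$. The paper instead argues indirectly: it invokes Proposition~\ref{LMinmin} (the LM-minimum $\RM^{*}_{\KB}$ is always PT-minimal) together with the already-established fact that $\Cn{\LM}{\cdot}$ satisfies \textbf{P10}; so if $\typ\top\limp\alpha\notin\Cn{0}{\KB}$ then $\RM^{*}_{\KB}\nsat\typ\top\limp\alpha$, and since $\RM^{*}_{\KB}\in\min_{\mpref{\PT}}\Mod{\KB}$ this gives $\typ\top\limp\alpha\notin\Cn{\PT}{\KB}$. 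Your argument is more self-contained and does not rely on any of the LM-machinery from Section~\ref{Minimum}; the paper's argument is shorter on the page because it reuses earlier work, and it makes explicit the structural link between the two entailment relations (the LM-minimum sitting inside the PT-minimal set is exactly what transfers \textbf{P10} across).
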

\begin{proof}
\noindent{\bf P1.} $\Cn{\PT}{\KB}$ is defined using only models of~$\KB$.



\noindent{\bf P2.} Since $\KB\subseteq \KB'\subseteq \Cn{\PT}{\KB}$, we have $\min_{\mpref{\PT}}\Mod{\KB})\subseteq\Mod{\Cn{\PT}{\KB}}\subseteq\Mod{\KB'}\subseteq\Mod{\KB}$. It is sufficient to prove that $\min_{\mpref{\PT}}\Mod{\KB'})=\min_{\mpref{\PT}}\Mod{\KB})$. 

Let $\RM$ be a model of $\KB$ and $\KB'$ s.t. $\RM\in \min_{\mpref{\PT}}\Mod{\KB'}$ and $\RM\notin \min_{\mpref{\PT}}\Mod{\KB})$. That is, there must be a model $\RM'$ of $\KB$ s.t. $\RM'\mprefstrict{PT}\RM$ and $\RM'\in \min_{\mpref{\PT}}\Mod{\KB}$. However, since $\min_{\mpref{\PT}}\Mod{\KB})\subseteq\Mod{\KB'}$, $\RM'$ is also a model of $\KB'$ that is PT-preferred to $\RM$, that is, it cannot be the case that $\RM\in \min_{\mpref{\PT}}\Mod{\KB'}$.
Inversely, let $\RM$ be a model of $\KB$ and $\KB'$ s.t. $\RM\in \min_{\mpref{\PT}}\Mod{\KB}$ and $\RM\notin \min_{\mpref{\PT}}\Mod{\KB'})$. That is, there must be a model $\RM'$ of $\KB'$ s.t. $\RM'\mprefstrict{PT}\RM$ and $\RM'\in \min_{\mpref{\PT}}\Mod{\KB'}$. However, since $\KB\subseteq \KB'$, $\RM'$ is also a model of $\KB$ that is PT-preferred to $\RM$, that is, it cannot be the case that $\RM\in \min_{\mpref{\PT}}\Mod{\KB}$.

Hence, for every $\KB,\KB'$ s.t. $\KB\subseteq\KB'\subseteq \Cn{\PT}{\KB}$, it must be $\min_{\mpref{\PT}}\Mod{\KB}=\min_{\mpref{\PT}}\Mod{\KB}$, that implies {\bf P2}.

\noindent{\bf P3.} Every model in $\min_{\mpref{\PT}}{\Mod{\KB}}$ is by definition a ranked model of~$\KB$. Hence if $\alpha\in\Cn{0}{\KB}$, i.e., $\alpha$ is true in all ranked models of $\KB$, then it is true in all ranked models in $\min_{\mpref{\PT}}{\Mod{\KB}}$, i.e., $\alpha\in\Cn{\PT}{\KB}$.

\noindent{\bf P4.} It is an immediate consequence of the satisfaction of {\bf P7}.\footnote{%
As in Theorem \ref{Thm:LM-properties}, for this conclusion we need the requirement (specified in Section \ref{Preliminaries}) that $\Prp$ contains at least two elements.}

\noindent{\bf P7.} See the analagous result by Giordano~\etal.~\cite[Section 2.3.2]{GiordanoEtAl2015}; in particular Theorem 2, that implies that in case of a conditional KB the use of PT-minimality leads to a single minimal model, characterising Rational Closure.

\noindent{\bf P8.} Let $\alpha$ be a propositional formula \st\ $\alpha\notin \Cn{0}{\KB}$: then there is a ranked model $\RM$ of $\KB$ \st\ $\RM(v)<\infty$ for some $v$ \st\ $v\sat\neg\alpha$. Either $\RM$ is a \emph{PT}-minimal model of $\KB$ itself, or there is a \emph{PT}-minimal model $\RM'$ of $\KB$ \st\ $\RM'\mpref{\PT}\RM$; that is, it must be the case that $\RM'(v)<\infty$ for some model $\RM'\in\min_{\mpref{\PT}}{\Mod{\KB}}$, that in turn implies that $\alpha\notin \Cn{\PT}{\KB}$.

\noindent{\bf P9.} It is an immediate consequence of the satisfaction of {\bf P7}, as explained in Section \ref{TowardsEntailment}, immediately after introducing {\bf P9}.

\noindent{\bf P10.} It is a direct consequence of Proposition~\ref{LMinmin} and the satisfaction of {\bf P10} for LM-entailment.
\end{proof}
\myskip

Unfortunately, \emph{Conditional Rationality} ({\bf P5}) is not valid and therefore, neither is the Single Model postulate ({\bf P6}).

\begin{restatable}{theorem}{norational}\label{norational}
There is some $\KB$ such that the conditional induced by $\Cn{\PT}{\KB}$ is not a rational conditional.
\end{restatable}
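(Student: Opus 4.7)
The plan is to reuse the knowledge base from the proof of Theorem~\ref{Impossibility}: let
\[
\KB \defined \{\typ\top \limp p,\ \typ\neg p \limp \typ q\},
\]
and focus for concreteness on $\Prp = \{p, q\}$, writing $v_1 = \{p, q\}$, $v_2 = \{p, \neg q\}$, $v_3 = \{\neg p, q\}$, and $v_4 = \{\neg p, \neg q\}$. I will show that $\twiddle_{\Cn{\PT}{\KB}}$ fails rational monotonicity by exhibiting two PT-minimal models of $\KB$ that disagree on a conditional which (RM) would require.

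First I would produce two PT-minimal models. Let $\RM_a$ place $\{v_1, v_2\}$ at rank $0$ and $\{v_3, v_4\}$ at rank $\infty$; let $\RM_b$ place $v_2$ at rank $0$, $\{v_1, v_3\}$ at rank $1$, and $v_4$ at rank $2$. A direct check shows both are ranked models of $\KB$: in $\RM_a$ the second axiom is vacuously satisfied because $\U^{\RM_a}$ contains no $\neg p$-valuation, and in $\RM_b$ one computes $\states{\typ\top}{\RM_b} = \{v_2\}$, $\states{\typ\neg p}{\RM_b} = \{v_3\}$, and $\states{\typ q}{\RM_b} = \{v_1, v_3\}$, so both axioms hold. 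PT-minimality is then established by a finite case analysis showing that any strict downward move of some valuation either drops a $\neg p$-valuation to rank $0$ (breaking $\typ\top \limp p$), or makes some $\neg p$-valuation not satisfying $q$ typical (breaking $\typ\neg p \limp \typ q$), or forces $v_3$ strictly above $v_1$ in the ordering restricted to $q$-valuations (again breaking $\typ\neg p \limp \typ q$).

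Next I would evaluate three conditionals. By Inclusion (P1), $\typ\top \limp p \in \Cn{\PT}{\KB}$, so $\top \twiddle_{\Cn{\PT}{\KB}} p$. Since $v_1 \in \states{\typ\top}{\RM_a}$ and $v_1 \sat q$, the model $\RM_a$ does not satisfy $\typ\top \limp \neg q$, hence $\top \ntwiddle_{\Cn{\PT}{\KB}} \neg q$. Since $v_3 \in \states{\typ q}{\RM_b}$ and $v_3 \nsat p$, the model $\RM_b$ does not satisfy $\typ q \limp p$, hence $q \ntwiddle_{\Cn{\PT}{\KB}} p$. Now (RM) applied with $\alpha = \top$, $\beta = q$, $\gamma = p$ would demand $\top \land q \twiddle p$, which by (LLE) is the same as $q \twiddle p$; this conclusion fails while both premises hold, so $\twiddle_{\Cn{\PT}{\KB}}$ violates (RM) and is therefore not rational.

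The main obstacle is the PT-minimality verification for $\RM_a$ and $\RM_b$; everything else is a straightforward reading-off of satisfaction in the two models. As a sanity check, this outcome is forced by Theorem~\ref{Impossibility}: Proposition~\ref{basicpostulates} establishes that $\Cn{\PT}{\cdot}$ satisfies P1, P2, P3, P8 and P10, so the impossibility result leaves no option but for it to fail P5, which is exactly what the example above witnesses.
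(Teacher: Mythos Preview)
Your proof is correct but uses a different witness than the paper. The paper invokes the knowledge base from Example~\ref{rceg}/\ref{ex02} (over $\{\fly,\p,\rob\}$) and its three PT-minimal models $\RM_1,\RM_2,\RM_3$, showing that $\typ\lnot\p\limp\lnot\rob\in\Cn{\PT}{\KB}$ and $\typ\lnot\p\limp\lnot\fly\notin\Cn{\PT}{\KB}$, yet $\typ(\lnot\p\land\fly)\limp\lnot\rob\notin\Cn{\PT}{\KB}$, violating (RM). You instead recycle the two-atom knowledge base $\{\typ\top\limp p,\ \typ\neg p\limp\typ q\}$ from the proof of Theorem~\ref{Impossibility}, produce two PT-minimal models, and obtain $\top\twiddle p$, $\top\ntwiddle\neg q$, $q\ntwiddle p$.

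What each approach buys: the paper's route is essentially free, since the PT-minimal models were already computed in Example~\ref{ex02}; no new minimality verification is needed. Your route is more economical in the number of atoms, and it makes the link to the impossibility result transparent---indeed, your closing ``sanity check'' (Proposition~\ref{basicpostulates} gives {\bf P1},{\bf P2},{\bf P3},{\bf P8},{\bf P10}, so Theorem~\ref{Impossibility} forces failure of {\bf P5}) is a clean abstract argument that the paper does not spell out. Two minor remarks: the detour through (LLE) is unnecessary, since $\RM_b$ already falsifies $\typ(\top\land q)\limp p$ directly; and your PT-minimality case analysis, while correct, is compressed---when writing it up you should make explicit that for $\RM_a$ any $\RM'\mprefstrict{\PT}\RM_a$ must bring $v_3$ or $v_4$ to finite rank while keeping $v_1,v_2$ at rank~$0$, and then check the two sub-cases for which of $v_3,v_4$ is $\pref$-minimal among the $\neg p$-valuations.
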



To see this, consider Example \ref{ex02}: we have $\typ\lnot\p\limp\lnot\rob\in\Cn{\PT}{\KB}$ (typical non-penguins are not robins). 
This is because we have $\min_{\pref_{\RM_i}}\states{\lnot\p}{\RM_i}\subseteq\states{\lnot\rob}{\RM_i}$ for each $i = 1,2,3$. However both $\typ\lnot\p\limp\neg \fly\not\in \Cn{\PT}{\KB}$ and $\typ(\lnot\p\land\fly)\limp\lnot\rob\not\in\Cn{\PT}{\KB}$. The former holds because, e.g., $\min_{\pref_{\RM_1}}\states{\lnot\p}{\RM_1}\nsubseteq\states{\lnot\fly}{\RM_1}$, the latter because $\min_{\pref_{\RM_3}}\states{\lnot\p \wedge \fly}{\RM_3}\nsubseteq\states{\lnot\rob}{\RM_3}$. This means the rational monotonicity property (RM) is not satisfied. 

On the other hand, observe that $\lnot\p\notin\Cn{\PT}{\KB}$. Recall from the proof of Theorem \ref{Thm:LM-properties} that we used the fact that $\lnot\p\in\Cn{\LM}{\KB}$ to show that LM-entailment does not satisfy Strict Entailment ({\bf P8}).

\section{PT'-entailment}\label{PT'-entailment}

As we have shown above, relying on LM-minimality results in the loss of property {\bf P8} (Strict Entailment), while using PT-minimality results in the loss of the uniqueness of the minimal model  ({\bf P6}) and the rationality of our conditional reasoning ({\bf P5}). To summarise, on the one hand LM-minimality, failing to satisfy {\bf P8}, can potentially enforce classical propositional information that is not a necessary consequence of the knowledge base. On the other hand, PT-minimality can be inferentially too weak. In this section we consider a third possibility, aimed at strenghtening the inferential power while still preserving the satisfaction of {\bf P8}. This third proposal is based on using the same approach as in PT-minimality, but among the PT-minimal models we consider only the ``biggest'' ones, that is, the ones with the \emph{maximal} sets of possible valuations (\wrt~$\subseteq$). This should allow  us to augment the inferential power (we define the entailment relation using fewer models), while still preserving {\bf P8} (we consider all the biggest models, that is, the models that assume as little propositional knowledge as possible). We now analyse this option.

We  
let $\min_{\mpref{\PT}}^\supseteq\Mod{\KB}\defined \{\RM\in\min_{\mpref{\PT}}\Mod{\KB} \mid$ 
there is no $\RM'\in\min_{\mpref{\PT}}\Mod{\KB} 
\text{ \st~}\U^{\RM'}\supset\U^{\RM}\}$.
\myskip

The corresponding entailment relation $\NMentails_{\PT'}$ can be defined as follows.

\begin{definition}[PT'-entailment]\label{Def:PT'-Entailment}
Let $\KB\subseteq\typup{\Lang}$ and $\alpha\in\typup{\Lang}$. We say $\KB$ \textbf{PT'-entails} $\alpha$, denoted $\KB\NMentails_{\PT'}\alpha$, if and only if $\min_{\mpref{\PT}}^\supseteq\Mod{\KB}\subseteq\Mod{\alpha}$.
\end{definition}
\myskip

For example, in Example \ref{ex02} we would consider only $\RM_2$ and $\RM_3$. 

Note that if $\KB$ is unsatisfiable
then $\min_{\mpref{\PT}}^\supseteq\Mod{\KB}$ is a singleton set containing the ranked interpretation $\RM$ for which $\U^{\RM}=\emptyset$. Also,  recall from Section \ref{PT-entailment} that for every satisfiable conditional knowledge base $\KB$ there is a single PT-minimal model \cite[Theorem 1]{GiordanoEtAl2015}, that characterises  the rational closure of $\KB$ \cite[Theorem 2]{GiordanoEtAl2015}. Such a single PT-minimal model is by definition also the only PT'-minimal model of $\KB$, and consequently, in case of conditional knowledge bases, PT'-entailment also characterises the rational closure.

Our first result regarding PT'-entailment is that it is inferentially stronger than PT-entailment.

\begin{proposition}\label{supset}
{\em (i)} For every $\KB \subseteq \typup{\Lang}$ and every $\alpha\in\typup{\Lang}$, if $\KB\NMentails_{\PT}\alpha$ then $\KB\ \NMentails_{\PT'}\alpha$. {\em (ii)} There exists some $\KB' \subseteq \typup{\Lang}$ and $\beta\in\typup{\Lang}$ such that $\KB'\ \NMentails_{\PT'}\beta$ and $\KB'\ \not\NMentails_{\PT}\beta$.
\end{proposition}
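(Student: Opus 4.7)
My plan for this proof breaks into the two clauses.

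\textbf{Part (i).} The argument is immediate from the construction. By definition $\min_{\mpref{\PT}}^{\supseteq}\Mod{\KB} \subseteq \min_{\mpref{\PT}}\Mod{\KB}$, so any $\alpha$ satisfied by every model of the latter is in particular satisfied by every model of the former; unpacking Definitions~\ref{Def:PT-Entailment} and~\ref{Def:PT'-Entailment} delivers the conclusion.

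\textbf{Part (ii).} Here I need to display concrete $\KB'$ and $\beta$. The witness I would use is $\Prp = \{p, q\}$, $\KB' = \{\typ\top \to \lnot p,\ \typ p \to \typ\lnot q\}$, and $\beta = \typ\top \to q$. The intuition is that the axiom $\typ\top \to \lnot p$ confines rank~$0$ to the $\lnot p$-valuations, while the non-conditional axiom $\typ p \to \typ\lnot q$ offers two genuinely distinct ways of satisfying $\KB'$ minimally: either refuse to make the $p$-valuations possible (yielding a model $\RM_a$ with rank~$0 = \{\{\lnot p, \lnot q\}, \{\lnot p, q\}\}$ and $\{p, q\}, \{p, \lnot q\}$ at $\infty$), or make them possible at the cost of pushing $\{\lnot p, \lnot q\}$ off rank~$0$ so that the $\lnot q$-witness among the $p$-valuations can qualify as $\typ\lnot q$ (yielding $\RM_b$ with rank~$0 = \{\{\lnot p, q\}\}$, rank~$1 = \{\{\lnot p, \lnot q\}, \{p, \lnot q\}\}$, rank~$2 = \{\{p, q\}\}$).

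These two models are PT-incomparable. I would verify by a finite case analysis that they are the only PT-minimal models of $\KB'$: the axiom $\typ p \to \typ\lnot q$ rules out every other placement of $\{p, q\}$ and $\{p, \lnot q\}$ relative to $\{\lnot p, \lnot q\}$, because either $\{p, q\}$ becomes co-minimal in $p$ (violating $\typ\lnot q$) or $\{\lnot p, \lnot q\}$ sits strictly below $\{p, \lnot q\}$ (blocking $\typ\lnot q$ for the latter). Since $\U^{\RM_a} \subsetneq \U^{\RM_b}$, we obtain $\min_{\mpref{\PT}}^{\supseteq}\Mod{\KB'} = \{\RM_b\}$. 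A direct satisfaction check then gives $\RM_b \sat \beta$ (the unique rank-$0$ world $\{\lnot p, q\}$ satisfies $q$) and $\RM_a \not\sat \beta$ (because $\{\lnot p, \lnot q\}$ is among the rank-$0$ worlds of $\RM_a$ and fails $q$), so $\KB' \NMentails_{\PT'} \beta$ while $\KB' \not\NMentails_{\PT} \beta$. The main obstacle is precisely the uniqueness-up-to-PT argument identifying $\RM_a$ and $\RM_b$ as all the PT-minimal models; the entailment computations themselves are routine.
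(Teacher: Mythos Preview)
Your Part~(i) is exactly the paper's argument: the subset inclusion $\min_{\mpref{\PT}}^{\supseteq}\Mod{\KB} \subseteq \min_{\mpref{\PT}}\Mod{\KB}$ immediately yields the entailment inclusion.

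For Part~(ii) your proof is correct but uses a different, smaller witness than the paper. The paper invokes Example~\ref{ex_PT'} (three atoms $\fly,\p,\rob$, four axioms) and takes $\beta = \typ\top\to\lnot\fly$; you instead work over two atoms with the two-axiom $\KB' = \{\typ\top\to\lnot p,\ \typ p\to\typ\lnot q\}$ and $\beta = \typ\top\to q$. The underlying mechanism is identical in both: an axiom with $\typ$ on the right-hand side forces a bifurcation among PT-minimal models---either banish certain valuations to rank~$\infty$, or keep them by pushing some rank-$0$ valuation upward---yielding PT-minimal models with strictly nested universes, so that the $\supseteq$-filter discards the smaller one and creates an extra $\typ\top\to\gamma$ entailment. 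Your case analysis identifying $\RM_a$ and $\RM_b$ as the only PT-minimal models is sound (the two failure modes you name---$\{p,q\}$ co-minimal in $p$, or $\{\lnot p,\lnot q\}$ strictly below $\{p,\lnot q\}$---do exhaust the obstructions, and $\RM_a,\RM_b$ are easily checked to be PT-incomparable). What your route buys is a leaner example that makes the phenomenon more transparent; what the paper's route buys is reuse of an example already set up for other purposes in Section~\ref{PT'-entailment}.
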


\begin{proof}
{\em (i)}. Note  that, since $\min_{\mpref{\PT}}^\supseteq\Mod{\KB}\subseteq\min_{\mpref{\PT}}\Mod{\KB}$ for every $\KB$,  $\NMentails_{\PT}\subseteq \NMentails_{\PT'}$. 
{\em (ii)}. Observe from  Example \ref{ex_PT'}, here below, that $\KB'\NMentails_{\PT'}\typ \top\rightarrow \neg \fly$ but $\KB'\not\NMentails_{\PT}\typ \top\rightarrow \neg \fly$.
\end{proof}

\begin{example}\label{ex_PT'}
Consider the knowledge base $\KB'\defined\{\typ{\top}\limp(\lnot\p\land\lnot\rob), \typ\p\limp\lnot\fly,\typ\rob\limp\typ\fly, \p\limp \neg \rob\}$, which is a modified version of the knowledge $\KB$ from Example~\ref{rceg}. The only difference is that now we state that typical penguins are non-flying birds, not that they are typical non-flying birds.

Then, one can check that $\min_{\mpref{\PT}}\Mod{\KB'}=\{\RM_{1},\RM_{2}\}$, where:

\vspace{0.3cm}

\noindent$\RM_1:$ \hspace{0.6cm}
\begin{tabular}{ | c|c| }
\hline
 $2$ & 
 $\{\fly,\p,\lnot\rob\}$
 \\ 
 \hline
 $1$ & 
  
 $\{\lnot\fly,\p,\lnot\rob\}$
 \\ 
 \hline
 $0$ & 
 $\{\lnot\fly,\lnot\p,\lnot\rob\}$, 
 $\{\fly,\lnot\p,\lnot\rob\}$, 
 \\ 
 \hline
 \end{tabular}
 \vspace{0.5cm}

 \noindent$\RM_2:$ \hspace{0.6cm}
\begin{tabular}{ | c|c| } 
\hline
 $2$ & 
 $\{\lnot\fly,\lnot\p,\rob\}$, $\{\fly,\p,\lnot\rob\}$
 \\ 
 \hline
 $1$ & 
  $\{\fly,\lnot\p,\rob\}$,
 $\{\fly,\lnot\p,\lnot\rob\}$, $\{\lnot\fly,\p,\lnot\rob\}$
 \\ 
 \hline
 $0$ & 
 $\{\lnot\fly,\lnot\p,\lnot\rob\}$
 \\ 
 \hline
 
\end{tabular}

\vspace{0.3cm}

while $\min_{\mpref{\PT}}^\supseteq \Mod{\KB'}=\{\RM_{2}\}$, since $\U^{\RM_{2}}\supset\U^{\RM_{1}}$.

\end{example}

Unfortunately, while PT'-entailment is an improvement over PT-entailment in terms of inferential strength, it is weaker than PT-entailment when it comes to the satisfaction of the list of desirable properties. That is, it satisfies, and fails to satisfy, the same properties as PT-entailment, except for Typical Entailment ({\bf P10}), which PT-entailment satisfies, but PT'-entailment does not.

\begin{restatable}{proposition}{basicpostulates_PT'}\label{basicpostulates_PT'}
$\Cn{\PT'}{\cdot}$ satisfies~{\bf P1}--{\bf P4} and {\bf P7}--{\bf P9}, but does not satify {\bf P5}, {\bf P6}, and {\bf P10}.
\end{restatable}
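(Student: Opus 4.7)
The plan is to exploit the close relationship between $\Cn{\PT}{\cdot}$ and $\Cn{\PT'}{\cdot}$: since $\min_{\mpref{\PT}}^\supseteq\Mod{\KB} \subseteq \min_{\mpref{\PT}}\Mod{\KB}$, many arguments from Proposition~\ref{basicpostulates} carry over, and the three failures can each be witnessed by examples already appearing in the text. Properties \textbf{P1} and \textbf{P3} are immediate from $\min_{\mpref{\PT}}^\supseteq\Mod{\KB} \subseteq \Mod{\KB}$, exactly as in the PT case. For \textbf{P7} (and hence also \textbf{P4} and \textbf{P9}), if $\KB$ is a conditional knowledge base then by the cited results of Giordano~\etal.\ it has a \emph{unique} PT-minimal model, which is then automatically the unique PT'-minimal model, so the rational-closure characterisation transfers verbatim. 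For \textbf{P8}, given a propositional $\alpha \notin \Cn{0}{\KB}$, I would first reuse the PT argument to produce a PT-minimal $\RM^*$ of $\KB$ with some valuation $v \sat \lnot\alpha$ in $\U^{\RM^*}$, and then, by finiteness of $\U$, enlarge $\RM^*$ to a PT'-minimal $\RM^{**}$ with $\U^{\RM^{**}} \supseteq \U^{\RM^*}$; since $v$ remains possible, $\RM^{**} \not\sat \alpha$, witnessing $\alpha \notin \Cn{\PT'}{\KB}$.

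The most delicate positive property is \textbf{P2}. Assuming $\KB \subseteq \KB' \subseteq \Cn{\PT'}{\KB}$, we have $\min_{\mpref{\PT}}^\supseteq\Mod{\KB} \subseteq \Mod{\KB'} \subseteq \Mod{\KB}$, and the target is $\min_{\mpref{\PT}}^\supseteq\Mod{\KB'} = \min_{\mpref{\PT}}^\supseteq\Mod{\KB}$. For the forward inclusion I would take $\RM \in \min_{\mpref{\PT}}^\supseteq\Mod{\KB}$, note that it stays PT-minimal in the smaller $\Mod{\KB'}$ (any PT-smaller witness would also lie in $\Mod{\KB}$), and rule out any $\RM' \in \min_{\mpref{\PT}}\Mod{\KB'}$ with $\U^{\RM'} \supsetneq \U^{\RM}$ by descending, via finiteness, to a PT-minimal ancestor of $\RM'$ inside $\Mod{\KB}$ and then taking a PT'-maximal enlargement, which by the hypothesis on $\KB'$ lands back inside $\Mod{\KB'}$ and contradicts either the PT'-minimality of $\RM$ in $\Mod{\KB}$ or the PT-minimality of $\RM'$ in $\Mod{\KB'}$. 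The reverse inclusion is symmetric. The main obstacle I foresee is precisely this bookkeeping, since $\mpref{\PT}$ is only a pre-order and enlarging a PT-minimal model along $\subseteq$ of universes need not preserve comparability in $\mpref{\PT}$.

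For the three failures, I would reuse the knowledge base $\KB$ from Example~\ref{rceg}: a direct inspection of the three PT-minimal models in Example~\ref{ex02} shows $\U^{\RM_1} \subsetneq \U^{\RM_2}$ and $\U^{\RM_1} \subsetneq \U^{\RM_3}$ while $\U^{\RM_2}$ and $\U^{\RM_3}$ are incomparable, so $\min_{\mpref{\PT}}^\supseteq\Mod{\KB} = \{\RM_2, \RM_3\}$. The (RM)-style counterexample already displayed after Theorem~\ref{norational} still applies: $\typ\lnot\p \limp \lnot\rob$ holds in both $\RM_2$ and $\RM_3$ and is therefore in $\Cn{\PT'}{\KB}$, whereas $\typ\lnot\p \limp \lnot\fly$ fails in $\RM_2$ and $\typ(\lnot\p \wedge \fly) \limp \lnot\rob$ fails in $\RM_3$, refuting \textbf{P5}. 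Since \textbf{P6} implies \textbf{P5} via Observation~\ref{Theorem:RepResultTwiddleFlatBar}, \textbf{P6} fails as well. For \textbf{P10}, Example~\ref{ex_PT'} supplies the witness: $\min_{\mpref{\PT}}^\supseteq\Mod{\KB'} = \{\RM_2\}$, whose unique rank-$0$ valuation satisfies $\lnot\fly$, so $\typ\top \limp \lnot\fly \in \Cn{\PT'}{\KB'}$; yet $\RM_1$ in the same example is a ranked model of $\KB'$ whose rank-$0$ layer contains a $\fly$-valuation, showing $\typ\top \limp \lnot\fly \notin \Cn{0}{\KB'}$.
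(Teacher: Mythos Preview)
Your proposal is correct and takes essentially the same approach as the paper's proof: the same reductions to the $\Cn{\PT}{\cdot}$ case for \textbf{P1}--\textbf{P4}, \textbf{P7}--\textbf{P9}, and the same counterexamples (Example~\ref{ex02} with $\min_{\mpref{\PT}}^\supseteq\Mod{\KB}=\{\RM_2,\RM_3\}$ for \textbf{P5}/\textbf{P6}, and Example~\ref{ex_PT'} for \textbf{P10}). The only noteworthy difference is \textbf{P2}: the paper simply asserts that the proof is the same as for $\Cn{\PT}{\cdot}$, whereas you (appropriately) flag it as the most delicate step and sketch the additional $\supseteq$-maximality bookkeeping needed beyond the PT argument.
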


\begin{proof}
\noindent Regarding {\bf P1}, {\bf P2}, {\bf P3}, {\bf P4}, and {\bf P9} the proof for $\Cn{PT'}{\cdot}$ is the same as for $\Cn{PT}{\cdot}$ (Proposition \ref{basicpostulates} above).

\noindent Regarding the failure of {\bf P5}, consider Example \ref{ex02}. In this example, while $\min_{\mpref{\PT}}\Mod{\KB}=\{\RM_{1},\RM_{2},\RM_{3}\}$, we have that $\min_{\mpref{\PT}}^\supseteq \Mod{\KB}=\{\RM_{2},\RM_{3}\}$. We can use the same case used in the proof of Theorem \ref{norational}: we have $\KB\NMentails_{\PT'} \typ(\lnot \p)\limp \lnot \rob$, but neither $\KB\NMentails_{\PT'} \typ(\lnot \p)\limp \lnot \fly$, nor $\KB\NMentails_{\PT'} \typ(\lnot \p\land \fly)\limp \lnot \rob$ hold.

\noindent The failure of {\bf P5} immediately implies the failure of {\bf P6}.

\noindent{\bf P7.} As pointed out in Proposition \ref{basicpostulates}, in case we are dealing with a conditional KB, deciding \emph{PT}-minimality over a satisfiable conditional KB gives back a single minimal model, characterising Rational Closure. It follows immediately that such a model is also the only  \emph{PT'}-minimal one.

\noindent{\bf P8.} Again, it follows from the satisfaction of {\bf P8} for \emph{PT}-entailment (see Proposition \ref{basicpostulates}). Let $\KB$ be a knowledge base and $\alpha$ be a propositional formula. If there is a \emph{PT}-minimal model $\RM$ s.t. $\RM(v)\leq\infty$ for some $v\not\sat\alpha$, then, by definition, there must be  also in $\min_{\mpref{\PT}}^\supseteq\Mod{\KB}$ a model $\RM'$ of $\KB$ s.t. $\RM'(v)\leq\infty$.

\noindent For the failure of {\bf P10}, we consider Example \ref{ex_PT'} and the case used in the proof of Proposition \ref{supset}: $\KB'\NMentails_{\PT'}\typ \top\rightarrow \neg \fly$ but, since $\KB'\not\NMentails_{\PT}\typ \top\rightarrow \neg \fly$ and $\NMentails_{\PT}$ satisfies \emph{Ampliativeness} ({\bf P3}), $\typ \top\rightarrow \neg \fly$ is not in $\Cn{0}{\KB'}$.

\end{proof}
\myskip



\section{Making sense of the impossibility result}\label{Weakening}

Theorem~\ref{Impossibility} in Section~\ref{TowardsEntailment} shows that there is no PTL consequence operator satisfying all of our postulates---more specifically, none satisfying {\bf P1}, {\bf P2}, {\bf P3}, {\bf P5}, {\bf P8}, and {\bf P10}. This raises the important question of which of these postulates ought to be foregone in the search for an appropriate form of PTL entailment. In trying to find an answer to this question, it is useful to consider the three forms of entailment we proposed in the previous sections. 
The answer seems to be that it makes sense to consider (at least) two forms of entailment for PTL, represented here by LM-entailment and PT-entailment. PT'-entailment is not viewed as a viable option, given that it satisfies fewer properties than PT-entailment. In essence then, it comes down to a choice between having a form of entailment  that satisfies Strict Entailment
(PT-entailment), and one  that satisfies the Single Model postulate and Conditional Rationality, \ie, is based on a rational conditional (LM-entailment).

The advantage of LM-entailment is that it satisfies all postulates except for Strict Entailment, which includes not only Single Model and Conditional Rationality, but also Conditional Strict Entailment and Classical Entailment, the weakened versions of Strict Entailment.  On the other hand, the argument for PT-entailment is that the Single Model property is too restrictive in the context of full PTL, and ought to be dropped. That is, in a logic as expressive as PTL in which there are not any restrictions on the typicality operator, any form of entailment based on minimality, and adhering to the presumption of typicality, as outlined in Section~\ref{PT-entailment}, is likely to violate the Single Model property. 

The point of view that different forms of entailment can be appropriate in enriched versions of propositional logic, particularly enriched versions dealing with aspects of typicality, is not surprising, nor new. Lehmann~\cite{Lehmann1995} makes the case for two forms of entailment for the conditional logic discussed in Section~\ref{KLM} on which PTL is based. He draws a distinction between \emph{prototypical reasoning}, corresponding to rational closure as discussed in Section~\  \ref{RationalClosure}, and \emph{presumptive reasoning}. 

The intuition underlying prototypical reasoning is that conclusions to be drawn are constrained by the typicality of the objects under consideration. To make matters more concrete, suppose we know that birds typically fly, that birds typically have wings, that robins are birds, that penguins are birds, and that penguins typically don't fly. Robins can be regarded as typical birds and therefore inherit the properties of typical birds, such as having wings. Penguins, on the other hand, should be regarded as atypical birds since they typically cannot fly, and therefore do not inherit the properties of a typical bird, such as having wings. This is to be contrasted with presumptive reasoning, a more permissive form of reasoning for which the intuition is to draw conclusions unless we have specific information to the contrary. In our example above presumptive reasoning would allow us to conclude that penguins typically have wings (since we do not have explicit information contradicting that conclusion), thereby distinguishing it from prototypical reasoning. 

Our argument here is \emph{not} that the relationship between PT-entailment and LM-entailment is analogous to the relationship between prototypical reasoning and presumptive reasoning, although it is true that LM-entailment can be viewed as a refinement of PT-entailment (yielding more conclusions), just as presumptive reasoning is a refinement of prototypical reasoning. Rather, the important point is that differences in context will determine which form of entailment is appropriate. It is our contention that the same principle applies to the differences between LM-entailment and  PT-entailment. Below we discuss the technical differences between the two forms of entailment and then provide an example to illustrate the principle.

As we have seen above, the difference between these two forms of entailment comes down to a choice between Strict Entailment on the one hand, Conditional Rationality (and Single Model) on the other hand. Employing LM-entailment ensures that we remain rational (\ie, satisfying all the KLM properties), but at the cost of going beyond Tarskian monotonicity for typicality-free sentences. Conversely, making use of PT-entailment allows us to remain Tarskian for typicality-free sentences, but forces us to forego rationality, and in particular, the rational monotonicity property RM. Intuitively then, LM-entailment is the more permissive form of entailment here. Not only do we remain rational, unlike PT-entailment, but we do so at the cost of allowing the entailment of more typicality-free sentences than permitted by PT-entailment.  The example below is indicative of the factors to take into account when deciding, in a specific context, which of LM-entailment or PT-entailment is the more appropriate form of reasoning.


\begin{example}\label{LM-PT}
Consider again the knowledge base $\KB \defined\{\typ{\top}\limp(\lnot\p\land\lnot\rob), \typ\p\limp\typ\lnot\fly,\typ\rob\limp\typ\fly, \p\limp\neg \rob\}$ introduced in Example \ref{rceg}. From Examples \ref{rceg} and \ref{ex02} it is not hard to verify that both LM-entailment and PT-entailment sanction the conclusion that typical non-robins are not penguins ($\KB\NMentails_{\LM}\typ(\lnot\rob)\limp\lnot\p$ and 
$\KB\NMentails_{\PT}\typ(\lnot\rob)\limp\lnot\p$), and do \emph{not} allow for the entailment that typical non-robins can fly 
($\KB\nNMentails_{\LM}\typ(\lnot\rob)\limp\fly$ and 
$\KB\nNMentails_{\PT}\typ(\lnot\rob)\limp\fly$). This leaves us with a choice. On the one hand it is reasonable to conclude from this that typical \emph{non-flying} non-robins are not penguins (that is, $\typ(\lnot\rob\land \lnot \fly)\limp\lnot\p$). In fact, rational monotonicity requires of us to be able to draw this conclusion. But in order to do so, we need to be able to conclude that there are no penguins, which violates Strict Entailment. This is the route followed by LM-entailment. The other option would be to insist that we do not have enough information to conclude that there are no penguins, but in the process of doing so, also forego the conclusion that typical non-flying non-robins are not penguins. That is, we insist on Strict Entailment at the expense of rational monotonicity. This is the path followed by PT-entailment. \hfill \qed
\end{example}

\section{Related work}\label{RelatedWork}

\newcommand{\nec}{\Box}
\newcommand{\ALC}{\ensuremath{\mathcal{ALC}}}
\newcommand*{\myleftmid}{%
      \mathrel{\vcenter{\offinterlineskip
      \vskip-0.25ex\hbox{$\shortmid$}}}}
\newcommand*{\myrightmid}{%
      \mathrel{\vcenter{\offinterlineskip
      \vskip-0.7ex\hbox{$\shortmid$}}}}
\newcommand*{\twosim}{%
      \mathrel{\vcenter{\offinterlineskip
      \vskip0.05ex\hbox{$\sim$}\vskip0.25ex\hbox{$\sim$}}}}
\newcommand{\bartwosim}{\mathrel{\myleftmid}\hskip-0.07ex\joinrel\twosim}
\newcommand{\flag}{\mathrel{\bartwosim}\hskip-.06ex\joinrel\myrightmid\hskip-0.55ex}
\newcommand{\flame}{\scalebox{0.8}{\raisebox{-0.2ex}{\rotatebox{57}{\ensuremath{\flag}}}}}

To the best of our knowledge, the first attempt to formalise an explicit notion of typicality in defeasible reasoning was that by Delgrande~\cite{Delgrande1987}. Given the strong links between our constructions and the KLM approach, most of the remarks in the comparison made by Lehmann and Magidor~\cite[Section 3.7]{LehmannMagidor1992} are applicable in comparing Delgrande's approach to ours and therefore we shall not repeat them here.
\myskip

Crocco and Lamarre~\cite{CroccoLamarre1992} as well as Boutilier~\cite{Boutilier1994} have explored the links between conditionals and notions of normality similar to the one we investigate here. In particular, Boutilier defines a family of conditional logics of normality in which a statement of the form ``if $\alpha$, then normally~$\beta$'' is formalised via a binary modality~$\Rightarrow$ as a conditional~$\alpha\Rightarrow\beta$. Here we achieve the same with a unary operator.
	
Roughly speaking, Boutilier's semantic intuition is the same as that of~KLM (and therefore the same as ours). The main difference is that Boutilier defines a conditional connective~$\Rightarrow$ in the language, whereas Kraus~\etal.\ define~$\twiddle$ at a meta-level to the language. In this respect, Boutilier's approach is more general in that it allows for nested conditionals. If these are omitted, \ie, if one works in the `flat' conditional logic in which~$\Rightarrow$ is the main connective and no nesting is allowed, then one gets the same results for preferential entailment with both systems. So Boutilier achieves with modalities (he works in a bi-modal language) what Kraus~and colleagues achieve with a (meta-level) preference order.

It turns out that in Boutilier's approach one cannot always capture the notion of ``most typical $\alpha$'s''. In Boutilier's modal logic, such a set (of \emph{most} normal $\alpha$-worlds) need not exist in general. This is because Boutilier drops the smoothness condition~\cite[p.\ 103]{Boutilier1994} and therefore at any point in a ranked model one can have infinitely descending chains of increasingly more normal $\alpha$-worlds. If one imposes smoothness in Boutilier's approach, which can be done by \eg\ requiring the ordering determined by Boutilier's~$\nec$ also to be \emph{Noetherian}, one could then define his conditional~$\Rightarrow$ more elegantly as follows:
\begin{equation}\label{Equation:ArrowAsTyp}
\alpha\Rightarrow\beta\defined\typ\alpha\limp\beta
\end{equation}
where, in Boutilier's notation, $\typ\alpha$ would be given by
\begin{equation}\label{Equation:TypInS4}
\typ\alpha\defined\alpha\land\nec\lnot\alpha
\end{equation}
(Of course negated conditionals of the form~$\alpha\not\Rightarrow\beta$ can then be expressed as~$\lnot(\typ\alpha\limp\beta)$.) In adopting smoothness and defining conditionals in this way, one would expect both approaches to become equivalent modulo the underlying language --- ours is propositional, whereas Boutilier's is modal. However, our statement~$\typ\alpha\limp\beta$ differs from Boutilier's~$\alpha\Rightarrow\beta$ in a significant way. In Boutilier's approach, a statement of the form~$\alpha\Rightarrow\beta$ is true at some world (in a ranked model) if and only if it is true at \emph{all} worlds in that ranked model~\cite[p.\ 114]{Boutilier1994}. On the other hand, it is not hard to find a ranked model in which~$\typ\alpha\limp\beta$ holds at a world without being true in the whole model. This establishes Boutilier's conditional as a `global' statement, while ours has the (more general) `local flavour'. We can easily simulate Boutilier's notion of \emph{acceptance}~\cite[p.\ 115]{Boutilier1994} by stating~$\top\limp(\typ\alpha\limp\beta)$.

It is also worth mentioning that our interpretation of the conditional~$\Rightarrow$ in~(\ref{Equation:ArrowAsTyp}) above and Boutilier's differ in another subtle way, which also relates to whether one adopts smoothness or not. In~(\ref{Equation:ArrowAsTyp}), $\alpha\Rightarrow\beta$ is defined as ``the normal $\alpha$'s are~$\beta$'s'', whereas, strictly speaking, Boutilier's definition of $\alpha\Rightarrow\beta$ reads as ``there is a point from which $\alpha\limp\beta$ is not violated''. Such a `frontier' for normality, implicitly referred to in Boutilier's definition of $\alpha\Rightarrow\beta$, is not as crisp as ours in the sense that the point where one draws the normality line might be too `far away' (in the ordering) from the more and more normal $\alpha$-worlds. One can definitely make a case for dropping the smoothness condition, but requiring it is a small price to pay given the much simpler account of typicality one obtains.

When it comes to entailment from a defeasible knowledge base, all approaches discussed above adopt a Tarskian-style notion of consequence and therefore do not go beyond ranked entailment. The move towards a non-monotonic notion of entailment and an investigation of its different facets in the context of~PTL was precisely our motivation in the present work.

Giordano~\etal.~\cite{GiordanoEtAl2009c} proposed the system $P_{min}$ which is based on a language that is as expressive as the one we propose in this paper. However, they end up using a constrained form of such a language that goes only slightly beyond the expressivity of the language of KLM-style conditionals (their \emph{well-behaved knowledge bases}). More importantly, their approach differs from ours since they build $P_{min}$ on a semantic approach that relies on preferential models and a notion of minimality that is more akin to circumscription~\cite{McCarthy1980}.

In a description logic setting, Giordano~\etal.~\cite{GiordanoEtAl2007} also study notions of typicality. Semantically, they do so by placing an (absolute) ordering on \emph{objects} in first-order domains in order to define versions of defeasible subsumption relations in the description logic~\ALC. The authors moreover extend the language of~\ALC\ with an explicit typicality operator $\Typ(\cdot)$ of which the intended meaning is to single out instances of a concept that are deemed as `typical'. That is, given an~\ALC\ concept~$C$, $\Typ(C)$ denotes the most typical individuals having the property of being~$C$ in a particular~DL interpretation. It is worth pointing out, though, that most of the analysis in the work of Giordano~\etal. is dedicated to a constrained use of the typicality operator~$\Typ(\cdot)$, that is allowed to occur only in the left-hand side of GCIs and not in the scope of other concept constructors. Not having such a syntactic constraint is a feature of our approach that we have put forward in the present work. Still in the framework of Description Logics, also Bonatti~\etal. \cite{BonattiEtAl2015} introduce a typicality operator $N(\cdot)$, with a meaning that mirrors the operator $\Typ(\cdot)$; also the use of the $N$ operator is generally constrained, and their semantic framework is differs from the present one, not being preferential.
	
Giordano~\etal.'s~approach has been extended in a series of papers~\cite{GiordanoEtAl2008,GiordanoEtAl2009a,GiordanoEtAl2013,GiordanoEtAl2015}, in particular also to deal with the computation of non-monotonic entailment from defeasible knowledge bases. In the latter case, the authors define a hyper-tableau calculus to compute the rational closure of a defeasible ontology via a minimal-model construction~\cite{GiordanoEtAl2012,GiordanoEtAl2015} that, as mentioned before, is closely related to our notion of PT-entailment. Nevertheless, that remains the only notion of non-monotonic entailment investigated by the authors. We conjecture the more expressive~DL setting has the potential to give rise to a much broader spectrum of consequence relations when enriched with typicality operators, in particular when these apply not only to concepts but also to roles~\cite{Varzinczak2018}. Nevertheless, that remains the only notion of non-monotonic entailment investigated by the authors. We conjecture the more expressive~DL setting has the potential to give rise to a much broader spectrum of consequence relations when enriched with typicality operators, in particular when these apply not only to concepts but also to roles~\cite{Varzinczak2018}.
\myskip

Finally, Britz and Varzinczak~\cite{BritzVarzinczak2018-JANCL,BritzVarzinczak2018-JoLLI} investigate another, complementary aspect of defeasibility to the one here presented by introducing (non-standard) modal operators allowing us to talk about relative normality in accessible worlds. With their defeasible versions of modalities, namely~${\flag}$ and~${\flame}$, formalising respectively the notions of \emph{defeasible necessity} and \emph{distinct possibility}, it becomes possible to make statements of the form ``$\alpha$ holds in all of the normal (typical) accessible worlds'', thereby capturing defeasibility of what is `expected' in target worlds. (Note that this is different from stating something like~$\nec\typ\alpha$, which says that all accessible worlds are typical $\alpha$-worlds.) Such preferential versions of modalities allow for the definition of a family of modal logics in which defeasible modes of inference such as defeasible actions, knowledge and obligations can be expressed. These can be integrated either with existing $\twiddle$-based modal logics~\cite{BritzEtAl2011b,BritzEtAl2012} or with a modal extension of our typicality operator in striving towards a comprehensive theory of defeasible reasoning in more expressive languages.

\section{Conclusion}\label{Conclusion}

The focus of this paper is an investigation into the entailment problem for the logic PTL. We approached the problem from two angles: an abstract formal perspective, in which a set of appropriate postulates was presented and discussed, and a constructive perspective, in which three specific entailment relations were defined and studied. The primary conclusion to be drawn from this investigation is that a logic as expressive as PTL supports more than one form of entailment. This conclusion is supported from the abstract perspective via an impossibility result, as well as through the constructive approach via the definition of two of the three distinct types of PTL entailment: LM-entailment and PT-entailment. While both forms of entailment are generalisations of \emph{rational closure}, only one, LM-entailment, retains all the rationality properties associated with rational closure, formalised as the Conditional Rationality postulate~({\bf P5}). However, it does not satisfy  Strict Entailment~({\bf P8}), a postulate which requires an entailment relation to remain Tarskian for conclusions not involving typicality, although it satisfies weakened versions of Strict Entailment ({\bf P9} and {\bf P9$'$}).  On the other hand, the other form of entailment we studied, PT-entailment, satisfies~{\bf P8}, but not Conditional Rationality~({\bf P5}). 

The framework of Booth~\etal.~\cite{BoothEtAl2012,BoothEtAl2013} is, to the best of our knowledge, the first attempt to introduce a full-fledged typicality operator into propositional logic. In terms of other related work, the closest we are aware of is the restricted form of typicality for description logics by Giordano~\etal.~\cite{GiordanoEtAl2009b}. However, a consequence of their restricted use of typicality is that a propositional version of their logic would correspond to a KLM-style conditional logic in which rational closure behaves well, and which is much less expressive than PTL.

Britz~\etal.~\cite{BritzEtAl2009} and Giordano~\etal.~\cite{GiordanoEtAl2009b} have investigated the connection between the KLM approach and G\"odel-L\"ob modal logic, which is closely related to~PTL. Exploiting this connection should deliver an axiomatisation of an inference relation corresponding to ranked entailment, but it does not seem useful for modelling entailment relations based on minimisation as LM- and PT-entailment.

For future work, an obvious open question is whether our conjecture, that the subsets of postulates satisfied by LM-entailment and PT-entailment respectively provide appropriate abstract formalisations of two distinct forms of PTL entailment, can be formalised through representation theorems. From a computational perspective, it is worth investigating whether, as is the case for rational closure for conditional logics, the computation of (the different forms of) PTL entailment can be reduced to  a series of classical entailment checks.

Our results in the propositional setting pave the way for an investigation of appropriate forms of entailment in other, more expressive, preferential approaches, such as preferential description logics~\cite{BritzEtAl2011c,GiordanoEtAl2013,BritzEtAl2015a,BritzVarzinczak2016b,BritzVarzinczak2017-DL,CasiniEtAl2018,BritzEtAl2019} and modal logics of defeasibility~\cite{BritzEtAl2011b,BritzVarzinczak2013,BritzVarzinczak2017-JANCL,BritzVarzinczak2018-JoLLI}. The move to logics with more structure is of a challenging nature, and a simple rephrasing of our approach to these logics may not deliver the expected results. We are currently investigating these issues.

\section*{Acknowledgements}

The work of Giovanni Casini and of Thomas Meyer has received funding from the European Union's Horizon 2020 research and innovation programme under the Marie Sk{\l}odowska-Curie grant agreement No. 690974 (MIREL project). The work of Thomas Meyer has been supported in part by the National Research Foundation of South Africa (grant No. UID 98019).


\section*{Appendix}
\begin{appendices}
\newcommand{\M}{\mathscr{M}}
\section{Proofs for Section \ref{Preliminaries}}\label{App:Proof_Prel}

We give here a proof of Proposition \ref{Prop:LMisRC}. In order to do that, we need to introduce some extra notions and prove some extra propositions. First of all, analogously to the definitions for PTL introduced in Section \ref{BackgroundPTL}, we say that a set of conditionals $\mathcal{C}$ is {\em satisfiable} iff there is a ranked interpretation $\RM$ for which $\U^{\RM}\neq\emptyset$ satisfying all the conditionals in it, and let $\Mod{\mathcal{C}}$ be the set of all the ranked models of $\mathcal{C}$.

We are going to use a notion of {\em merging} ranked interpretations. 

\begin{definition}[Ranked Union]\label{Def:ranked_union}

Given a  set of ranked interpretations $\mathfrak{R}=\{\RM_1,\ldots,\RM_n\}$, its \textbf{ranked union} $\RM^{\mathfrak{R}}$ is defined as follows:

\begin{itemize}
\item for every $v,v'\in \U$, $v\prec_{\RM^{\mathfrak{R}}} v'$ iff $\min\{\RM_i(v)\mid \RM_i\in \mathfrak{R}\}<\min\{\RM_j(v')\mid \RM_j\in \mathfrak{R}\}$.
\item $\RM^{\mathfrak{R}}:=(L^{\RM^{\mathfrak{R}}}_0,\ldots,L^{\RM^{\mathfrak{R}}}_{n-1},L^{\RM^{\mathfrak{R}}}_{\infty})$ is defined as

    \begin{itemize}
        \item $L^{\RM^{\mathfrak{R}}}_{\infty}:=\bigcap \{L^{\RM_i}_{\infty}\mid \RM_i\in \mathfrak{R}\}$.
        \item $L^{\RM^{\mathfrak{R}}}_0:=\min_{\prec_{\RM^{\mathfrak{R}}}}(\U\setminus L^{\RM^{\mathfrak{R}}}_{\infty})$; $L^{\RM^{\mathfrak{R}}}_1:=\min_{\prec_{\RM^{\mathfrak{R}}}}(\U\setminus L^{\RM^{\mathfrak{R}}}_0\cup L^{\RM^{\mathfrak{R}}}_{\infty})$; and so on until $L^{\RM^{\mathfrak{R}}}_{n}=\emptyset$.
    \end{itemize}
\end{itemize}
\end{definition}

\begin{proposition}\label{Prop:ranked_union}
Let $\mathcal{C}$ be a satisfiable set of conditionals, and let $\mathfrak{R}:=\{\RM_1,\ldots,\RM_n\}$ be a set of models of $\mathcal{C}$. Then their ranked union $\RM^{\mathfrak{R}}$ is a model of $\mathcal{C}$, and $\RM^{\mathfrak{R}}\mprefstrict{\LM} \RM_i$ for every $\RM_i \in(\mathfrak{R}\setminus \RM^{\mathfrak{R}})$.
\end{proposition}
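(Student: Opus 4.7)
The plan is to verify three things in order: (i) $\RM^{\mathfrak{R}}$ is a well-defined ranked interpretation, (ii) it satisfies every conditional in $\mathcal{C}$, and (iii) it is strictly $\mpref{\LM}$-preferred to each $\RM_i \in \mathfrak{R}$ distinct from it. For (i) there is almost nothing to do: Definition \ref{Def:ranked_union} partitions $\U \setminus L^{\RM^{\mathfrak{R}}}_{\infty}$ into successive $\prec_{\RM^{\mathfrak{R}}}$-minimal layers $L_0, L_1, \ldots$, and this assignment is convex by construction, so $\RM^{\mathfrak{R}}$ is a ranked interpretation in the sense of Definition \ref{Def:RankedInterpretation}.

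For (ii), I would fix an arbitrary $\alpha \twiddle \beta \in \mathcal{C}$ and pick some $v \in \min_{\prec_{\RM^{\mathfrak{R}}}} \states{\alpha}{\RM^{\mathfrak{R}}}$. Since $v \in \U^{\RM^{\mathfrak{R}}}$, the value $k^{*} \defined \min\{\RM_j(v) \mid \RM_j \in \mathfrak{R}\}$ is finite; let $\RM_{i^{*}}$ realise it. The key step is to argue that $v$ is also $\prec_{\RM_{i^{*}}}$-minimal among $\alpha$-valuations in $\U^{\RM_{i^{*}}}$: otherwise there would exist $v' \sat \alpha$ with $\RM_{i^{*}}(v') < k^{*}$, whence $\min_j \RM_j(v') \leq \RM_{i^{*}}(v') < k^{*} = \min_j \RM_j(v)$, forcing $v' \prec_{\RM^{\mathfrak{R}}} v$ and contradicting the $\prec_{\RM^{\mathfrak{R}}}$-minimality of $v$. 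Because $\RM_{i^{*}} \sat \alpha \twiddle \beta$, this yields $v \sat \beta$, as required.

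For (iii) I first establish the pointwise bound $\RM^{\mathfrak{R}}(v) \leq \min_j \RM_j(v)$ for every $v \in \U$. The construction assigns $v$ to layer $L_m$ where $m$ is the number of distinct finite min-rank values strictly below $k \defined \min_j \RM_j(v)$; since those values lie in $\{0,1,\ldots,k-1\}$, we get $m \leq k$. Writing $\RM^{\mathfrak{R}} = (L_0,\ldots,L_{n-1},L_{\infty})$ and $\RM_i = (M_0,\ldots,M_{n-1},M_{\infty})$, suppose $\RM_i \neq \RM^{\mathfrak{R}}$ and let $j$ be the smallest index at which $L_j \neq M_j$. For any $v \in M_j$ the pointwise bound gives $\RM^{\mathfrak{R}}(v) \leq j$; minimality of $j$ excludes the possibility $\RM^{\mathfrak{R}}(v) < j$ (which would force $v \in L_k = M_k$ for some $k < j$, contradicting $v \in M_j$), so $v \in L_j$. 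Hence $M_j \subseteq L_j$, and since $M_j \neq L_j$ the inclusion is strict, which by Definition \ref{Def:rel_LM} gives $\RM^{\mathfrak{R}} \mprefstrict{\LM} \RM_i$.

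I do not expect a serious obstacle; the one place that needs care is the compression step in (iii), where the rank $\RM^{\mathfrak{R}}(v)$ is read off as a \emph{position} in the modular order $\prec_{\RM^{\mathfrak{R}}}$ and must be correctly related to the raw min-rank $\min_j \RM_j(v)$. The rest is bookkeeping around Definitions \ref{Def:rel_LM} and \ref{Def:ranked_union}.
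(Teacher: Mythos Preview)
Your proposal is correct and follows essentially the same route as the paper. Your part (ii) works with the $\prec_{\RM^{\mathfrak{R}}}$-minimal $\alpha$-valuations directly and locates a single $\RM_{i^{*}}$ in which each such valuation is also minimal, whereas the paper phrases it dually (every $\alpha\wedge\neg\beta$-valuation is dominated in $\RM^{\mathfrak{R}}$ by some $\alpha\wedge\beta$-valuation); and your part (iii) makes explicit the pointwise bound $\RM^{\mathfrak{R}}(v)\le\min_j\RM_j(v)$ that the paper uses without isolating it --- but these are differences of presentation, not of substance.
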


\begin{proof}
    We first prove that $\RM^{\mathfrak{R}}:=(L^{\RM^{\mathfrak{R}}}_0,\ldots,L^{\RM^{\mathfrak{R}}}_{n-1},L^{\RM^{\mathfrak{R}}}_{\infty})$ is a model of $\mathcal{C}$. For every $v\in L^{\RM^{\mathfrak{R}}}_0$, it must be the case, by Definition \ref{Def:ranked_union},  that $v\in L^{\RM_i}_0$ for some $\RM_i\in\mathfrak{R}$;  since such $\RM_i$ is a model of $\mathcal{C}$ and $v\in L^{\RM_i}_0$, $v\sat \neg\alpha\lor\beta$ for every $\alpha\twiddle \beta\in\mathcal{C}$. 
    
    Now, let $v\in L^{\RM^{\mathfrak{R}}}_i$, with $0<i<n$, s.t. $v\sat\alpha\land\neg\beta$ for some $\alpha\twiddle \beta\in\mathcal{C}$ (if there is no such $v$, $\RM^{\mathfrak{R}}$ is necessarily a model of $\mathcal{C}$). Being every $\RM_j\in\mathfrak{R}$ a model of $\mathcal{C}$, it must be the case, again by Definition \ref{Def:ranked_union}, that  $h_{\RM_j}(v)\geq i$ and $h_{\RM_j}(v')< h_{\RM_j}(v)$ for some $v'$ satisfying $\alpha\land\beta$. Hence,   it must be that  $\min\{\RM_i(v')\mid \RM_i\in \mathfrak{R}\}<\min\{\RM_j(v)\mid \RM_j\in \mathfrak{R}\}$ for some $v'$ satisfying $\alpha\land\beta$, that is, $v'\prec_{\RM^{\mathfrak{R}}} v$, that implies $v'\in L^{\RM^{\mathfrak{R}}}_j$, with $j<i$. \\ 
    To summarise, for every $\alpha\twiddle \beta\in \mathcal{C}$, if there is a valuation $v\in \RM^{\mathfrak{R}}$ s.t. $v\sat\alpha\land\neg\beta$, then there is a valuation $v'\in \RM^{\mathfrak{R}}$ s.t. $v'\sat\alpha\land\beta$ and $v'\prec_{\RM^{\mathfrak{R}}} v$; hence $\RM^{\mathfrak{R}}$ is a model of $\mathcal{C}$.
    
    
    Now we prove that $\RM^{\mathfrak{R}}\mpref{\LM} \RM_i$ for every $\RM_i \in\mathfrak{R}$. \\
    Let $\RM_i:=(L_0,\ldots,L_{n-1},L^{\RM^{\mathfrak{R}}}_{\infty})$ s.t. $\RM_i\in \mathfrak{R}$ and $\RM^{\mathfrak{R}}\not\mpref{\LM} \RM_i$. That is, there is an $i$ s.t. $L^{\RM^{\mathfrak{R}}}_i\not\subseteq L_i$, while $L^{\RM^{\mathfrak{R}}}_j= L_j$ for every $j<i$. That is, there is a $v\in L_i$ s.t. $v\notin L^{\RM^{\mathfrak{R}}}_i$. By definition of $\RM^{\mathfrak{R}}$, that implies that $v\in L^{\RM^{\mathfrak{R}}}_j$ for some $j<i$, but that cannot be the case, since $L^{\RM^{\mathfrak{R}}}_j= L_j$ for every $j<i$. Hence $\RM^{\mathfrak{R}}\mpref{\LM} \RM_i$ for every $\RM_i \in\mathfrak{R}$.
    
    We finish by proving that $\RM_i\not\mpref{\LM} \RM^{\mathfrak{R}}$ for every $\RM_i \in(\mathfrak{R}\setminus \RM^{\mathfrak{R}})$. \\
    Let $\RM_i$ be a model in $\mathcal{C}$ s.t. $\RM_i\mpref{\LM} \RM^{\mathfrak{R}}$. Since $\RM^{\mathfrak{R}}\mpref{\LM} \RM_i$, we must conclude that for every $i$, for  every cell $L^{\RM_i}_i$ composing $\RM_i$ and every cell $L^{\RM^{\mathfrak{R}}}_i$ composing $\RM^{\mathfrak{R}}$, $L^{\RM_i}_i=L^{\RM^{\mathfrak{R}}}_i$; that is, $\RM_i$ and $\RM^{\mathfrak{R}}$ are exactly the same model. Hence $\RM^{\mathfrak{R}}\mprefstrict{\LM} \RM_i$ for every $\RM_i \in(\mathfrak{R}\setminus \RM^{\mathfrak{R}})$.
\end{proof}

\begin{proposition}\label{Prop:uniqueLM}
Let $\mathcal{C}$ be a satisfiable set of conditionals. Then the ranked union of the elements of $\Mod{\mathcal{C}}$ is the only   $\mpref{\LM}$-minimum element in $\Mod{\mathcal{C}}$.
\end{proposition}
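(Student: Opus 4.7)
The plan is to apply Proposition~\ref{Prop:ranked_union} directly to the entire collection $\mathfrak{R} = \Mod{\mathcal{C}}$ and then read off both existence and uniqueness of the minimum from its conclusion. The preliminary observation I need is that $\Mod{\mathcal{C}}$ is a finite nonempty set: since $\Prp$ is finite, $\U$ is finite, and by the convexity clause of Definition~\ref{Def:RankedInterpretation} every ranked interpretation partitions $\U$ into at most $|\U|+1$ layers (including $L_\infty$), so there are only finitely many ranked interpretations on $\U$ in total; nonemptiness is given since $\mathcal{C}$ is satisfiable. This ensures Proposition~\ref{Prop:ranked_union}, as stated for finite $\mathfrak{R}$, is applicable.

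Set $\RM^* \defined \RM^{\Mod{\mathcal{C}}}$, the ranked union of all models of $\mathcal{C}$. By Proposition~\ref{Prop:ranked_union}, first, $\RM^* \in \Mod{\mathcal{C}}$ (the ranked union of models of $\mathcal{C}$ is again a model of $\mathcal{C}$), and second, $\RM^* \mprefstrict{\LM} \RM$ for every $\RM \in \Mod{\mathcal{C}} \setminus \{\RM^*\}$. The second conclusion already says that $\RM^*$ is strictly $\mpref{\LM}$-below every other element of $\Mod{\mathcal{C}}$, which is exactly the statement that $\RM^*$ is a $\mpref{\LM}$-minimum element of $\Mod{\mathcal{C}}$.

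For uniqueness, suppose some $\RM' \in \Mod{\mathcal{C}}$ were also a $\mpref{\LM}$-minimum element. If $\RM' \neq \RM^*$, then by the clause of Proposition~\ref{Prop:ranked_union} just invoked we would have $\RM^* \mprefstrict{\LM} \RM'$, contradicting the assumed minimality of $\RM'$. Hence $\RM' = \RM^*$, and $\RM^*$ is the unique $\mpref{\LM}$-minimum element of $\Mod{\mathcal{C}}$.

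Essentially the entire proof is already packaged inside Proposition~\ref{Prop:ranked_union}; the only thing to verify is that the hypothesis of that proposition (finiteness of $\mathfrak{R}$) is met when we take $\mathfrak{R}$ to be all models of $\mathcal{C}$, which is what the finiteness observation on $\U$ delivers. I do not expect any real obstacle beyond making this applicability explicit.
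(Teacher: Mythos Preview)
Your proof is correct and follows exactly the same approach as the paper, which simply states that the result is an immediate consequence of Definition~\ref{Def:ranked_union} and Proposition~\ref{Prop:ranked_union}. Your version is more explicit in spelling out the finiteness of $\Mod{\mathcal{C}}$ (needed because Definition~\ref{Def:ranked_union} is phrased for finite $\mathfrak{R}$) and in separating the existence and uniqueness halves, but the argument is the same.
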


\begin{proof}
It is an  immediate consequence of  Definition \ref{Def:ranked_union} and Proposition \ref{Prop:ranked_union}.
\end{proof}

\LMisRC*

\begin{proof}
It has been proved by Booth and Paris \cite[Theorem 2]{BoothParis1998} that the rational closure of a KB is determined by a  model that is equivalent to the ranked union of $\Mod{\mathcal{C}}$. According to Proposition \ref{Prop:uniqueLM}, the ranked union of $\Mod{\mathcal{C}}$ is the only $\mpref{\LM}$-minimum element in $\Mod{\mathcal{C}}$, that is, the model  $\RM^{\mathrm{rc}}(\mathcal{C})$.
\end{proof}

\section{Proofs of Lemmas~\ref{Lemma:TerminationAndSoundness}, \ref{Lemma:IsModelOfKB} and \ref{Lemma:LayerInclusion}}\label{Proof:Lemmas123}

\subsection{Proof of Lemma~\ref{Lemma:TerminationAndSoundness}}\label{Proof:TerminationAndSoundness}

\TerminationAndSoundness*

\begin{proof}
We show all three simultaneously by complete induction on $i$. So, assume all of Items 1, 2 and 3 hold for all $m < i$. We will show this implies all three hold also for $i$. We assume each $\alpha \in \KB$ is in normal form.
\\
\underline{1. $\states{\KB}{\RM_i} \subseteq \states{\KB}{\RM_{i+1}}$.}
\\
Let $v \in \states{\KB}{\RM_i}$ and let $\alpha \in \KB$ with $\alpha = \bigwedge_{i \leq t} \typ\theta_i \rightarrow (\phi \vee \bigvee_{i \leq s} \typ \psi_i)$ (for some $s,t \geq 0$). We must show $v \in \states{\alpha}{\RM_{i+1}}$. Since $v \in \states{\KB}{\RM_i}$ we know $v \in \states{\alpha}{\RM_i}$. Hence we know that one of the following must hold:
\begin{itemize}
\item
$v \not\in \states{\bullet \theta_k}{\RM_i}$ for some $k$: This means (since $\theta_k$ is propositional) $v$ is not $\pref^{\RM_i}$-minimal in $\states{\theta_k}{\RM_i} = \states{\theta_k}{\RM_{i+1}}$. But then it is also not $\pref^{\RM_{i+1}}$-minimal since, by construction, if $\RM_i(v)\leq\RM_i(w)$ then 
$\RM_{i+1}(v)\leq\RM_{i+1}(w)$. Hence in this case 
$v \not\in \states{\bullet \theta_k}{\RM_{i+1}}$.

\item
$v \in \states{\phi}{\RM_i}$: In this case also $v \in \states{\phi}{\RM_{i+1}}$, since $\states{\phi}{\RM_i} =  \states{\phi}{\RM_{i+1}}$ (because $\phi$ is purely propositional).

\item
$v \in \states{\bullet \psi_k}{\RM_i}$ for some $k$: This means $v$ is $\pref^{\RM_i}$-minimal in $\states{\psi_k}{\RM_i}$. But then it is also $\pref^{\RM_{i+1}}$-minimal, since we assumed $v \in \states{\KB}{\RM_i} = S_{i+1}$, and so by construction of $\RM_{i+1}$
we have that $\RM_{i+1}(w) < \RM_{i+1}(v)$ if and only if $\RM_i(w) < \RM_i(v)$ for all $w \in \U$. Since $\states{\psi_k}{\RM_i} = \states{\psi_k}{\RM_{i+1}}$ (since $\psi_k$ is purely propositional) we obtain that $v$ is $\pref^{\RM_{i+1}}$-minimal in $\states{\psi_k}{\RM_{i+1}}$, \ie, $v \in \states{\bullet \psi_k}{\RM_{i+1}}$.
\end{itemize}
Thus in all three possible cases we obtain $v \in \states{\alpha}{\RM_{i+1}}$ as required.

\noindent{\underline{2. $\RM_i(v_1) < \RM_i(v_2)$ implies $v_1 \in \states{\KB}{\RM_i}$.}}
\\
Suppose $\RM_i(v_1) < \RM_i(v_2)$. Observe that, by construction, this can only be the case if $i>0$. 
Then either $\RM_{i-1}(v_1)< \RM_{i-1}(v_2)$ or $v_2\notin S_i$. 
If $\RM_{i-1}(v_1)<\RM_{i-1}(v_2)$ then, by the inductive hypothesis, $v_1 \in  \states{\KB}{\RM_{i-1}}$, while if $v_2\notin S_i$, then $v_1 \in S_i = \states{\KB}{\RM_{i-1}}$. So either way we get $v_1 \in  \states{\KB}{\RM_{i-1}}$ and so we get the desired conclusion by applying $\states{\KB}{\RM_{i-1}} \subseteq \states{\KB}{\RM_{i}}$ which was just proved in Item~1 above.

\noindent{\underline{3. $\RM_i$ is a ranked interpretation.}}
\\
By construction it immediately follows that $\RM_i$ is a function from $\U$ to $\mathbbm{N}\cup\{\infty\}$.
We need to show the convexity property: if $\RM_i(u)=j$ then, for every $k$ such that $0\leq k<j$, there is a $v\in\U$ for which $\RM_i(v)=k$. If $i=0$, this follows immediately (since $\RM_0(u)=0$ for all $u\in\U$). Otherwise we have by the inductive hypothesis that $\RM_{i-1}$ is a ranked interpretation. We have two cases. (1) $S_{i}=S_{i-1}$: Then $\RM_i=(\RM_{i-1})^{\infty}_{S_{i}}$ from which convexity follows immediately. 
(2) $S_{i}\neq S_{i-1}$: Then $\RM_i=(\RM_{i-1})^1_{S_i}$ from which convexity also follows immediately.


\end{proof}

\subsection{Proof of Lemma~\ref{Lemma:IsModelOfKB}}\label{Proof:IsModelOfKB}

\IsModelOfKB*

\begin{proof}
Let $\RM$ denote $(\RM_i)^{\infty}_{S_i}$. We need to show that for every valuation $v\in\U^{\RM}$, \ie, every $v \in S_i = \states{\KB}{\RM_{i-1}}$, and for every $\alpha \in \KB$, we have $v \in \states{\alpha}{\RM}$. Since $v \in \states{\alpha}{\RM_{i-1}}$ we know one of the following must hold (recalling that $\alpha$ is expressed in normal form $\bigwedge_{i \leq t} \bullet\theta_i \rightarrow (\phi \vee \bigvee_{i \leq s} \bullet \psi_i)$.):
\begin{itemize}
\item $v \not\in \states{\bullet \theta_k}{\RM_{i-1}}$ for some $k$: This means $v$ is not $\pref^{\RM_{i-1}}$-minimal in $\states{\theta_k}{\RM_{i-1}}$. But then it is also not $\pref^{\RM}$-minimal in $\states{\theta_k}{\RM} = \states{\theta_k}{\RM_{i-1}} \cap S_{i}$, since if $w \pref^{\RM_{i-1}} v$ and $w \in \states{\theta_k}{\RM_{i-1}}$, then from the former we know $w \in S_{i}$ by Item~2 of the previous lemma. Hence in this case $v \not\in \states{\bullet \theta_k}{\RM}$.
\item $v \in \states{\phi}{\RM_{i-1}}$: In this case also $v \in \states{\phi}{\RM}$, since $\states{\phi}{\RM} =  \states{\phi}{\RM_{i-1}} \cap S_{i}$ (because $\phi$ is purely propositional).
\item $v \in \states{\bullet \psi_k}{\RM_{i-1}}$ for some $k$: This means $v$ is $\pref^{\RM_{i-1}}$-minimal in $\states{\psi_k}{\RM_{i-1}}$. But then it is also $\pref^{\RM}$-minimal in $\states{\psi_k}{\RM} =  \states{\psi_k}{\RM_{i-1}} \cap S_{i}$, since $\pref^{\RM_{i-1}} \subseteq \pref^{\RM}$. Hence $v \in \states{\bullet \psi_k}{\RM}$.
\end{itemize}
Thus in all three possible cases we obtain $v \in \states{\alpha}{\RM}$ as required.
\end{proof}

\subsection{Proof of Lemma~\ref{Lemma:LayerInclusion}}\label{Proof:LayerInclusion}

\LayerInclusion*

\begin{proof}
Let $v \in M_i$. By construction, $S_i = \states{\KB}{\RM_{i-1}}$ where $\RM_{i-1} = (L_0, \ldots, L_{i-1}, (\U \setminus \bigcup_{j<i} L_j,\emptyset)$. Let $\alpha \in \KB$, with $\alpha = \bigwedge_{i \leq t} \typ\theta_i \rightarrow (\phi \vee \bigvee_{i \leq s} \typ \psi_i)$ (for some $s,t \geq 0$). We must show $v$ satisfies $\alpha$ in $\RM_{i-1}$, so assume $v$ satisfies $\neg\phi$ and is a minimal $\theta_k$-state in $\RM_{i-1}$ for all $k$. We must show $v$ is a minimal $\psi_k$-state in $\RM_{i-1}$ for at least one $k$. Since we assume $M_j = L_j$ for all $j<i$, we have $\RM_{i-1} = (M_0, \ldots, M_{i-1}, (\U \setminus \bigcup_{j<i} M_j),\emptyset)$. Since $v \in M_i$, we can show that, for {\em any} propositional sentence $\lambda$, we have that $v$ is a minimal $\lambda$-state in $(M_0, \ldots, M_{i-1}, (\U \setminus \bigcup_{j<i} M_j),\emptyset)$ if and only if it is a minimal $\lambda$-state in $(M_0,\ldots, M_i,\emptyset)$. Thus, from the fact that $(M_0,\ldots, M_i,\emptyset)$ is a ranked model of $\KB$, we obtain our conclusion.
\end{proof}

\end{appendices}

\end{document}